\tikzstyle{rect} = [rectangle, 
\tikzstyle{elli} = [ellipse, 
\tikzstyle{arrow} = [thick,->,>=stealth]
\newcommand{\calP}{\mathcal{P}}
\newcommand{\calW}{\mathcal{W}}
\newcommand{\E}{\mathbb{E}}
\newcommand{\R}{\mathbb{R}}
\newcommand{\bR}{\mathbb{R}}
\newcommand{\cR}{\mathcal{R}}
\newcommand{\bE}{\mathbb{E}}
\newcommand{\calB}{\mathcal{B}}
\newcommand{\W}{\mathcal{W}}
\newcommand{\calX}{\mathcal{X}}
\newcommand{\Tind}{T_{\text{point}}}
\newcommand{\QstarPoint}{Q^*_{\rm point}}
\newcommand{\QstarDist}{Q^*_{\rm dist}}
\newcommand{\rev}[1]{{\color{black}#1}}
\newcommand{\flowmodel}{\texttt{FlowDRO}}
\theoremstyle{plain}
\newtheorem{theorem}{Theorem}[section]
\newtheorem{proposition}[theorem]{Proposition}
\newtheorem{lemma}[theorem]{Lemma}
\newtheorem{corollary}[theorem]{Corollary}
\theoremstyle{definition}
\newtheorem{assumption}[theorem]{Assumption}
\theoremstyle{remark}
\newtheorem{remark}[theorem]{Remark}
\begin{document}

\title{Flow-based Distributionally Robust Optimization}
\author[1]{Chen Xu}
\author[1]{Jonghyeok Lee}
\author[2]{Xiuyuan Cheng}
\author[1]{Yao Xie\thanks{Correspondence to Yao Xie: yao.xie@isye.gatech.edu}}
\affil[1]{{\small H. Milton Stewart School of Industrial and Systems Engineering, Georgia Institute of Technology.}}
\affil[2]{{\small Department of Mathematics, Duke University}}

\date{}
\maketitle

\begin{abstract}
We present a computationally efficient framework, called \texttt{FlowDRO}, for solving flow-based distributionally robust optimization (DRO) problems with Wasserstein uncertainty sets while aiming to find continuous worst-case distribution (also called the Least Favorable Distribution, LFD) and sample from it. The requirement for LFD to be continuous is so that the algorithm can be scalable to problems with larger sample sizes and achieve better generalization capability for the induced robust algorithms. To tackle the computationally challenging infinitely dimensional optimization problem, we leverage flow-based models and continuous-time invertible transport maps between the data distribution and the target distribution and develop a Wasserstein proximal gradient flow type algorithm. In theory, we establish the equivalence of the solution by optimal transport map to the original formulation, as well as the dual form of the problem through Wasserstein calculus and Brenier theorem. In practice, we parameterize the transport maps by a sequence of neural networks progressively trained in blocks by gradient descent. We demonstrate its usage in adversarial learning, distributionally robust hypothesis testing, and a new mechanism for data-driven distribution perturbation differential privacy, where the proposed method gives strong empirical performance on high-dimensional real data.

\end{abstract}

\tableofcontents

\section{Introduction}

Distributionally Robust Optimization (DRO) is a fundamental problem in optimization, serving as a basic model for decision-making under uncertainty and in statistics for addressing general minimax problems. It aims to identify a minimax optimal solution that minimizes an expected loss over the worst-case distribution within a pre-determined set of distributions (i.e., an uncertainty set). DRO arises from various applications, including robust hypothesis testing \citep{gao2018robust,xie2021robust}, boosting \citep{blanchet2019a}, semi-supervised learning \citep{blanchet2020semi}, fair classification \citep{taskesen2020distributionally}, clustering \citep{zhu2022distributionally}, and so on; see \cite{kuhn2019wasserstein} for a more complete review. 
Inherently, DRO leads to an infinite dimensional problem, and thus, it faces a significant computational challenge in most general settings. Despite the existing efforts to solve DRO that allow analytic or approximate solutions, current approaches still have limited scalability in solving high-dimensional, large-sample problems with general risk functions. In this work, we aim to address the computational challenge using a new neural network flow-based approach; the connection with existing approaches is further discussed in Section \ref{subsec:connection-khun}.

The basic setup for DRO is given below. Let $\calX = \R^d$ be the data domain. 
Assume a real-valued \textit{risk function} $\cR(P; \phi)$ taking as inputs a $d$-dimensional distribution $P$ (with a finite second moment) and a measurable decision function $\phi \in \Phi$ in a certain function class (problem specific and possibly parametric). Assume a pre-specified scalar loss function $r: \mathcal X \times \Phi\rightarrow \mathbb R$ so that 
\begin{equation}\label{eq:risk_in_expectation}
\cR(P; \phi)=\bE_{ x \sim P} [r(x; \phi)].  
\end{equation}
Some examples of the decision function $\phi$ and loss function $r$ include $\phi$ being a multi-class classifier and $r$ being the cross-entropy loss, and $\phi$ being a scalar test function and $r$ being the logistic loss.
We are interested in solving the following minimax problem:
\begin{equation}\label{eq:DRO_general}
    \min_{\phi \in \Phi} \max_{Q \in \mathcal{B}} ~\cR(Q; \phi).
\end{equation}
In \eqref{eq:DRO_general}, $\mathcal{B}$ is a pre-defined uncertainty set that contains a set of (possibly continuous) distributions that are variations from a {\it reference distribution} $P$; this is known as the distributionally robust optimization (DRO) problem \cite{shapiro2021lectures}. In particular, we are interested in  Wasserstein DRO or WDRO (see, e.g., the original contribution \citep{mohajerin2018data}), where the $\mathcal B$ is the Wasserstein uncertainty set centered around the reference distribution induced by Wasserstein distance. WDRO receives popularity partly due to its data-driven uncertainty sets and no parametric restriction on the distributional forms considered. 

The worst-case distribution that achieves the saddle point in \eqref{eq:DRO_general} is called the Least Favorable Distribution (LFD) (also called the ``extreme distributions'' in prior works, e.g., \cite{mohajerin2018data}). In this work, we consider the problem of finding LFD for a given algorithm $\phi$, which is useful in various practical settings such as generating {\it worst scenarios} to test the algorithm and develop robust algorithms.

\subsection{Proposed: Flow-DRO}

In this paper, we propose a {\it computational} framework, a flow-based neural network called \flowmodel{} to find the worst-case distributions (LFDs) for DRO or solve the inner maximization of minimax problem \eqref{eq:DRO_general}. In particular, \flowmodel{} can efficiently compute worst-case distributions for various high-dimensional problems, thanks to the strong representation power of neural network-based generative models. The main idea is to connect the WDRO problem through Lagrangian duality to a function optimization problem with Wasserstein proximal regularization. This connection enables us to adapt the recently developed computationally efficient Wasserstein proximal gradient flow  \cite{xu2023invertible,JKOproof2023} to develop computationally efficient {\it flow-based models} parameterized by neural networks. Our framework can be viewed as a generative model for LFDs. It is thus suitable for many statistical and machine learning tasks, including adversarial learning, robust hypothesis testing, and differential privacy, leading to computationally efficient solutions and performance gain, as we demonstrated using numerical examples. 

Our main contributions are: 
\begin{itemize}

\item Develop a new Wasserstein proximal gradient descent approach to find worst-case distributions (or Least Favorable Distributions, LFDs) in WDRO by re-formulating the problem into its Wasserstein proximal form using Lagrangian duality. We introduce an alternative way to represent the LFDs through the {\it optimal transport maps} from a continuous reference measure to induce continuous LFD and use data to estimate.

\item Algorithm-wise, we adopt a new neural-network generative model approach to find LFD, called \flowmodel{}. The proposed neural network-based method can be scalable to larger sample sizes and high dimensionality, overcoming the computational challenges of previous WDRO methods. \flowmodel{} parameterize LFD by a transport map represented by a neural network, which can learn from training samples and automatically generalizes to unseen samples and efficiently generate samples from the LFDs;  we demonstrate its versatility in various applications and demonstrate the effectiveness of \flowmodel{} on multiple applications with high-dimensional problems (including images) from adversarial attack and differential privacy using numerical results.

\item Theoretically, we approach the problem in a different route, relying on the tools of optimal transport: we derive the equivalence between the original $\W_2$-proximal problem and the transport-map-search problem making use of Brenier theorem enabled by considering continuous distributions. Our theory also shows that the first-order condition of our $\W_2$-proximal problem using Wasserstein calculus leads to an optimality condition of solving the Moreau envelope without assuming the convexity of the objective. As a by-product, we recover the closed-form expression of the dual function involving the Moreau envelope of the (negative) loss, consistent with existing work, and highlight the computational advantages of using our alternative optimal transport map reformulation.
\end{itemize}

To the best of our knowledge, \flowmodel{} is the first work that finds the worst-case distributions in DRO using flow-based models. However, we would like to emphasize that our approach is general and does not rely on neural networks; one can potentially use an alternative representation of the transport maps (e.g., \cite{hutter2021minimax}) in low-dimensional and small sample settings for stronger learning guarantees. 
\textcolor{black}{In the context of minimizing an objective functional in probability space, \citep{kent2021modified} proposed an infinite-dimensional Frank-Wolfe procedure.
The work leveraged the strong duality result in DRO \citep{blanchet2019quantifying} 
(see more in Section \ref{subsec:dual-form-W-prox}, Eqn. \eqref{eq:dual-LFD-G-2}) 
to compute the Wasserstein gradient descent steps. 
Our work focuses on the sub-problem of finding LFD in DRO, and our algorithm uses neural networks to tackle distributions in high dimensional space.}

\subsection{Motivating example: Why continuous density for LFD?}

One may quickly realize that finding LFD is an infinite-dimensional optimization problem that is particularly challenging in high dimensions and general risk functions.
A useful observation that occurs in such infinite-dimensional optimization problem is that the worst-case distribution solution of the WDRO problem \eqref{eq:DRO_general} turns out to be discrete, as shown in the original paper \citep{mohajerin2018data} and various follow-up works including \cite{xie2021robust} for the distributionally robust hypothesis test. 
This particular solution structure does help to overcome the computational challenge caused by the infinite-dimensional optimization problem. 

However, the discrete nature of LFD, as coming from the WDRO formulation, is not desirable in practice, as explained in the following. First, there is a significant computational challenge. The method is not scalable to large datasets: the discrete WDRO formulation will require solving a Linear Program (LP) with the number of decision variables to be $\mathcal O(n^2)$, where $n$ is the total number of training data points and the complexity of solving an LP is typically quadratic on the number of the decision variable. Such computational complexity for problems with thousands of training data points can be prohibitive (e.g., the MNIST handwritten digit example in our later section uses $\sim$5000 samples per class). So typically, the current WDRO formulation usually can only be used to find discrete LFDs for small sample settings (e.g., \cite{zhu2022distributionally,xie2021robust}). 
Second, the discrete LFD will limit the {\it generalization} capability of the resulting algorithm. In machine learning applications, when we develop a robust detector (binary classifier) using DRO \cite{gao2018robust,xie2021robust}, the LFD is discrete with a support on the training data set, as shown in Fig. \ref{fig:WDRO_vs_FlowDRO}(a). As a result, the optimal detector is also {\it only defined} on the support of training data points. Such an optimal detector does not generalize in that,  given a new test sample, we cannot directly apply it to the test data if it does not coincide completely with one of the training data points. An ad-hoc approach could be to ``interpolate'' the optimal detector on the training samples by convolving with a smoothing kernel (such as a Gaussian kernel); however, this will lose the property of the original minimax optimality of the detector. It would be better to seek continuous worst-case distributions (LFDs) when we solve the minimax problem. Thus, we may want to add a constraint in the formulation and consider the uncertainty set as the intersection of the Wasserstein uncertainty set and the set of continuous functions.

Suppose we would like to find {\it continuous worst-case distribution} instead for the above consideration. However, if one restricts $\mathcal P$ in the minimax problem \eqref{eq:DRO_general} to be the Wasserstein uncertainty set {\it intersecting all continuous distribution functions}, that will lead to an even more difficult infinite dimensional problem involving distribution functions, and the (discrete) solution structure property no longer holds. This brings out the main motivation of our paper: we will introduce {\it a neural network (NN) approach to solve minimax problem} leveraging the strong approximation power of NN and that they {\it implicitly regularize} the solution to achieve continuous density. To carry out the plan, we need a carefully designed NN architecture and training scheme leveraging the recent advances in {\it normalizing flow} to represent distribution functions. Recently, there have also been works considering entropy regularized Wasserstein uncertainty sets, called the Sinkhorn DRO problems  \citep{wang2021sinkhorn}, which lead to continuous LFDs with kernel-type solutions. Still, it is more suitable for low-dimensional problems due to the nature of the kernel solutions.

\begin{figure}[!t]
    \begin{minipage}{0.49\textwidth}
        \includegraphics[width=\linewidth]{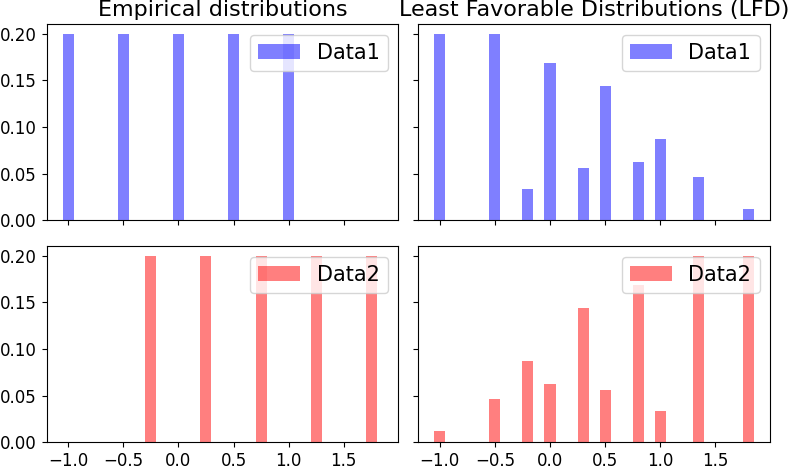}
        \subcaption{WDRO on training samples}
    \end{minipage}
    \begin{minipage}{0.49\textwidth}
        \includegraphics[width=\linewidth]{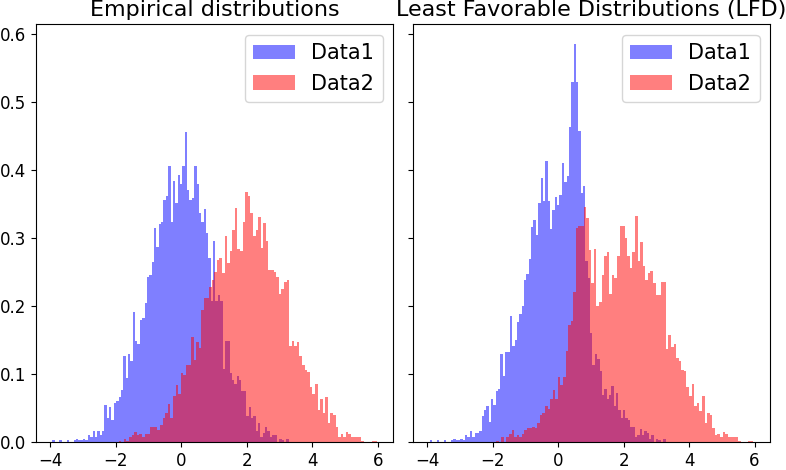}
        \subcaption{Proposed \flowmodel{} on test samples}
    \end{minipage}
    \caption{Comparison of WDRO and \flowmodel{} on the 1D example following \citep[Figure 1]{xie2021robust}. 
    {\textbf{Left of (a) and (b):}} Empirical distributions of two sets of \textit{training} (shown in (a)) and \textit{test} (shown in (b)) samples from $\mathcal N(0,1)$ and $\mathcal N(2,1.2)$. 
    {\textbf{Right of (a) and (b):}} Least-favorable distributions (LFD) found by WDRO and \flowmodel{}, where LFDs are within the $W_2$ ball \eqref{eq:W2_uncertainty_set} with radius $\varepsilon=0.1$. As expected, the LFDs overlap more with each other than the empirical distributions do.
    Note that WDRO solves a convex problem to obtain the LFD by moving the probability mass on \textit{discrete training samples}. In particular, WDRO is not generalized to a test sample unless it coincides exactly with some of the training samples. In comparison, \flowmodel{} yields a one-to-one continuous-time transport map that can be directly applied to training and test samples. The resulting LFD is also continuous, as it is the push-forward distribution by the transport map on the underlying continuous data distribution.}
    \label{fig:WDRO_vs_FlowDRO}
\end{figure}

\subsection{Flow-based generative models}

Recently, diffusion models and closely related flow-based models have drawn much research attention, given their state-of-the-art performance in image generation; see \cite{JKOproof2023} for a complete summary. Flow-based generative models enjoy certain advantages in computing the data generation and the likelihood and have recently shown competitive empirical performance. They can be understood as continuous-time models that gradually transform the input distribution $P$ into a target distribution $Q$. These models are popularized in the context of normalizing flow, where the target distribution $Q=\mathcal{N}(0, I_d)$, the standard multivariate Gaussian \citep{nflow_review}. They can be largely categorized into discrete-time flows \citep{iResnet,ResFlow,xu2022IGNN} and continuous-time flows \citep{FFJORD,onken2021ot,xu2023invertible,xu2023q}. The discrete-time flows can be viewed as Euler approximation of the underlying continuous-time probability trajectory, where the continuous-time flows are based on neural ordinary differential equation (NeuralODE) \citep{chen2018neural} to learn the probability trajectory directly. 

We remark that, unlike normalizing flow and flow between arbitrary pre-specified pairs of distributions, our flow model tries to learn the worst-case distribution $Q^*$ that maximizes certain risk functions.
Compared with other flow-based generative models,  such as the traditional settings of normalizing flow or pre-specified target distributions, \flowmodel{} does not choose a target distribution {\it a-priori}, which is learned by maximizing the objective function. 

We also note that different from training generative adversarial networks (GAN) \citep{goodfellow2014generative} that may also generate worst-case samples, our flow-based approach can be more stable during training as it involves neither auxiliary discriminators nor inner loops. Compared with recent works on flow-based generative models\cite{xu2023invertible,JKOproof2023}, where only KL divergence was considered for the loss function, we consider general loss as motivated by various applications. 

\subsection{Applications}

\flowmodel{} can also directly benefit several applications, which can be formulated as DRO problems, as we present in more detail in section \ref{sec:applications}.
First, in the case of an adversarial attack, our flow model is an \textit{attacker} that can find the distribution causing the most disruption to existing systems. This is especially important for engineering system design. For example, in power systems, we are interested in understanding the resiliency of a power network. Given limited historical observations, we are interested in discovering whether any unseen scenario may cause a catastrophic consequence to the system. Finding such scenarios can help evaluate engineering systems and improve network resiliency.
Second, in the case of differential privacy (DP), our flow model acts as a \textit{distributional perturbation mechanism} to dataset queries. Upon finding the worst-case distribution around the data distribution over queries, we can provide much protection against potential data disclosure and/or privacy loss. This is extremely useful in high-stakes settings where sensitive information must be protected.
We also note that the existing DP framework is largely not data-driven. Specifically, DP mechanisms often take the simple approach of adding i.i.d. noise to each dimension of queries, and the noises are unrelated to data. There is growing interest in developing data-driven mechanisms by exploiting the query structure or the data distribution, which may bring potential performance gains. However, finding such optimal perturbation subject to the privacy constraint poses a computational challenge, which we try to address through the proposed \flowmodel{}.

The rest of the paper is organized as follows. Section \ref{sec:framework} formally introduces the framework of solving for the worst-case distribution, along with theoretical analyses. Section \ref{sec:method} describes the \flowmodel{} method and the concrete training algorithm. Section \ref{sec:applications} considers several important applications for which \flowmodel{} can be used. Section \ref{sec:experiments} shows numerical results of \flowmodel{} on high-dimensional problems. Section \ref{sec:discussion} concludes the work with discussions. All proofs are delegated to the appendix.

\section{Framework}\label{sec:framework}

Below, we focus on Wasserstein-2 ($\W_2$) in this work, and extensions to $\W_p$ with other $p$ are left to future studies.
Let $\calX = \R^d$, and denote by $\calP_2(\calX )$ the space of all distributions on domain $\calX$ that have a finite second moment,
that is, $\calP_2(\calX) := \{ P, \, \int_\calX |x|^2 dP(x) < \infty \}$. 
Define $\calP_2^r(\calX):= \{ P \in \calP_2, \, P \ll {\rm Leb} \}$, that is, all distributions in  $\calP_2(\calX)$ that also have continuous densities (absolute continuous with respect to the Lebesgue measure). We may omit $(\calX)$ in the notation $\calP_2$ and $\calP_2^r$.

\subsection{Dual formulation and Wasserstein proximal problem}\label{subsec:dual-form-W-prox}

The $\W_2$-distance between two distributions in $\calP_2$ is defined by 
\begin{equation}\label{eq:deff-W2-Kantorovich}
    \W_2^2( \mu, \nu ) := \inf_{\pi \in \Pi ( \mu, \nu)}
        \int_{ \R^d \times \R^d} \| x-y \|^2 d\pi(x,y),
\end{equation}
where $\Pi (\mu, \nu)$ denotes the family of all joint distributions with $\mu$ and $\nu$ as marginal distributions, called the couplings of $\mu$ and $\nu$. For any given $\nu \in \calP_2$, the functional $\W_2^2( \cdot, \nu)$ maps from $\calP_2$ to $[0, \infty)$, by the following lemma.
\begin{lemma}\label{lemma:finite-W2}
For any $\mu, \nu \in \calP_2$, $\W_2( \mu, \nu) < \infty $.
\end{lemma}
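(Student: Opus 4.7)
The plan is to exhibit a specific coupling in $\Pi(\mu,\nu)$ whose cost is finite, which immediately bounds the infimum. The natural choice is the independence (product) coupling $\pi := \mu \otimes \nu$, which always lies in $\Pi(\mu,\nu)$ since its marginals are $\mu$ and $\nu$ by construction.

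With this coupling, I would compute
\[
\int_{\mathbb{R}^d\times\mathbb{R}^d} \|x-y\|^2\, d\pi(x,y)
= \int_{\mathbb{R}^d}\!\!\int_{\mathbb{R}^d} \|x-y\|^2 \, d\mu(x)\, d\nu(y),
\]
and then apply the elementary inequality $\|x-y\|^2 \le 2\|x\|^2 + 2\|y\|^2$ (which follows from $\|x-y\|^2 = \|x\|^2 - 2\langle x,y\rangle + \|y\|^2$ together with $2\langle x,y\rangle \le \|x\|^2 + \|y\|^2$). This splits the integral into $2\int \|x\|^2 d\mu(x) + 2\int \|y\|^2 d\nu(y)$, both of which are finite by the hypothesis $\mu,\nu \in \mathcal{P}_2$. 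Therefore the product coupling achieves a finite cost, and taking the infimum over $\Pi(\mu,\nu)$ yields $\W_2^2(\mu,\nu)<\infty$, hence $\W_2(\mu,\nu)<\infty$.

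There is really no obstacle here — the only subtlety worth mentioning is checking that the product measure is indeed a valid coupling, which is standard, and that Fubini applies so the iterated integral equals the joint integral (which is automatic since the integrand is nonnegative). No convexity, regularity, or absolute continuity assumption on $\mu$ or $\nu$ is needed beyond finite second moments.
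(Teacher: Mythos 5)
Your proof is correct and is essentially identical to the paper's: both use the product coupling $\mu\times\nu$ together with the bound $\|x-y\|^2\le 2(\|x\|^2+\|y\|^2)$ to reduce the cost to $2(M_2(\mu)+M_2(\nu))<\infty$. No discrepancies to note.
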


Let $\calB_\varepsilon(P)$ be the $\W_2$-ball  in $\calP_2 $ around the reference distribution $P$ of radius $\varepsilon >0$, namely
\begin{equation}\label{eq:W2_uncertainty_set}
\calB_\varepsilon (P) = \{ Q \in \calP_2, \, \W_2(Q, P)  \le \varepsilon \}. 
\end{equation}
As explained in the introduction, we will focus on the case where $P$ has (continuous) density, that is $P \in \calP_2^r$.
We focus on the inner loop (the ``max'') of the min-max problem \eqref{eq:DRO_general} where the uncertainty set $\calB = \calB_\varepsilon(P)$. 
For fixed decision function $\phi$,
we cast the maximization as a minimization by defining $V(x): = - r(x ; \phi)$.
The central problem we aim to solve in the paper is to find the LFD, which can be equivalently written as the following:
\begin{equation}\label{eq:problem-W2-trust-region}
     \min_{Q \in \calB_\varepsilon (P) } ~ \bE_{x \sim Q} V(x), \quad \texttt{\{LFD problem\}}.
\end{equation}
The idea is to convert the uncertainty set constraint as a regularization term of the original objective function by introducing a Lagrangian multiplier. Then, we can leverage this connection to build a Wasserstein gradient flow type of algorithm to solve the LFD problem.

\paragraph{Dual form and proximal problem.}
The constrained minimization \eqref{eq:problem-W2-trust-region} is a trust region problem.
It is well known that in vector space, 
trust-region problem 
can be solved by a proximal problem where the Lagrangian multiplier defined through $\lambda >0$ corresponds to the radius $\varepsilon$ \cite{parikh2014proximal}.
Specifically, consider the {\it dual form} of the LFD problem \eqref{eq:problem-W2-trust-region}, which can be written as
\begin{equation}\label{eq:dual-LFD-G}
\sup_{\lambda \ge 0}~
G( \lambda ) : = 
\min_{Q \in \calP_2} \E_{x \sim Q} V(x) + \lambda ( \W_2^2( P, Q) -\varepsilon^2),
\quad \texttt{\{dual form\}}.
\end{equation}
We restrict ourselves to the case when $\lambda > 0$, and introduce the change of variable 
\[
\lambda = \frac{1}{2\gamma }, \quad \gamma > 0.
\]
After dropping the constant term $\lambda \varepsilon^2$ in \eqref{eq:dual-LFD-G}, we obtain the following Wasserstein {\it proximal problem}
\begin{equation}\label{eq:problem-W2-proximal-GD}
    \min_{ Q \in \calP_2(\calX ) }  
    \E_{ x \sim Q} V(x) + \frac{1}{2\gamma } \calW_2^2(P, Q),  \quad \texttt{\{proximal problem\}}.
\end{equation}
The $\W_2$-proximal problem can be viewed as the Moreau envelope (or the Moreau-Yosida regularization) in the Wasserstein space \cite{moreau1965proximite}.
Similar to the vector-space case, we will have a correspondence between \eqref{eq:problem-W2-trust-region} and \eqref{eq:problem-W2-proximal-GD}, see Remark \ref{rk:TR-PR-correspondence}, which will be introduced in Section \ref{sec:theory} after we derive the first-order optimality conditions of the two problems.

\paragraph{Explicit form of dual function.}
It has been pointed out in several prior works that the dual form can be reformulated using the Moreau envelope of the (negated) loss function under different scenarios  
\cite{blanchet2019quantifying,gao2023distributionally,zhang2022simple}.
Specifically, the explicit expression of the dual form \eqref{eq:dual-LFD-G} is written as
\begin{equation}\label{eq:dual-LFD-G-2} 
\sup_{\lambda \ge 0}~
G( \lambda )  := 
 \E_{x \sim P} \inf_z \left[ V(z) + \lambda \| z - x\|^2   \right]
- \lambda\varepsilon^2.
\end{equation}
Assuming $\lambda > 0$, the dual function $G$ in \eqref{eq:dual-LFD-G-2}  can be  equivalently written as
\begin{equation}\label{eq:dual-LFD-g-gamma} 
G\left(\frac{1}{2\gamma} \right)  = 
 \E_{x \sim P}~ u( x, \gamma)
- \frac{\varepsilon^2}{2\gamma},
\end{equation}
where $u(x, \gamma)$ is the Moreau envelope of $V$ defined as 
\begin{equation}\label{eq:def-Moreau-V}
u(x, t) :=
\inf_z \left[ V(z) + \frac{1}{2 t} \| z - x\|^2   \right], \quad t > 0.    
\end{equation}
This form of the dual function echos the observation that the Wasserstein proximal operator for the functional in the form of $\varphi( \mu) = \int V d\mu $ can be solved by the proximal operator (Moreau envelope) of $V$, as has been pointed out in the PDE literature, see e.g. \cite{BOWLES201530}.

We will recover the same explicit form of the dual function under certain conditions in Section \ref{sec:theory} where the Moreau envelope has unique minimizer $z$ for each $x$, see Corollary \ref{cor:dual-form}. 
Meanwhile, from the computational perspective, the Moreau envelope $u(x, \gamma)$ may still be challenging to solve %
in high dimensions, among other algorithmic challenges. We further discuss this and the connections to previous studies of the dual form in Section \ref{subsec:connection-khun}. 
Instead of using the dual form \eqref{eq:dual-LFD-G-2}, we propose to solve the dual problem (equivalently the $\W_2$-proximal problem \eqref{eq:problem-W2-proximal-GD}) by parameterizing a transport map $T: \R^d \to \R^d$ possibly by a neural network, to be detailed in the next section.

\subsection{Solving the Wasserstein proximal problem by  transport map}

We show that the problem \eqref{eq:problem-W2-proximal-GD} that minimizes over $Q$ 
can be solved by minimizing over the transport map $T$, which will pushforward $P$ to $Q$.
(Recall that for $T:\calX \to \calX$, 
the {\it pushforward} of a distribution $P$ is denoted as $T_\# P$,  such that 
$T_\# P(A) = P( T^{-1}(A))$ for any measurable set $A$.) This reformulation is rooted in the Monge formulation of the Wasserstein distance.

When $P \in \calP_2^r$, the Brenier theorem allows a well-defined and unique optimal transport (OT) map from $P$ to any $\mu \in \calP_2$.
For completeness, we include the argument as follows. We denote by $T_P^{\square}$ the OT map from $P$ to ${\square} \in \calP_2$, which is defined $P$-a.e.,
and $( T_P^\mu )_\# P = \mu$.
Given any $\mu \in \calP_2$, for any $T:\R^d \to \R^d$ s.t. $T_\# P = \mu$,
$({\rm I_d}, T)_\# P$ is a coupling of $P$ and $\mu$, and thus
\begin{equation}\label{eq:W2-less-than-cost}
    \W_2^2( \mu, P)
    \le 
    \E_{ x \sim P} \| x - T(x)\|^2.
\end{equation}
The problem of minimizing the r.h.s. of \eqref{eq:W2-less-than-cost} over all $T$ that pushforwards $P$ to $\mu$ is known as the Monge Problem. 
{\it By Brenier Theorem, when $P \in \calP_2^r$,
the OT map attains the minimum of the Monge problem}, that is,  
\begin{equation}
    \W_2^2( \mu, P) 
    = \E_{ x \sim P} \| x - T_P^\mu(x)\|^2.
\end{equation}

We introduce the following transport map minimization problem corresponding to the $W_2$-proximal problem \eqref{eq:problem-W2-proximal-GD}
\begin{equation}\label{eq:problem-W2-T}
    \min_{ T: \calX \to \calX, \, T_\# P \in \calP_2(\calX) }  
    \E_{ x \sim P} \left( V \circ T(x) + \frac{1}{2\gamma }  \| x - T(x)\|^2 \right).
\end{equation}
The formal statement of the equivalence between \eqref{eq:problem-W2-proximal-GD} and \eqref{eq:problem-W2-T} is by applying Proposition \ref{prop:W2-PR-by-T} with $\varphi( \mu ) := \E_{ x \sim \mu } V(x)$, which is assumed to be finite for any $\mu \in \calP_2(\calX)$, and $\lambda =1/2\gamma >0$.
The proof follows a similar argument as in \cite[Lemma A.1]{xu2023invertible} and is included in Appendix \ref{app:proof} for completeness.

\begin{proposition}[Equivalent solution by transport map]
\label{prop:W2-PR-by-T}
Suppose $\varphi: \calP_2(\calX ) \to (-\infty, \infty)$,
$P \in \calP_2^r(\calX)$,
and define $L^2(P):=\{v: \R^d \to \R^d, \, \E_{x \sim P} \| v(x) \|^2 < \infty  \}$.
 For any $\lambda > 0$, the following two problems
\begin{equation}\label{eq:def-PR-mu-lemma}
\min_{\mu  \in \calP_2 (\calX) } 
L_\mu ( \mu ) = 
\varphi( \mu)  + \lambda \W_2^2( P,\mu),
\end{equation}
\begin{equation}\label{eq:def-PR-T-lemma}
\min_{T \in L^2(P)}
L_T ( T )=  \varphi( T_\# P)  + \lambda \E_{x \sim P}  \| x-T(x) \|^{2},
\end{equation}
satisfy that

(a) If $T^*$ is a minimizer of \eqref{eq:def-PR-T-lemma}, %
then $ (T^*)_\# P$ is a minimizer of \eqref{eq:def-PR-mu-lemma}.

(b) If $\mu^*$ is a minimizer of \eqref{eq:def-PR-mu-lemma}, %
then the OT map from $P$ to $\mu^*$  minimizes \eqref{eq:def-PR-T-lemma}.

In both cases, the minimum $L_\mu^*$ of \eqref{eq:def-PR-mu-lemma}
and the minimum $ L_T^*$ of \eqref{eq:def-PR-T-lemma}
equal. 
\end{proposition}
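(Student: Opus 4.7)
The plan is to establish the two claims by a symmetric comparison argument, using Brenier's theorem as the bridge between the distribution-side and the map-side formulations. The strategy is to show the two infima $L_\mu^*$ and $L_T^*$ coincide and that any minimizer of one problem induces a minimizer of the other via pushforward (for $T^* \mapsto \mu^*$) or via the OT map (for $\mu^* \mapsto T^*$).

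First I would check the objectives are well-defined on their respective domains. For any $T \in L^2(P)$, the pushforward $\mu = T_\# P$ has finite second moment since $\int \|y\|^2 d\mu(y) = \E_{x \sim P} \|T(x)\|^2 < \infty$, so $\mu \in \calP_2(\calX)$, $\varphi(\mu)$ is finite by assumption, and $\W_2^2(P,\mu) < \infty$ by Lemma~\ref{lemma:finite-W2}. Conversely, for any $\mu \in \calP_2$, since $P \in \calP_2^r$, Brenier's theorem supplies a unique OT map $T_P^\mu$ with $(T_P^\mu)_\# P = \mu$; the bound $\E_{x \sim P} \|T_P^\mu(x)\|^2 \le 2\W_2^2(P,\mu) + 2 \E_{x \sim P} \|x\|^2$ shows $T_P^\mu \in L^2(P)$, so it is admissible in~\eqref{eq:def-PR-T-lemma}.

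Next I would prove $L_T^* = L_\mu^*$ by two inequalities. In one direction, for any admissible $T$, the coupling $(\mathrm{Id}, T)_\# P$ witnesses $\W_2^2(P, T_\# P) \le \E_{x \sim P} \|x - T(x)\|^2$, so $L_T(T) \ge L_\mu(T_\# P) \ge L_\mu^*$; taking the infimum gives $L_T^* \ge L_\mu^*$. In the other direction, for any $\mu \in \calP_2$ the OT map $T_P^\mu$ saturates this inequality by Brenier's theorem, so $L_T(T_P^\mu) = \varphi(\mu) + \lambda \W_2^2(P,\mu) = L_\mu(\mu)$, yielding $L_T^* \le L_\mu^*$. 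Parts (a) and (b) now drop out: for (a), if $T^*$ is optimal and $\mu^* = (T^*)_\# P$, then $L_\mu^* \le L_\mu(\mu^*) \le L_T(T^*) = L_T^* = L_\mu^*$ forces equality throughout; for (b), if $\mu^*$ is optimal then $T^* := T_P^{\mu^*}$ achieves $L_T(T^*) = L_\mu(\mu^*) = L_\mu^* = L_T^*$.

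I do not anticipate a substantive obstacle; the argument is genuinely short once Brenier's theorem is in play. The only points that demand care are (i) confirming that the pushforward of any admissible $T$ lands in $\calP_2$ so that $\varphi$ applies, which is precisely why the search space in~\eqref{eq:def-PR-T-lemma} is $L^2(P)$ rather than arbitrary measurable maps, and (ii) confirming that the Brenier map itself lies in $L^2(P)$ so it is a legitimate competitor on the $T$-side. Both are verified in the first step above, after which the equivalence follows mechanically from the sandwich inequalities.
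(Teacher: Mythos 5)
Your proof is correct and follows essentially the same route as the paper's: both rest on the coupling inequality $\W_2^2(P, T_\# P) \le \E_{x\sim P}\|x-T(x)\|^2$ and on Brenier's theorem saturating it via the OT map, together with the check that $T \in L^2(P)$ iff $T_\# P \in \calP_2$. The only cosmetic difference is that you first establish $L_T^* = L_\mu^*$ by a two-sided inequality and read off (a) and (b) from the sandwich, whereas the paper argues locally at each minimizer (using a short contradiction to show the optimal $T^*$ realizes the $\W_2$ cost); the content is the same.
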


We will solve \eqref{eq:problem-W2-T} by parameterizing the transport map $T$ by a flow network on $[0, \gamma]$ and learn $T$ by setting \eqref{eq:problem-W2-T} as the training objective. Details will be introduced in section \ref{sec:method}. %

\subsection{Connection to existing Wasserstein DRO}\label{subsec:connection-khun}

The dual form \eqref{eq:dual-LFD-G-2} has been derived in several works under different settings \cite{mohajerin2018data,kuhn2019wasserstein,blanchet2019quantifying,gao2023distributionally,zhang2022simple} - noting that we define $V$ to be the negative loss, thus \eqref{eq:dual-LFD-G-2} is ``sup-inf'', while the dual of the original LFD problem is ``inf-sup''. Below, we discuss the connection under our framework.

\subsubsection{Reduction in the case of discrete reference measure}

We show a connection of our problem to the known result in the literature (see, e.g., \cite{kuhn2019wasserstein}): 
when the reference distribution $P$ is discrete (rather than having a density, i.e., a continuous distribution considered in our setting), 
\cite{kuhn2019wasserstein} proved a  ``strong duality'' result \eqref{eq:khun-thm7}.
Here, we show that the dual form \eqref{eq:dual-LFD-G} will end up being in the same as the dual form therein (which is equivalent to \eqref{eq:dual-LFD-G-2}), and the argument is via \eqref{eq:problem-W2-T} which illustrates the role played by the transport map $T$.
 This is an interesting connection because the dual form in \cite[Theorem 7]{kuhn2019wasserstein} plays a role in reducing the original complex infinite-dimensional problem to a finite-dimensional problem to solve the discrete LFD
\cite{mohajerin2018data,kuhn2019wasserstein}.
However, such reduction only happens when the center of the uncertainty set $P$ is discrete; when $P$ is not discrete rather than continuous, the case considered in our paper, we need to develop an alternative computational scheme.

When $P$ is an empirical distribution (thus discrete), we denote $P = \hat P$ and 
\[
\hat{P} = \frac{1}{n}\sum_{i=1}^n \delta_{x_i}, 
\]
for a dataset $\{ x_i \}_{i=1}^n$. 
We first restate \cite[Theorem 7]{kuhn2019wasserstein} using our notations ($p=2$ in $\W_p$): 
\begin{equation}\label{eq:khun-thm7} 
\sup_{ Q \in \calB_\varepsilon (\hat P)} 
\E_{x \sim Q} r(x, \phi)
= 
\inf_{\lambda \ge 0} \left\{
 \E_{x \sim \hat P} \sup_z \left[ r(z; \phi) - \lambda \| z - x\|^2   \right]
+ \lambda\varepsilon^2 \right\}.
\end{equation}
Note that the dual form (the r.h.s. of \eqref{eq:khun-thm7})
 is equivalent to \eqref{eq:dual-LFD-G-2} replacing $P$ to be $\hat P$ (and swapping to ``sup-inf'').

Recall the dual form \eqref{eq:dual-LFD-G} where we take $P = \hat P$. After dropping the constant term $\lambda \varepsilon^2$, 
the following proposition gives the explicit expression of the dual function $G(\lambda )$.
We believe similar arguments have appeared in the literature, and we include proof for completeness.

\begin{proposition}[Dual form for discrete $P$]\label{prop:dual-discrete}
Given $\lambda > 0$, suppose $\forall i=1,\dots, n$,  $\inf_{z}  \left[ V(z) + \lambda \| x_i - z\|^2 \right]$ attains its minimum at some point $z_i \in \R^d$, then 
\begin{equation}\label{eq:dual-form-hatP}
\min_{Q \in \calP_2} \E_{x \sim Q} V(x) + \lambda  \W_2^2( \hat P, Q)
= \E_{x \sim \hat P} \inf_{z}  \left[ V(z) + \lambda \| x - z\|^2 \right].
\end{equation}    
\end{proposition}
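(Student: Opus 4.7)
The plan is to prove the equality by showing both inequalities separately, exploiting the fact that a discrete reference measure $\hat P = \frac{1}{n}\sum_{i=1}^{n} \delta_{x_i}$ makes the coupling problem decouple into $n$ independent pointwise minimizations, one at each atom $x_i$. The key structural observation is that any $\pi \in \Pi(\hat P, Q)$ admits the disintegration $\pi = \frac{1}{n}\sum_{i=1}^{n} \delta_{x_i}\otimes K_i$, where each $K_i$ is a probability measure on $\R^d$ with $Q = \frac{1}{n}\sum_i K_i$; this reduces the joint objective to a sum of $n$ one-point problems.

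For the upper bound ($\le$), I would take the minimizers $z_i$ supplied by the hypothesis and construct the candidate $Q^\star := \frac{1}{n}\sum_{i=1}^{n} \delta_{z_i}$ (which lies in $\calP_2$ since it is finitely supported) together with the coupling $\pi^\star := \frac{1}{n}\sum_{i=1}^{n} \delta_{(x_i, z_i)}$, whose marginals are $\hat P$ and $Q^\star$. Plugging these into the Kantorovich definition of $\W_2^2$ yields
\[
\E_{x \sim Q^\star} V(x) + \lambda\, \W_2^2(\hat P, Q^\star) \;\le\; \frac{1}{n}\sum_{i=1}^{n}\Bigl[V(z_i) + \lambda \|x_i - z_i\|^2\Bigr],
\]
and the right-hand side equals the RHS of \eqref{eq:dual-form-hatP} by the defining property of $z_i$.

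For the lower bound ($\ge$), fix any $Q \in \calP_2$ and any $\pi \in \Pi(\hat P, Q)$, and disintegrate $\pi$ as above. Using the first-marginal identity $\E_{y\sim Q} V(y) = \int V(y)\, d\pi(x,y)$, the joint objective rewrites as
\[
\E_{y \sim Q} V(y) + \lambda \int \|x-y\|^2\, d\pi(x,y)
\;=\; \frac{1}{n}\sum_{i=1}^{n} \int \Bigl[V(y) + \lambda \|x_i - y\|^2\Bigr] K_i(dy)
\;\ge\; \frac{1}{n}\sum_{i=1}^{n} \inf_{z}\Bigl[V(z) + \lambda \|x_i - z\|^2\Bigr],
\]
where the inequality comes from the pointwise lower bound on each integrand. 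Taking the infimum over $\pi$ (to recover $\W_2^2$) and then over $Q$ on the left side produces exactly the RHS of \eqref{eq:dual-form-hatP}.

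The main obstacle is little more than bookkeeping once the decoupling is observed: the disintegration is trivial here because the first marginal is a finite convex combination of atoms, so no Polish-space regularity theorem is needed. The attainment hypothesis on $\inf_z[V(z) + \lambda \|x_i - z\|^2]$ is invoked only in the upper bound, to produce explicit witnesses $z_i \in \R^d$ that give rise to a feasible $Q^\star$; the lower bound uses $\inf$ alone and therefore requires no attainment assumption.
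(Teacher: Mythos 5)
Your proposal is correct and follows essentially the same route as the paper's proof: the upper bound via the explicit coupling $\frac{1}{n}\sum_i \delta_{(x_i,z_i)}$ of $\hat P$ with the empirical measure on the minimizers $z_i$, and the lower bound by bounding the integrand of any coupling pointwise by the Moreau-type infimum, which depends only on the first marginal. Your explicit disintegration $\pi = \frac{1}{n}\sum_i \delta_{x_i}\otimes K_i$ is just a more spelled-out version of the paper's step $\E_{(x,z)\sim\pi}\inf_{z'}[\cdot] = \E_{x\sim\hat P}\inf_{z'}[\cdot]$, so the two arguments are the same in substance.
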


As a result, we have
\[
G(\lambda) = 
	\E_{x \sim \hat P} \inf_{z}  \left[ V(z) + \lambda \| x - z\|^2 \right]-   \lambda \varepsilon^2.
\]
This dual function is equivalent to the dual form on the r.h.s. of \eqref{eq:khun-thm7}, recall that $V(x) = -r (x;\phi)$.

It will be illustrative to derive the r.h.s. of \eqref{eq:dual-form-hatP} formally from the transport-map-search problem \eqref{eq:problem-W2-T}: with $P = \hat P$, we obtain
\begin{equation}
\min_{ T: \R^d  \to \R^d }  
    \frac{1}{n}
    \sum_{i=1}^n
    \left( V \circ T( x_i ) +  \lambda   \| x_i - T(x_i)\|^2 \right).
\end{equation}
Since $x_i$ are discrete points, the effective variable are $z_i: = T(x_i)$, that is, the minimization is equivalent to
\[
\min_{ \{ z_i \}_{i=1}^n, \, z_i \in \R^d }  
    \frac{1}{n}
    \sum_{i=1}^n
    \left( V(z_i) + \lambda \| x_i - z_i\|^2 \right).
\]
This minimization is decoupled for the $n$ points $z_i $, and the minimization of each $z_i$ 
This gives that 
\[
\min_{T: \R^d \to \R^d} 
	\E_{x \sim \hat P} \left[ V( T(x) ) + \lambda  \| x - T(x)\|^2 \right]
= \E_{x \sim \hat P} \inf_{z}  \left[ V(z) + \lambda \| x - z\|^2 \right].
\]

\subsubsection{Connection to the dual formulation of WDRO}\label{subsec:connection-zhang}

Prior works have also attempted to use the dual formulation to 
evaluate the {\it objective value} under the worst-case distribution $L:= \max_{Q \in \mathcal{B}} ~\cR(Q; \phi)$. 
For example, the strong duality was obtained in a general setting in \cite{blanchet2019quantifying}.
This approach is helpful to evaluate the objective function value under the worst-case distribution directly and, thus, can help to develop a robust algorithm $\phi(\theta)$ with respect to its parameter $\theta$. 
However, the approaches along this line of thought may encounter certain limitations in practice.
First, it is well understood that general functions do not admit explicit formulas for their proximal operators, that is, finding the Moreau envelop, namely finding the inner pointwise supermum problem $\sup_{z \in \mathcal X}
\left\{ r(z;\phi) - \lambda \|z-x \|^2
\right\}$ 
does not have a closed-form solution.
In cases where this inner-loop optimization is convex and differentiable, one can use standard iterative solvers to find the supremum $z$, yet this calls for a solution of $z$ for each $x$ point-wisely. 
When the objective $r$  is non-linear and non-convex, there can be other algorithmic complications; see a recent discussion in \cite{osher2023hamilton}.
In addition, computational challenges arise in evaluating the expectation $\E_{x \sim P}  $ in \eqref{eq:dual-LFD-G-2}.
When the reference distribution $P$ is not discrete, the expectation may not have a closed-form expression, or one may have to rely on sampling from $P$ and perform a Sample Average Approximation (SAA), and the accuracy of SAA in high dimension relies on processing a large number of data samples. 
At last, even the formulation \eqref{eq:dual-LFD-G-2} can be useful for finding robust algorithms that minimize the worst-case loss, %
the LFD cannot be identified using the formulation, and one cannot sample from the LFD, which is desirable for applications such as adversarial scenario generation.

Theoretically, our analysis in this work obtains the dual form in a different route, primarily relying on the theoretical tools of optimal transport. 
We will 
derive the dual form \eqref{eq:dual-LFD-G-2}  in Section \ref{sec:theory} 
by showing that the first-order condition of the $\W_2$-proximal problem in Wasserstein calculus leads to an optimality condition of solving the Moreau envelope (Corollary \ref{cor:dual-form});
To justify the algorithm based on parameterizing the transport map, we derive the equivalence between the distribution-search problem 
(the original $\W_2$-proximal problem) 
and the transport-map-search problem in Proposition \ref{prop:W2-PR-by-T} 
making use of the Brenier theorem. 
These theoretical analyses utilize that the LFD has a continuous density. 

\section{Theory}\label{sec:theory}

In this section, we derive the first-order optimality condition for the \texttt{LFD problem} \eqref{eq:problem-W2-trust-region} and the \texttt{proximal problem} \eqref{eq:problem-W2-proximal-GD}, when considering the primal formulation to find LFD. Although the derivation is elementary, such characterization seems to not exist in the literature as far as we know, and the characterization may shed some insights into the Wasserstein space nature of the problem. Moreover, the first-order conditions also help establish the dual form of the LFD problem.

\subsection{Preliminaries}

\paragraph{Notations.}
To state the main result, we first introduce some necessary notations. For a distribution $P$ on $\R^d$, define the second moment 
$M_2(P) := \int_{\R^d} \|x\|^2 dP(x)$.
Given $\mu \in \calP_2$,
the $L^2$ space denoted by $L^2 (\mu )$ is for the vector fields $v: \R^d \to \R^d$.
For $u, v : \R^d \to \R^d$, the inner-product
\[
\langle u , v\rangle_\mu := \int_{\R^d} u(x)^T v(x) d\mu(x),
\]
and the $L^2$-norm is defined as 
\[
\| u \|_\mu^2 = \int_{\R^d} \| u(x) \|^2 d\mu(x).
\]

We will use $v \in L^2(\mu)$ as a (small) displacement field; that is, we will consider the perturbation of $\mu$ to $( {\rm I_d} +v)_\# \mu $.
By Lemma \ref{lemma:L2-perturb}, if $v \in L_2(\mu)$, then  $({\rm I_d} + v)_\# \mu$ remains in  $\calP_2$.
 
\begin{lemma}\label{lemma:L2-perturb}
    If $\mu \in \calP_2$, $ T \in L^2(\mu)$,
    then $ T_\# \mu \in \calP_2$.
\end{lemma}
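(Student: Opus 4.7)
The plan is almost a one-line verification by change of variables. I would start by unpacking the three relevant definitions: $\mu \in \calP_2$ means $M_2(\mu) = \int \|x\|^2 d\mu(x) < \infty$; membership $T \in L^2(\mu)$ means $T: \R^d \to \R^d$ is (a Borel-representative of) a measurable map with $\|T\|_\mu^2 = \int \|T(x)\|^2 d\mu(x) < \infty$; and the pushforward is defined by $T_\#\mu(A) := \mu(T^{-1}(A))$ for every Borel set $A$, which is a well-defined Borel probability measure on $\R^d$ because $T$ is measurable.

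The main step is then the standard change-of-variables identity for pushforward measures applied to the function $y \mapsto \|y\|^2$:
\[
M_2(T_\# \mu) = \int_{\R^d} \|y\|^2 \, d(T_\# \mu)(y) = \int_{\R^d} \|T(x)\|^2 \, d\mu(x) = \|T\|_\mu^2.
\]
Since $T \in L^2(\mu)$, the right-hand side is finite, so $M_2(T_\# \mu) < \infty$ and therefore $T_\# \mu \in \calP_2$.

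There is essentially no obstacle to overcome here; the only point to be careful about is that we are implicitly identifying $L^2(\mu)$ with an equivalence class of measurable maps and picking a Borel-measurable representative so that $T^{-1}(A)$ is $\mu$-measurable for each Borel $A$ and the change-of-variables formula applies verbatim. This is the standard convention in the optimal transport literature and will be used throughout subsequent arguments (e.g., when perturbing $\mu$ to $({\rm I_d}+v)_\# \mu$ for $v \in L^2(\mu)$, where one applies the lemma with $T = {\rm I_d}+v$, which lies in $L^2(\mu)$ by the triangle inequality combined with $\mu \in \calP_2$ and $v \in L^2(\mu)$).
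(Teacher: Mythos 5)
Your proof is correct and is essentially identical to the paper's: both reduce the claim to the change-of-variables identity $M_2(T_\#\mu)=\int \|T(x)\|^2 d\mu(x)=\|T\|_\mu^2<\infty$. The extra remark about choosing a Borel-measurable representative is a harmless (and standard) clarification that the paper leaves implicit.
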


We introduce notations of the following key functionals on $\calP_2$,
\begin{equation}\label{eq:def-phi-psi}
\varphi( \mu):= \int_{\R^d} V(x) d\mu(x),
\quad 
\psi( \mu) := \frac{1}{2}\W_2^2( \mu , P). 
\end{equation}
Then the \texttt{LFD problem} can be written as 
\[
\min_{Q \in \calP_2, \, \psi(Q) \le \varepsilon^2/2}
\varphi (Q).\]    
Because $\calP_2$ lies inside the manifold of all distributions over $\R^d$, the notion of calculus and convexity of $\varphi$ and $\psi$ in $\calP_2$ are very different from the case in vector space.
However, it is reasonable to expect certain optimization results in vector space to find the analog here.
The analysis here centers around the (sub)differential of $\varphi$ and $\psi$ in $\calP_2$, which has been systematically studied in the analysis literature, see Sections 9 and 10 of \cite{ambrosio2005gradient}.
Our argument follows the constructions in \cite{ambrosio2005gradient}, simplifying the notions and making the theoretical argument self-contained.

\subsection{$\W_2$-differentials}

Recall that $\varphi$ defined in \eqref{eq:def-phi-psi} is a linear function of $\mu$;
however, being linear generally does not imply that the functional is ``convex'' on $\calP_2$.
Specifically, the convexity in $\calP_2$ needs to be defined along geodesics (or general geodesics).
As a simple example,
$\mu_0 = \delta_{x_0}$ and $\mu_1 =\delta_{x_1}$,
then the geodesic from $\mu_0$ to $\mu_1$ in $\calP_2(\R^d)$ will consists the Dirac measure $\mu_t = \delta_{x_t}$, $t \in [0,1]$
where $x_t$ lies on the geodesic from $x_0$ to $x_1$ in $\R^d$ namely the line connecting the two points. 
For any $t \in [0,1]$, $\varphi( \mu_t ) = V( x_t)$.
Then, unless the function $V$ is convex, the functional $\varphi(\mu)$ will not be convex along the geodesic from $\mu_0$ to $\mu_1$.

We first introduce a lemma concerning the behavior of $\varphi$ when the distribution is perturbed in $\calP_2$. For $\varphi(\mu) = \int V d\mu$, we introduce the following assumption on the potential $V$ (without assuming its convexity).

\begin{assumption}[$L$-smooth loss]\label{assump:V}
$V$ is $L$-smooth on $\R^d$ for some $L > 0$, meaning that $V$ is $C^1$ on $\R^d$ and $\nabla V$ is $L$-Lipschitz.
\end{assumption}
\begin{lemma}[Strong differential of $\varphi$]\label{lemma:diff-phi}
Under Assumption \ref{assump:V}, $\varphi: \calP_2 \to (-\infty, \infty)$.
At any $\mu \in \calP_2$, $\nabla V \in L^2(\mu)$ and
$\varphi$ has strong $\W_2$-differential
\[
\nabla_{\W_2} \varphi( \mu ) = \nabla V, \quad \mu\text{-a.e.},
\]
in the sense that $\forall v \in L^2(\mu)$, $\| v \|_\mu = 1$, and $\delta \to 0+$,
\begin{equation}\label{eq:strong-diff-phi}
    \varphi( ( {\rm I_d} + \delta v)_\# \mu) = \varphi( \mu) 
    + \delta \langle \nabla V, v \rangle_\mu + o (  \delta ).
\end{equation}
\end{lemma}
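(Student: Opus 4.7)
The plan is to verify the three claims in order: finiteness of $\varphi$, membership $\nabla V \in L^2(\mu)$, and the strong differential expansion. Each follows from elementary consequences of the $L$-smoothness of $V$ (Assumption \ref{assump:V}) combined with the finite second-moment property of $\mu \in \calP_2$, so no deep tools from \cite{ambrosio2005gradient} are needed.

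First I would establish quantitative pointwise bounds on $V$ and $\nabla V$. From $L$-smoothness, for all $x \in \R^d$,
\begin{equation*}
\| \nabla V(x) \| \le \| \nabla V(0) \| + L \|x\|,
\qquad
| V(x) | \le |V(0)| + \|\nabla V(0)\| \, \|x\| + \tfrac{L}{2} \|x\|^2.
\end{equation*}
Squaring the first bound and integrating against $\mu \in \calP_2$ gives $\nabla V \in L^2(\mu)$ (using $M_2(\mu) < \infty$), while integrating the second bound shows $\varphi(\mu) = \int V \, d\mu \in (-\infty, \infty)$, proving the first two claims.

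For the strong differential, I would use the pushforward change of variables to write
\begin{equation*}
\varphi\bigl( ({\rm I_d} + \delta v)_\# \mu \bigr)
= \int_{\R^d} V\bigl( x + \delta v(x) \bigr) \, d\mu(x).
\end{equation*}
Then I would apply the standard consequence of $L$-smoothness (the descent-type quadratic bound): for every $x,h \in \R^d$,
\begin{equation*}
\bigl| V(x+h) - V(x) - \nabla V(x)^T h \bigr| \le \tfrac{L}{2} \|h\|^2.
\end{equation*}
Setting $h = \delta v(x)$ and integrating against $\mu$, the linear term produces exactly $\delta \langle \nabla V, v \rangle_\mu$, and the remainder is controlled by
\begin{equation*}
\left| \varphi\bigl( ({\rm I_d} + \delta v)_\# \mu \bigr) - \varphi(\mu) - \delta \langle \nabla V, v \rangle_\mu \right|
\le \tfrac{L}{2} \delta^2 \int_{\R^d} \|v(x)\|^2 \, d\mu(x) = \tfrac{L}{2} \delta^2 \|v\|_\mu^2 = \tfrac{L}{2} \delta^2,
\end{equation*}
using $\|v\|_\mu = 1$. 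Since $\tfrac{L}{2}\delta^2 = o(\delta)$ as $\delta \to 0+$, this yields \eqref{eq:strong-diff-phi}. The fact that $\nabla V$ represents the $\W_2$-differential $\mu$-a.e.\ follows from the uniqueness of the linear functional $v \mapsto \langle \nabla V, v\rangle_\mu$ on $L^2(\mu)$ realizing the first-order variation.

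There is no real obstacle here; the argument is essentially a reduction to the pointwise Taylor inequality that defines $L$-smoothness, promoted to $L^2(\mu)$ via the $L^2$ assumption on $v$. The only subtlety worth flagging is the implicit use of Fubini/dominated convergence to exchange the integral with the Taylor remainder: this is justified because the integrand $\|v(x)\|^2$ is in $L^1(\mu)$, and the pointwise bound is uniform in $\delta$ up to the factor $\delta^2$. I would note briefly that the conclusion matches Proposition 10.4.2 of \cite{ambrosio2005gradient} for functionals of the form $\mu \mapsto \int V d\mu$, but we obtain it self-contained under the weaker (non-convex) assumption that $V$ is merely $L$-smooth.
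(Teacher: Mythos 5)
Your proposal is correct and follows essentially the same route as the paper's proof: quadratic and linear growth bounds from $L$-smoothness give finiteness of $\varphi$ and $\nabla V \in L^2(\mu)$, and the pointwise Taylor inequality $|V(x+h)-V(x)-\nabla V(x)^T h|\le \tfrac{L}{2}\|h\|^2$ with $h=\delta v(x)$, integrated against $\mu$, yields the $O(\delta^2)$ remainder. The only cosmetic differences are that you bound $|V|$ in one step where the paper bounds $V$ and $-V$ separately, and you add a brief remark on the uniqueness of the representing element in $L^2(\mu)$; neither changes the substance.
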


For $\psi(\mu) = \frac{1}{2} \W_2^2( \mu, P)$, where $P \in \calP_2^r$ is fixed,
the $\calP_2$ calculus is more conveniently derived in a neighborhood of $\mu \in \calP_2^r$.
It is known that the $\W_2$ differential (both sub- and super-differential) of $\psi$ at $\mu \in \calP_2^r$ has the expression as  $({\rm I_d} - T_\mu^P)$, see e.g.  \cite[Corollary 10.2.7]{ambrosio2005gradient} where the subdifferential is defined not in the ``strong'' sense.
Here, we give a lemma on the strong super-differential of $\psi$
(i.e. strong subdifferential of $-\psi$),
which suffices for our purpose.

\begin{lemma}[Strong super-differential of $\psi$]\label{lemma:superdiff-psi}
Let $P \in \calP_2$ be fixed, 
for any $\mu \in \calP_2^r$,  the optimal transport map $T_\mu^P $ is defined $\mu$-a.e.,
and the functional $-\psi$ has strong $\W_2$-subdifferential at $\mu$, 
\[
- ( {\rm I_d} - T_\mu^P ) \in \partial_{\W_2} ( - \psi )( \mu ), 
\]
in the sense that $\forall v \in L^2(\mu)$, $\| v \|_\mu = 1$, and $\delta \to 0+$,
\begin{equation}\label{eq:strong-superdiff-psi}
    \psi( ( {\rm I_d} + \delta v)_\# \mu) 
    \le
     \psi( \mu) 
    + \delta \langle  {\rm I_d} - T_\mu^P , v \rangle_\mu + o (  \delta ).
\end{equation}    
\end{lemma}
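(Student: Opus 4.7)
The plan is to exploit Brenier's theorem to realize $\W_2^2(\mu, P)$ exactly as the transport cost of $T_\mu^P$, and then bound $\W_2^2(\mu_\delta, P)$ above, where $\mu_\delta := ({\rm I_d} + \delta v)_\# \mu$, by the cost of a particular (non-optimal) coupling built from $T_\mu^P$ and the perturbation $v$. Since the objective is an upper bound on $\psi(\mu_\delta)$, this will immediately give a super-differential (rather than a subdifferential) inequality, which is exactly what the lemma asks for.

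First, since $\mu \in \calP_2^r$, Brenier's theorem provides a unique (up to $\mu$-null sets) optimal transport map $T_\mu^P$ with $(T_\mu^P)_\# \mu = P$ and
\[
\W_2^2(\mu, P) = \int_{\R^d} \| x - T_\mu^P(x) \|^2 \, d\mu(x).
\]
Next, given $v \in L^2(\mu)$, Lemma \ref{lemma:L2-perturb} ensures $\mu_\delta \in \calP_2$. I define the coupling
\[
\pi_\delta := \big( ({\rm I_d} + \delta v),\ T_\mu^P \big)_\# \mu \in \Pi(\mu_\delta, P),
\]
which has the correct marginals by the pushforward identities above. By the Kantorovich definition \eqref{eq:deff-W2-Kantorovich},
\[
\W_2^2(\mu_\delta, P) \le \int_{\R^d \times \R^d} \| y - z \|^2 \, d\pi_\delta(y,z) = \int_{\R^d} \| x + \delta v(x) - T_\mu^P(x) \|^2 \, d\mu(x).
\]

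Expanding the square gives
\[
\W_2^2(\mu_\delta, P) \le \W_2^2(\mu, P) + 2\delta \langle v,\ {\rm I_d} - T_\mu^P \rangle_\mu + \delta^2 \| v \|_\mu^2.
\]
Dividing by two and using $\| v \|_\mu = 1$, I obtain
\[
\psi(\mu_\delta) \le \psi(\mu) + \delta \langle {\rm I_d} - T_\mu^P,\ v \rangle_\mu + \tfrac{1}{2}\delta^2,
\]
which is \eqref{eq:strong-superdiff-psi} since the remainder is $O(\delta^2) = o(\delta)$. This establishes that $-({\rm I_d} - T_\mu^P) \in \partial_{\W_2}(-\psi)(\mu)$ in the strong sense.

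There is no real obstacle, just three checks to be careful about. (i) That $T_\mu^P$ exists and is $\mu$-a.e.\ defined uses $\mu \in \calP_2^r$ (the hypothesis is on $\mu$, not on $P$, which is the side Brenier requires to be absolutely continuous). (ii) The marginal verification for $\pi_\delta$ is direct from the definition of pushforward, using $(T_\mu^P)_\# \mu = P$. (iii) The cross term $\langle v, {\rm I_d} - T_\mu^P\rangle_\mu$ is finite because $v \in L^2(\mu)$ and ${\rm I_d} - T_\mu^P \in L^2(\mu)$ (the latter by $\| {\rm I_d} - T_\mu^P \|_\mu^2 = \W_2^2(\mu, P) < \infty$ via Lemma \ref{lemma:finite-W2}), so Cauchy--Schwarz controls it. The one-sided nature of the inequality (and thus the super-differential rather than full differential) is intrinsic: the constructed coupling is generically sub-optimal for $\mu_\delta$, so this technique only bounds $\psi(\mu_\delta)$ from above.
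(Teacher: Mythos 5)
Your proof is correct and follows essentially the same route as the paper's: both construct the coupling $\bigl(({\rm I_d}+\delta v),\,T_\mu^P\bigr)_\# \mu$ of $({\rm I_d}+\delta v)_\# \mu$ and $P$, bound $\W_2^2$ by its cost, and expand the square using $\W_2^2(\mu,P)=\|{\rm I_d}-T_\mu^P\|_\mu^2$ to get the remainder $\delta^2/2$. Your added checks (existence of $T_\mu^P$ via Brenier on the $\mu$ side, and ${\rm I_d}-T_\mu^P\in L^2(\mu)$) match the paper's preliminary verifications.
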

One remark is that, in the above lemma, we only need $P\in\calP_2$ and no need to have density.
The unique existence of the optimal transport map $T_\mu^P$ needs $\mu$ to have density.

\subsection{First-order condition of \texttt{LFD problem}}

We will analyze the first-order condition around a local minimum of the \texttt{LFD problem}
based on the relations \eqref{eq:strong-diff-phi} and \eqref{eq:strong-superdiff-psi}.
While \eqref{eq:strong-diff-phi} holds at any $\mu \in \calP_2$,
 \eqref{eq:strong-superdiff-psi} requires $\mu \in \calP_2^r$, 
 Thus, we assume the minimizer $Q$ of the TR problem has density.
\begin{assumption}[Minimizer of \texttt{LFD problem} in $\calP_2^r$]\label{assump:minimizer-in-P2r}
  The problem \eqref{eq:problem-W2-trust-region} attains a (local) minimum at $Q \in \calP_2^r$.    
\end{assumption}

\begin{remark}
 In our theory, we do not use the assumption $P \in \calP_2^r$ explicitly, 
however, if $P$ does not have density, then usually the minimizer $Q$ will not have density,
e.g., in the discrete LFD considered in \cite{mohajerin2018data,kuhn2019wasserstein}. 
Thus, we assume $P$ has a density so that Assumption \ref{assump:minimizer-in-P2r} can be reasonable.
\end{remark}

\begin{theorem}[First-order condition of \texttt{LFD problem}]\label{prop:TR}
Let $P \in \calP_2$ be fixed, 
under Assumptions \ref{assump:V} and \ref{assump:minimizer-in-P2r}, 
at a local minimizer $Q$ of \eqref{eq:problem-W2-trust-region} which is in $\calP_2^r$, 
\begin{itemize}
\item[(i)] $\calB_\varepsilon$ constraint not tight:
 If $\W_2(Q, P) < \varepsilon$, 
then $\nabla V =0 $, $Q$-a.e.

\item[(ii)]  $\calB_\varepsilon$ constraint tight:
If $\W_2(Q, P) =  \varepsilon$, 
then either $\nabla V =0 $, $Q$-a.e. or $\exists \lambda > 0$, s.t., 
\begin{equation}\label{eq:1st-order-TR}
\nabla V +  \lambda  ({\rm I_d} -  T_Q^P) =0, \quad Q\text{-a.e.}    
\end{equation}
\end{itemize}
\end{theorem}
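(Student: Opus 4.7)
The plan is to adapt the classical constrained first-order argument to the Wasserstein setting by perturbing $Q$ along displacement fields $v \in L^2(Q)$ and expanding $\varphi$ and $\psi$ via Lemmas~\ref{lemma:diff-phi} and~\ref{lemma:superdiff-psi}. For any $v \in L^2(Q)$ set $Q_\delta := ({\rm I_d} + \delta v)_\# Q$; Lemma~\ref{lemma:L2-perturb} gives $Q_\delta \in \calP_2$, and the coupling $({\rm I_d}, {\rm I_d} + \delta v)_\# Q$ shows $\W_2(Q_\delta, Q) \le \delta \|v\|_Q$, so $Q_\delta$ lies in any prescribed $\W_2$-neighborhood of $Q$ for $\delta$ small. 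In Case~(i), $\W_2(Q,P) < \varepsilon$, so the triangle inequality gives $Q_\delta \in \calB_\varepsilon(P)$ for small $\delta > 0$; local minimality combined with Lemma~\ref{lemma:diff-phi} forces $\delta\langle \nabla V, v\rangle_Q + o(\delta) \ge 0$, hence $\langle \nabla V, v\rangle_Q \ge 0$, and replacing $v$ by $-v$ gives equality. Since $v$ was arbitrary in $L^2(Q)$, this yields $\nabla V = 0$ $Q$-a.e.

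For Case~(ii), $\psi(Q) = \varepsilon^2/2$, and I would introduce $u := {\rm I_d} - T_Q^P$, which lies in $L^2(Q)$ since $\|T_Q^P\|_Q^2 = M_2(P) < \infty$ and ${\rm I_d} \in L^2(Q)$. Lemma~\ref{lemma:superdiff-psi} gives the super-differential bound $\psi(Q_\delta) \le \psi(Q) + \delta \langle u, v\rangle_Q + o(\delta)$, so whenever $\langle u, v\rangle_Q < 0$ the perturbation $Q_\delta$ is feasible for small $\delta > 0$; local minimality paired with Lemma~\ref{lemma:diff-phi} then yields $\langle \nabla V, v\rangle_Q \ge 0$. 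To extend this implication to the closed half-space $\{v : \langle u, v\rangle_Q \le 0\}$, I would replace any boundary $v$ by $v - \epsilon u$ (which strictly decreases $\langle u, \cdot\rangle_Q$ provided $u \not\equiv 0$), invoke the open-half-space conclusion, and let $\epsilon \to 0+$ using $\nabla V \in L^2(Q)$. The escape route $u \equiv 0$ in $L^2(Q)$ is ruled out: it would force $T_Q^P = {\rm I_d}$ $Q$-a.e., hence $P = Q$, contradicting $\W_2(Q,P) = \varepsilon > 0$.

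The final step is a Hilbert-space Farkas argument: the inclusion $\{v \in L^2(Q): \langle -u, v\rangle_Q \ge 0\} \subset \{v \in L^2(Q): \langle \nabla V, v\rangle_Q \ge 0\}$ forces $\nabla V = -\lambda u$ with $\lambda \ge 0$, which one sees by orthogonally decomposing $\nabla V = \alpha u + w$ in $L^2(Q)$ with $\langle u,w\rangle_Q = 0$, testing the inclusion against $\pm w$ to force $w = 0$, and then testing against $v = -u$ to pin down $\alpha \le 0$. If $\lambda := -\alpha = 0$, then $\nabla V = 0$ $Q$-a.e.; otherwise $\lambda > 0$ and \eqref{eq:1st-order-TR} holds. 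I expect the main obstacle to be precisely the one-sidedness of Lemma~\ref{lemma:superdiff-psi}: only super-differential information on $\psi$ is available, so direct feasibility of $Q_\delta$ is guaranteed only under the strict inequality $\langle u, v\rangle_Q < 0$, and the boundary case $\langle u, v\rangle_Q = 0$ must be closed by the limiting perturbation described above. Once past this point, the Farkas step is routine.
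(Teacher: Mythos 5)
Your proposal is correct and follows essentially the same route as the paper's proof: perturbing $Q$ along displacement fields $({\rm I_d}+\delta v)_\#Q$, invoking Lemma~\ref{lemma:diff-phi} for $\varphi$ and Lemma~\ref{lemma:superdiff-psi} for feasibility when $\langle {\rm I_d}-T_Q^P, v\rangle_Q<0$, and deducing that $\nabla V$ must be a nonpositive multiple of ${\rm I_d}-T_Q^P$. Your explicit orthogonal-decomposition (Farkas) step and the limiting argument for the boundary case $\langle u,v\rangle_Q=0$ merely make precise the "parallel and aligned" conclusion the paper asserts directly, so there is no substantive difference.
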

Note that the statement of the proposition implies that $T_Q^P = {\rm I_d} + \frac{1}{\lambda}\nabla V$, when $\lambda >0$, and otherwise $\nabla V = 0$, which takes the form of {\it complementarity condition}.

\subsection{First-order condition of \texttt{proximal problem}}

For any $\gamma >0$, the first-order condition of the Wasserstein \texttt{proximal problem} \eqref{eq:problem-W2-proximal-GD} is derived in the following proposition.

\begin{theorem}[First-order condition of \texttt{proximal problem}]\label{prop:PR}
Let $P \in \calP_2$ be fixed, 
under Assumption \ref{assump:V}, for $\gamma > 0$, suppose the problem \eqref{eq:problem-W2-proximal-GD} attains a (local) minimum at $Q \in \calP_2^r$,
then 
\begin{equation}\label{eq:1st-order-PR}
0 = \nabla V + \frac{1}{\gamma} ( {\rm I_d} - T_Q^P ),
\quad Q\text{-a.e.}
\end{equation}
\end{theorem}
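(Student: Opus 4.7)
The plan is to argue via first variations around $Q$, combining the strong differential of $\varphi$ from Lemma \ref{lemma:diff-phi} with the strong super-differential of $\psi$ from Lemma \ref{lemma:superdiff-psi}, in exact analogy with the proof of Theorem \ref{prop:TR}, but now without any constraint to complicate matters. Define the combined objective $F(\mu) := \varphi(\mu) + \frac{1}{\gamma}\psi(\mu)$. Given the local minimizer $Q \in \calP_2^r$, I pick an arbitrary $v \in L^2(Q)$ with $\|v\|_Q = 1$ and consider the one-parameter family of admissible competitors $\mu_\delta^{\pm} := ({\rm I_d} \pm \delta v)_\# Q$ for small $\delta > 0$; by Lemma \ref{lemma:L2-perturb} these lie in $\calP_2$, so they are feasible for the \texttt{proximal problem}.

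Applying Lemma \ref{lemma:diff-phi} gives $\varphi(\mu_\delta^{\pm}) = \varphi(Q) \pm \delta \langle \nabla V, v\rangle_Q + o(\delta)$, and applying Lemma \ref{lemma:superdiff-psi} (once with $v$, once with $-v$) yields the one-sided bounds $\psi(\mu_\delta^{\pm}) \le \psi(Q) \pm \delta \langle {\rm I_d} - T_Q^P, v\rangle_Q + o(\delta)$. Adding gives
\begin{equation*}
F(\mu_\delta^{\pm}) \le F(Q) \pm \delta \, \Big\langle \nabla V + \tfrac{1}{\gamma}({\rm I_d} - T_Q^P),\, v\Big\rangle_Q + o(\delta).
\end{equation*}
Local minimality of $Q$ then forces $F(\mu_\delta^{\pm}) \ge F(Q)$ for all sufficiently small $\delta > 0$, so both
\begin{equation*}
\pm \, \Big\langle \nabla V + \tfrac{1}{\gamma}({\rm I_d} - T_Q^P),\, v\Big\rangle_Q + o(1) \ge 0
\end{equation*}
hold as $\delta \to 0^+$, which upon taking the limit yields the equality $\langle \nabla V + \tfrac{1}{\gamma}({\rm I_d} - T_Q^P), v\rangle_Q = 0$.

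Because $v \in L^2(Q)$ was arbitrary (the unit-norm constraint can be removed by homogeneity), the vector field $\nabla V + \tfrac{1}{\gamma}({\rm I_d} - T_Q^P)$ must vanish in $L^2(Q)$, which is precisely the pointwise identity \eqref{eq:1st-order-PR} holding $Q$-a.e. Note that Assumption \ref{assump:V} ensures $\nabla V \in L^2(Q)$, and the assumption $Q \in \calP_2^r$ together with $P \in \calP_2$ guarantees that the OT map $T_Q^P$ is well-defined $Q$-a.e.\ and lies in $L^2(Q)$, so all objects in the identity are meaningful.

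The only subtle point (and the main conceptual obstacle, though it is mild) is that Lemma \ref{lemma:superdiff-psi} only provides a one-sided super-differential of $\psi$, not a two-sided differential; in the unconstrained proximal setting this is easily handled by exploiting that $v$ and $-v$ are both admissible displacement fields, which converts the two super-differential inequalities into an equality after passing to the limit. No Lagrange multiplier analysis is needed, because here $\gamma$ plays the role of a fixed penalty parameter rather than a multiplier to be determined, so the argument is strictly simpler than the two-case split in Theorem \ref{prop:TR}.
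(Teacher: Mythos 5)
Your proposal is correct and follows essentially the same route as the paper's proof: perturb $Q$ by $({\rm I_d}+\delta v)_\# Q$, combine the strong differential of $\varphi$ (Lemma \ref{lemma:diff-phi}) with the strong super-differential of $\psi$ (Lemma \ref{lemma:superdiff-psi}), and use local minimality to force the inner product $\langle \nabla V + \tfrac{1}{\gamma}({\rm I_d}-T_Q^P), v\rangle_Q$ to vanish for every $v$. Your explicit use of the two displacements $\pm v$ is just a more transparent packaging of the paper's step that deduces the vanishing of the vector field from nonnegativity of the inner product against all unit $v$.
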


\begin{remark}[Correspondence between of \texttt{LFD problem} and \texttt{proximal problem}]\label{rk:TR-PR-correspondence}
    We can see that the condition \eqref{eq:1st-order-PR} matches the first order condition \eqref{eq:1st-order-TR} (when Wasserstein ball constraint is tight) by setting $\gamma = 1/\lambda$.
\end{remark}

The $\W_2$-proximal problem has been studied in Section 10.1 of \cite{ambrosio2005gradient}, and in particular, Lemma 10.1.2 derived a first-order condition (in terms of strong subdifferential of $\varphi$) at a minimizer. 
In our case, the strong $\W_2$-differential of $\varphi$ exists at $Q$ and thus the subdifferential uniquely exists, i.e., $\partial_{\W_2} \varphi = \{  \nabla V \} $. Then the conclusion of \cite[Lemma 10.1.2]{ambrosio2005gradient} directly implies \eqref{eq:1st-order-PR}. We include a direct proof of the proposition for completeness.

The first-order condition of the $\W_2$-proximal problem allows us to prove the explicit expression of the dual form \eqref{eq:dual-LFD-G-2},
technically with small enough $\gamma$ s.t. the Moreau envelope of $V$  has unique minimizer in the $\inf_z$.

\begin{corollary}[Dual form]\label{cor:dual-form}
Let $P \in \calP_2$ be fixed, 
under Assumption \ref{assump:V}, for $0 < \gamma <  \rev{\frac{1}{L}}$, 
suppose the \texttt{proximal problem} \eqref{eq:problem-W2-proximal-GD} attains a local minimum at $Q \in \calP_2^r$.
Then,
the Moreau envelope $u(x, \gamma)$ defined in \eqref{eq:def-Moreau-V}
    is solved at an unique minimizer $z^*$ for each $x$, $Q$ is a global minimum of the \texttt{priximal problem} \eqref{eq:problem-W2-proximal-GD}, and the dual function $G$ defined in \texttt{dual form} \eqref{eq:dual-LFD-G}
     has the expression as in \eqref{eq:dual-LFD-g-gamma}.
  \end{corollary}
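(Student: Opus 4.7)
The strategy is threefold: (i) establish uniqueness of the Moreau-envelope minimizer under $\gamma < 1/L$, (ii) identify the global minimum of the proximal problem and show that the given $Q$ attains it, and (iii) read off the dual form.

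For (i), under Assumption \ref{assump:V} the map $z \mapsto V(z) + \frac{L}{2}\|z\|^2$ is convex, while $z \mapsto \frac{1}{2\gamma}\|z-x\|^2 - \frac{L}{2}\|z\|^2$ is a convex quadratic with Hessian $(1/\gamma - L) I \succ 0$. Their sum, $z \mapsto V(z) + \frac{1}{2\gamma}\|z-x\|^2$, is therefore coercive and strongly convex with modulus $(1/\gamma - L)$, so it admits a unique minimizer $z^*(x)$, characterized by $\Phi(z^*(x)) = x$, where $\Phi(z) := z + \gamma \nabla V(z)$. Since $\Phi = \nabla\!\left(\frac{1}{2}\|z\|^2 + \gamma V(z)\right)$ is the gradient of a strongly convex $C^1$ function, it is strongly monotone, hence a global bijection of $\R^d$ with continuous inverse $\Phi^{-1} = z^*$.

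For (ii), I would invoke Proposition \ref{prop:W2-PR-by-T} with $\lambda = \frac{1}{2\gamma}$ and $\varphi(\mu) = \E_{x \sim \mu} V(x)$ to recast \eqref{eq:problem-W2-proximal-GD} as $\min_{T \in L^2(P)} \E_{y \sim P}\!\left[V(T(y)) + \frac{1}{2\gamma}\|y - T(y)\|^2\right]$. The integrand decouples pointwise in $y$, so the infimum is attained by the measurable selection $T^*(y) = z^*(y) = \Phi^{-1}(y)$, and equals $\E_{y \sim P} u(y, \gamma)$. To see that the given $Q$ attains this value, Theorem \ref{prop:PR} gives $T_Q^P(x) = \Phi(x)$ for $Q$-a.e. $x$, so in particular $\Phi_\# Q = P$. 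The change of variables $y = \Phi(x)$ then yields
\begin{align*}
\E_{x \sim Q} V(x) + \frac{1}{2\gamma}\W_2^2(P, Q)
&= \E_{x \sim Q}\!\left[V(x) + \frac{1}{2\gamma}\|x - \Phi(x)\|^2\right] \\
&= \E_{y \sim P}\!\left[V(\Phi^{-1}(y)) + \frac{1}{2\gamma}\|\Phi^{-1}(y) - y\|^2\right] \\
&= \E_{y \sim P} u(y, \gamma),
\end{align*}
so $Q$ is a global minimizer of the \texttt{proximal problem}.

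For (iii), subtracting the constant $\frac{\varepsilon^2}{2\gamma}$ from this proximal minimum gives
\[
G\!\left(\frac{1}{2\gamma}\right) = \E_{y \sim P} u(y, \gamma) - \frac{\varepsilon^2}{2\gamma},
\]
which is \eqref{eq:dual-LFD-g-gamma}. The main obstacle I anticipate is step (ii): one must upgrade the almost-everywhere relation $T_Q^P = \Phi$ on $\mathrm{supp}(Q)$ to a genuine global bijection with inverse coinciding with the pointwise Moreau minimizer $z^*$, so that the change-of-variables identity and the measurable selection of $T^*$ both go through. Both requirements reduce to the strong monotonicity of $\Phi$ and the unique solvability of the inner strongly convex problem, which is precisely where the hypothesis $\gamma < 1/L$ is essential.
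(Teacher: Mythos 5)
Your proof is essentially correct and its core coincides with the paper's: uniqueness of the Moreau minimizer via strong convexity of $z\mapsto V(z)+\frac{1}{2\gamma}\|z-x\|^2$ when $\gamma<1/L$, the identification $T_Q^P={\rm I_d}+\gamma\nabla V=\Phi$ ($Q$-a.e.) from Theorem \ref{prop:PR}, and the pushforward identity $(T_Q^P)_\#Q=P$ to convert $\E_{x\sim Q}\bigl[V(x)+\frac{1}{2\gamma}\|x-T_Q^P(x)\|^2\bigr]$ into $\E_{y\sim P}\,u(y,\gamma)$ are exactly the paper's steps for showing that $Q$ attains the value $\E_P u(\cdot,\gamma)$.

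The one place you diverge is the global lower bound. You route it through Proposition \ref{prop:W2-PR-by-T} to pass to the transport-map formulation and then decouple pointwise. The direction you need, $\min_\mu L_\mu \ge \min_T L_T$, rests on Brenier's theorem (for each $\mu$ one needs a map $T$ with $T_\#P=\mu$ and $\E_P\|x-T(x)\|^2=\W_2^2(P,\mu)$), and Proposition \ref{prop:W2-PR-by-T} is stated only for $P\in\calP_2^r$. The corollary, however, assumes only $P\in\calP_2$. The paper avoids this by proving the lower bound directly with a coupling argument: for any $Q'\in\calP_2$ and any coupling $\pi\in\Pi(P,Q')$,
\begin{equation*}
\E_{x\sim Q'}V(x)+\tfrac{1}{2\gamma}\W_2^2(P,Q')
=\inf_{\pi}\E_{(x,z)\sim\pi}\bigl[V(z)+\tfrac{1}{2\gamma}\|x-z\|^2\bigr]
\ge \E_{x\sim P}\inf_{z}\bigl[V(z)+\tfrac{1}{2\gamma}\|x-z\|^2\bigr],
\end{equation*}
which needs no density assumption on $P$ and no existence of OT maps. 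So as written your argument proves the corollary only under the additional hypothesis $P\in\calP_2^r$ (which the paper does adopt informally elsewhere, but does not grant here); replacing the detour through Proposition \ref{prop:W2-PR-by-T} by the coupling inequality above closes the gap and makes your proof match the stated generality. Everything else — the bijectivity of $\Phi$ from strong monotonicity, the change of variables via $\Phi_\#Q=P$, and reading off $G(1/2\gamma)$ by subtracting $\varepsilon^2/2\gamma$ — is sound.
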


\begin{remark}[Interpretation of the optimal transport map]
\label{rk:back-euler}
When the optimal transport map from $P$ to $Q$ also exists, it can be interpreted as the map from $x$ to $z^*$, which solves (the unique minimizer of) the Moreau envelope as well as a Backward Euler scheme to solve the continuous-time gradient flow.
Specifically,
when $P \in \calP_2^r$, the optimal transport map $T_P^Q$ is defined $P$-a.e., and $T_Q^P \circ T_P^Q = {\rm  I_d}$, $P$-a.e.
By Theorem \ref{prop:PR}, we have \eqref{eq:1st-order-PR},  which implies that  
\begin{equation}\label{eq:expression-TPQ}
T_P^Q= {\rm  I_d}  - \gamma \nabla V \circ T_P^Q , \quad P\text{-a.e.}    
\end{equation}
By a similar argument as in the proof of Corollary \ref{cor:dual-form}, $z = T_P^Q(x) $ solves the unique minimizer of the Moreau envelope $ u(x,\gamma)= \inf_{ z}
    \left[ V(z) + \frac{1}{2\gamma } \| z - x\|^2 \right]$.
To view the map $T_P^Q$ as a Backward Euler scheme to solve the $\W_2$-proximal gradient descent:
Suppose we use $T_P^Q$ to pushforward from the current distribution $P_k = P $ to the next distribution $P_{k+1} = Q$, 
then each point $x_k$ is moved to $x_{k+1}$ by $T_P^Q$, i.e. $x_{k+1}= T_P^Q(x_k)$, then \eqref{eq:expression-TPQ} gives that 
\begin{equation}\label{eq:bwd-Euler}
x_{k+1} = x_k - \gamma \nabla V( x_{k+1} ),   
\end{equation}
which is a Backward Euler scheme to integrate the continuous-time gradient descent ODE $\dot x(t) = - \nabla V( x(t))$
with step size $\gamma$. 
\end{remark}

\section{Algorithm: Flow-DRO}\label{sec:method}

This section presents a neural network flow-based approach to solve the LFD problem by representing the optimal transport maps by ResNet blocks  \cite{he2016deep}. Our framework does not need to rely on neural networks; there can be other ways to represent the transport map (e.g., kernel representation). For high-dimensional data, with sufficient training data, neural networks tend to have competitive performance due to their expressiveness power. 
Below, in section \ref{sec:ode_param}, we first parameterize the transport map $T$ in \eqref{eq:problem-W2-T} as the solution map of a NeuralODE \citep{chen2018neural}. In section \ref{sec:flow_training}, we present the block-wise progressive training algorithm of the proposed flow model. In section \ref{sec:adv_sampler}, we explain how \flowmodel{} can be used as an adversarial generative sampler. In section \ref{sec:min_max_algo}, we propose an iterative algorithm to solve the original min-max DRO problem \eqref{eq:DRO_general} with $\mathcal{B}$ being the $\W_2$ ball around $P$.

\subsection{Flow-based neural network parameterization of transport map} \label{sec:ode_param}

Consider a density evolution (i.e., flow) $\rho(x,t)$ such that $\rho(x,0)=P$ at $t=0$, and $\rho(x,t)$ approaches $Q^*$ as $t$ increases, where $Q^*$ is the minimizer of \eqref{eq:problem-W2-proximal-GD} (unknown {\it a priori}). Below, we interchangeably refer $\rho(x,t)$ both as the marginal distribution of $x(t)$ and its corresponding density function.
Given the initial distribution $\rho(x,0)=P$, such a flow is typically non-unique. We consider when the flow is induced by an ODE of $x(t)$ in $\bR^d$:
\begin{equation}\label{eq:ode_vfield}
    \dot{x}(t) = f(x(t),t),
\end{equation}
where $x(0)\sim P$. Note that by the Liouville equation (the continuity equation) (see, e.g., \cite{JKOproof2023}), the marginal distribution $\rho(x,t)$ of $x(t)$ satisfies \[\partial_t \rho + \nabla \cdot (\rho f)=0.\]

We choose to parameterize $f(x(t),t)$ in \eqref{eq:ode_vfield} by a neural network $f(x(t),t;\theta)$ 
with trainable parameters $\theta \in \Theta$ (using continuous-time NeuralODE \cite{chen2018neural}). Assuming the flow map is within the unit interval $t\in [0,1)$, the $\theta$-parameterized solution map $T$ can be expressed as
\begin{equation}\label{eq:neural_ode_T}
    T(x;\theta)=x+\int_0^1 f(x(s'),s';\theta) ds', x(s)=x.
\end{equation}
Using \eqref{eq:neural_ode_T}, the problem of finding $T$ in \eqref{eq:problem-W2-T} thus reduces to training $\theta$ in the following problem:
\begin{equation}\label{eq:distribution_min_proximal_T_ode}
    \min_{\theta\in \Theta} \bE_{x \sim P}\left( V \circ T(x;\theta) + \frac{1}{2\gamma }  \| x - T(x;\theta)\|^2 \right).
\end{equation}

There are two main benefits of parameterizing $T$ as a flow model with parameters $\theta$. 
First, flow models are continuous in time so that we can obtain gradually transformed samples by integrating $f(x(s),s;\theta)$ over a smaller interval $[0,t)$ for $t<1$.
In practice, these gradually transformed samples can be directly compared against those obtained by other baselines, where numerical results are presented in section \ref{sec:experiments}.
Second, compared to other popular generative models such as GAN \citep{goodfellow2014generative}, the proposed flow model based on NeuralODE can be simpler and easier to train. This is because our objective \eqref{eq:distribution_min_proximal_T_ode} involves no additional discriminators to guide the training of $T(\cdot;\theta)$, and therefore no additional inner loops are required.

We also note a close connection between training $\theta$ in \eqref{eq:distribution_min_proximal_T_ode} and training continuous normalizing flow (CNF) models with transport-cost regularization \citep{finlay2020train,onken2021ot,xu2022IGNN}. 
In CNF, the problem is to train $\theta$ so that $T(\cdot;\theta)_{\#} P$ is close to the isotropic Gaussian distribution $P_Z=\mathcal{N}(0,I_d)$. To do so, the CNF objective minimizes the KL-divergence ${\rm KL}(T(\cdot;\theta)_{\#} P||P_Z)$ up to constants, upon utilizing the instantaneous change-of-variable formula \citep{chen2018neural}. To ensure a smooth and regularized flow trajectory, the transport cost $\frac{1}{2\gamma}\|T(x;\theta)-x\|_2^2$ is also commonly used as a regularization term.
Hence, the only difference between training our \flowmodel{} and a transport-regularized CNF model lies in the expression of the \textit{first term} in \eqref{eq:distribution_min_proximal_T_ode}: our \flowmodel{} minimizes $\bE_{x \sim P}\left( V \circ T(x;\theta) \right)$, which is guided by $V$ dependent on the loss function $r$ and decision function $\phi$, while CNF minimizes the KL-divergence between $T(\cdot;\theta)_{\#} P$ and $P_Z$.

\subsection{Block-wise progressive training algorithm}\label{sec:flow_training}

We propose a block-wise progressive training algorithm of minimizing \eqref{eq:distribution_min_proximal_T_ode} with respect to the network parameters $\theta$. We build on the JKO-iFlow method in \citep{xu2023invertible}, originally developed for training normalizing flows. The convergence of JKO-type $\W_2$ proximal GD for learning a generative model (a special case when the loss function is the KL divergence between the data density and the multi-variate Gaussian distribution) is shown in \cite{JKOproof2023}.

Specifically, we would learn $K$ optimal transports block-wise, where the $k$-th transport $T(\cdot,\theta_k)$ is parameterized by $\theta_k$. After training, the final optimal transport map $T_{\rm final}$ is approximated by $T_K \circ \cdots \circ T_1$ for $T_k:= T(\cdot; \hat \theta_k)$ with trained parameters $\hat{\theta}_k$; here for two mappings $T_1, \,T_2: \calX \to \calX$,  $T_2 \circ T_1(x) = T_2 (T_1(x))$. To perform block-wise progressive training, we first train $\theta_1$ using \eqref{eq:distribution_min_proximal_T_ode} with the penalty parameter $\gamma=\gamma_1$. The expectation is taken over $x(0)\sim P$, the data distribution. Using the trained parameters $\hat{\theta}_1$, we could thus compute the push-forward distribution $P(1)=(T_1)_{\#} P$. This push-forward operation is done empirically by computing $x(1) = T_1(x(0)), x(0)\sim P$ using the first trained flow block. Then, we continue training $\theta_2$ using \eqref{eq:distribution_min_proximal_T_ode} with $\gamma=\gamma_2$, where the expectation is taken over $x(1)\sim P(1)$. In general, starting at $P(0)=P$, we are able to train the $(k+1)$-th block parameters $\theta_{k+1}$ with $\gamma=\gamma_{k+1}$ given previous $k$ blocks, where the expectation is taken over $x(k)\sim P(k)$.

This leads to a block-wise progressive training scheme of the proposed \flowmodel{}, as summarized in Algorithm \ref{alg:block_wise}. 
Note that the regularization parameters $\{\gamma_k\}$ indirectly control the amount of perturbation, which is represented by the radius $\varepsilon$ in the uncertainty set \eqref{eq:W2_uncertainty_set}. Smaller choices of $\gamma$ induce greater regularization and hence allow less perturbation of $P$ by the flow model, whereas larger choices of $\gamma$ impose less regularization on the amount of transport.
Regarding the specification of these regularization parameters, we note that the desired specification varies across different problems, but setting an even choice (i.e., $\gamma_k=\gamma$) or changing by a constant factor (i.e., $\gamma_k=c\gamma_{k-1}$ for $c>0$) typically work well in practice. To further improve the empirical performance, one can also consider adaptive step size using the time reparameterization technique \citep{xu2023invertible}, which is an attempt to encourage a more even amount of $\W_2$ transport cost by individual blocks.

\begin{wrapfigure}[10]{r}{0.55\textwidth}%
 \vspace{-0.35in}
\begin{minipage}{\linewidth}
\begin{algorithm}[H]
\caption{Block-wise progressive training of \flowmodel{}}
\label{alg:block_wise}
\setstretch{1.35}
\begin{algorithmic}[1]
\REQUIRE Regularization parameters $\{\gamma_k\}_{k=1}^{K}$, 
training data $\{x_i\}\sim P$ 
\FOR{$k=1,\ldots,K$}
\STATE Optimize parameters $\theta_k$ of $T(\cdot;\theta_k)$ by minimizing the sample average approximation (SAA) version of \eqref{eq:distribution_min_proximal_T_ode} using samples mapped through previous maps $\{T(\cdot;\hat{\theta}_i)\}_{i=1}^{k-1}$, and regularization parameter $\gamma_k$ by setting $\gamma=\gamma_k$.
\ENDFOR
\ENSURE $K$ trained flow blocks $\{T(\cdot;\hat{\theta}_k)\}_{k=1}^K$.
\end{algorithmic}
\end{algorithm}
\end{minipage}
\vspace{1in}
\end{wrapfigure}
The motivation of the progressive training scheme is to improve the end-to-end training of a single complicated block, especially when we allow a large $\W_2$ ball around $P$ (e.g., due to a large $\gamma$ in \eqref{eq:distribution_min_proximal_T_ode}.) Specifically, compared to training a single large model, Algorithm \ref{alg:block_wise} with multiple blocks helps reduce the memory and computational load because each small block has simpler architecture and is easier to train. This allows larger batch sizes and more accurate numerical ODE integrators when integrating $f(x(t),t;\theta_k)$ at block $k$. 
Furthermore, we note that the proposed \flowmodel{} is adaptive: one can always terminate after training a specific number of blocks, where termination depends on the current performance measured against some application-specific metric; \rev{in the case of assuming a small $\W_2$ ball around $P$, it could also be sufficient to terminate after training a single block.} 

We also discuss the computational complexity of Algorithm \ref{alg:block_wise}. We do so in terms of the number of function evaluations of the network $f(x(t),t;\theta)$ when computing \eqref{eq:distribution_min_proximal_T_ode}, as this is the most expensive step. Suppose at block $k$, we break the integral of $f(x(t),t;\theta_k)$ over $[0,1)$ into $S\geq 1$ smaller pieces $\{[t_i,t_{i+1})\}_{i=0}^{S-1}$. Let the integral on each piece be numerically estimated by the fixed-stage Runge-Kutta fourth-order (RK4) method \citep{suli2003introduction}. As a result, it takes $\mathcal O(4NS)$ evaluation of $f(x(t),t;\theta_k)$ on $N$ samples per block. The total computation is on the order of $\mathcal O(4SKN)$ when training $K$ blocks. Note that the overall computation is linear in the number of samples and thus scalable to large datasets.

\begin{figure}[!t]
    \centering
    \includegraphics[width=0.8\textwidth]{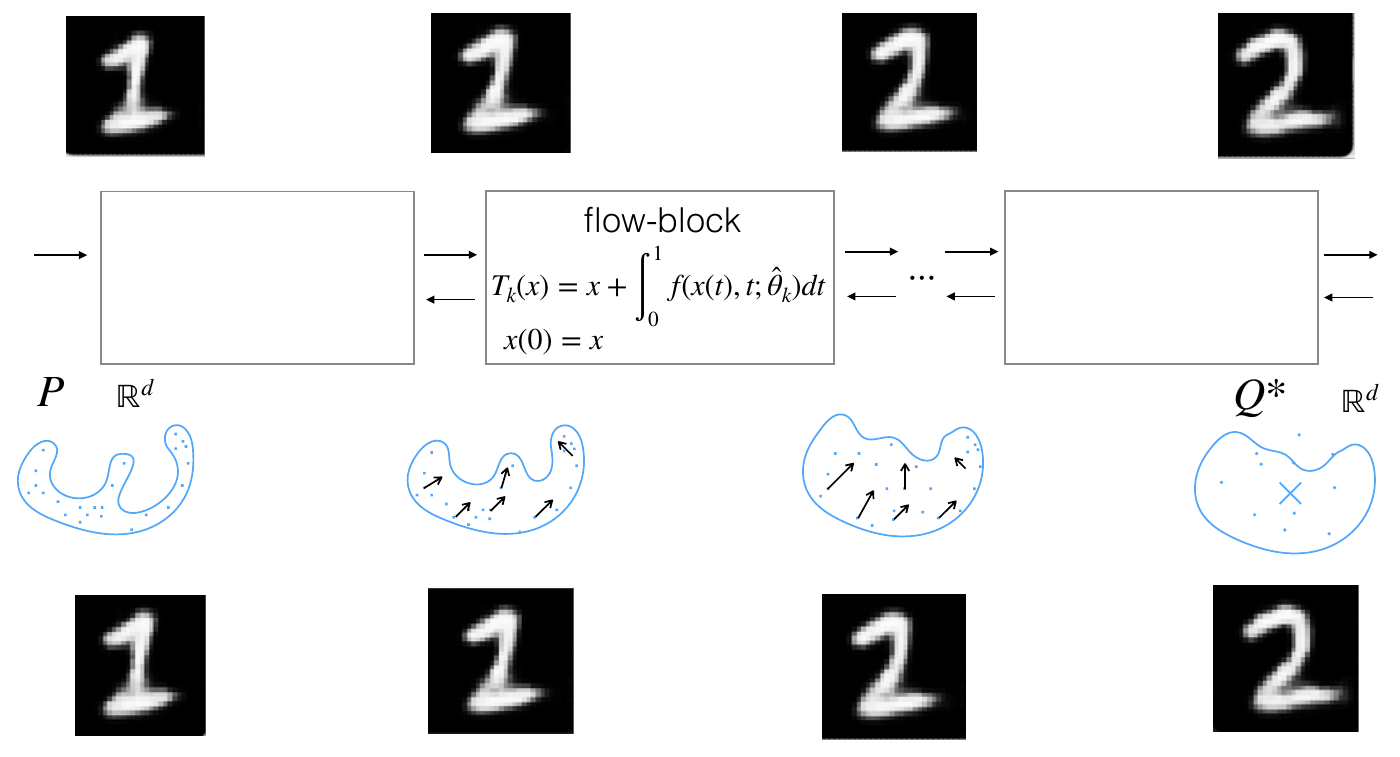}
    \caption{An illustration of the \flowmodel{} framework, which learns a sequence of invertible optimal transport maps that pushes the underlying population density $P$ to a target LFD $Q^*$; the maps are learned from finite training samples. The handwritten digits represent samples in each stage that show the gradual (continuous) transition of samples.}
    \label{fig:enter-label}
\end{figure}

\subsection{Generative model for sampling from LFD}\label{sec:adv_sampler}

We now show how \flowmodel{} can be conveniently used to generate samples from LFD. This means that we can generate samples from the worst-case distribution $Q^*$, which is found as the push-forward distribution by \flowmodel{}. Specifically, let $T_{\hat{\theta}}$ be a trained \flowmodel{} model composed of $K$ small blocks. Recall that $Q^*=(T_{\hat{\theta}})_{\#} P$ where $P$ is the data distribution. Therefore, generating samples from the LFD $Q^*$ is straightforward: one first obtains a new sample from $X\sim P$ and then computes $\tilde{X}=T_{\hat{\theta}}(X)\sim Q^*$. It remains to build a sampler for $X\sim P$. To do so, one can train an alternative generic flow model \citep{FFJORD,xu2023invertible} $T_{\rm gen}$ between $P$ and $P_Z$, where $P_Z$ is the standard multivariate Gaussian $\mathcal N(0, I_d)$, which is easy to sample from. 

As a result, we can build the sampler from LFD as $T_{\rm adv}=T_{\hat{\theta}} \circ T_{\rm gen}$. This means we can first sample from multivariate Gaussian $Z\sim P_Z$, propagate it through the generic generative model $T_{\rm gen}$ to obtain a sample from $P$, and then propagate the sample through the map $T_{\hat{\theta}}$ to obtain a sample from the LFD $Q^*$, i.e., $\tilde{X}=T_{\hat{\theta}}(T_{\rm gen}(Z)) \sim Q^*$. Figure \ref{fig:sampler} illustrates the idea.
%

\begin{wrapfigure}[8]{r}{0.45\textwidth}
    \centering
    \vspace{-0.1in}\includegraphics[width=.7\linewidth]{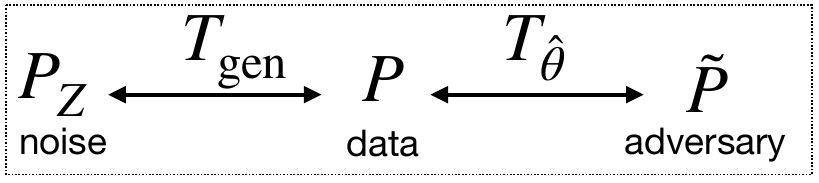}
    \caption{Construction of the proposed sampler from LFD. After training the proposed \flowmodel{} $T_{\hat{\theta}}$, we train a separate generic flow model $T_{\rm gen}$ to map between the noise distribution $P_Z$ (a multivariate Gaussian $\mathcal N(0, I_d)$) and the data distribution $P$. The full sampler $T_{\rm adv}=T_{\hat{\theta}} \circ T_{\rm gen}$.}
    \label{fig:sampler}
    \vspace{0.1in}
\end{wrapfigure}
Meanwhile, we can also perform conditional generation, which is useful for classification problems. Suppose $X=(X_{\rm sub}, Y)$ where $Y\in [C]$ is a discrete label for $X_{\rm sub}$. To generate $X_{\rm sub}$ with its corresponding $Y$, we can follow the suggestion in \citep{xu2022IGNN} to train $T_{\rm gen}$: let $P_{\rm sub}$ be the distribution of $X_{\rm sub}$. Then, train a flow model to map between $P_{\rm sub}|Y$ and $H|Y$, where $H|Y$ is a pre-specified Gaussian mixture in $\calP_2^r(\calX)$. Hence, we can sample $X_{\rm sub}$ with label $c\in [C]$ by sampling from the corresponding $H|Y=c$ and mapping through $T_{\rm gen}$. The sample $X_{\rm sub}$ can then be passed through $T_{\hat \theta}$ to get the sample with its corresponding label $c$ from LFD.

\rev{
\subsection{Iterative approach to solve min-max problem}\label{sec:min_max_algo}

\begin{wrapfigure}[10]{r}{0.55\textwidth}%
\vspace{-0.4in} 
\begin{minipage}{\linewidth}
\begin{algorithm}[H]
  \caption{\label{alg:min_max}
    Solving min-max problem using \flowmodel{}}
  \begin{algorithmic}[1]
    \REQUIRE 
    Regularization parameter $\gamma$, training data $\{x_i\} \sim P$, total iteration $N$, number of inner loops $N_{\rm inner}$.
    \FOR{$i=1,\ldots,N$}
    \STATE Optimize $T$ by minimizing the SAA of \eqref{eq:distribution_min_proximal_T_ode} for $N_{\rm inner}$ steps.
    \STATE Optimize $\phi$ by minimizing the SAA of $\cR(Q; \phi)$ defined in \eqref{eq:risk_in_expectation} over the LFD $Q=T_{\#} P$ for 1 step.
    \ENDFOR
    \ENSURE Trained models $(\hat{\phi}, \hat{T}).$
  \end{algorithmic}
\end{algorithm}
\end{minipage}
\end{wrapfigure}

While in this paper we focus on finding the LFD within the $\W_2$ ball around $P$ (i.e., solve problem \eqref{eq:problem-W2-trust-region}), we hereby propose an iterative scheme that solves the min-max problem \eqref{eq:DRO_general} that leads a pair of estimates $(\hat{\phi},\hat{Q})$. In the context of supervised learning (e.g., classification), the solution $\hat{\phi}$ denotes a predictor robust against unobserved perturbation over input data to $\hat{\phi}$.

The high-level idea is as follows. We start from the samplable data distribution $P$ and randomly initialized decision function $\phi$ and flow map $T$. We first update $T$ by minimizing \eqref{eq:distribution_min_proximal_T_ode} to find the LFD $Q=T_{\#}P$. We then update $\phi$ by minimizing the risk $\cR(Q; \phi)$ where $\cR$ is defined in \eqref{eq:risk_in_expectation}. We finally iterate the training of $T$ and $\phi$ for some number of steps until the training converges. The procedure is summarized in Algorithm \eqref{alg:min_max}. 

We further note the similarity and difference between Algorithm \ref{alg:min_max} and existing iterative DRO solvers (e.g., \citep[Algorithm 1]{sinha2018certifiable}). Both approaches iterate between finding the LFD and updating $\phi$ on samples from the LFD until convergence. The main difference lies in how the LFD is found. Our approach trains a continuous flow model $T$ whose push-forward distribution $T_{\#}P$ is the LFD. In contrast, \citep{sinha2018certifiable} solves the sample-wise LFD by iteratively moving inputs $x_i$ along the gradient $\nabla_{x} [r(x;\phi)-\frac{1}{2\gamma}\|x-x_i\|^2]$. We empirically show the benefit of our proposed flow-based approach in section \ref{sec:robust_classifier}.
}

\section{Applications}\label{sec:applications}

We consider several applications that can be formulated as DRO problems so that our proposed \flowmodel{} can be used to find the worst-case distribution.

\subsection{Adversarial learning with distributional attack}\label{subsec:adv_attack}

It has been widely known that state-of-the-art machine learning models are often adversarially vulnerable. Under small but carefully crafted perturbations to the inputs, the models can make severely wrong predictions on the adversarial examples \citep{Szegedy2014Intrigue,ijcai2021p591}. Adversarial training thus refers to the defense strategy in the clean training dataset augmented with adversarial examples, upon which retraining increases the robustness of the model on new adversarial examples. 

Finding suitable adversarial examples before retraining is a critically important step. Most methods, such as the widely-used FGSM \citep{goodfellow2015explaining} and PGD \citep{madry2018towards}, are based on the \textit{point-wise} attack. We can show that the solution of the $W_2$ trust-region problem \eqref{eq:problem-W2-trust-region} more effectively ``disrupts'' a fixed decision function $\phi$ than the solution induced by the transport map of point-wise attack. Specifically, let $\phi\in \Phi$ be a fixed decision function. Given $x \in \calX$, we define $\Tind:\mathbb R^d\rightarrow \mathbb R^d$ as the transport port associated with the following point-wise perturbation problem:
\begin{equation}\label{eq:transport_individual_attack}
    \Tind(x):=x+\delta^*_x, ~\delta^*_x=\arg\max_{\|\delta_x\|_2\leq \varepsilon} r(x+\delta_x, \phi).
\end{equation}
Denote $\QstarPoint=(\Tind)_{\#} P$ as the push-forward distribution by $\Tind$ on the data distribution $P$. Denote $\QstarDist$ as the solution of the $W_2$ trust-region problem \eqref{eq:problem-W2-trust-region}, which is our objective of interest. We thus have the following result in terms of reaching \textit{higher risk} under $\QstarDist$, the proof is in Appendix \ref{app:proof}.
\begin{proposition}\label{prop:R-dist}
    For a fixed decision function $\phi$, we have 
    $\cR(\QstarDist, \phi) \geq \cR(\QstarPoint,\phi)$.
\end{proposition}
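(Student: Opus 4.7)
The plan is to show that $\QstarPoint$ is a feasible point of the trust-region problem \eqref{eq:problem-W2-trust-region}, after which the conclusion follows immediately from the optimality of $\QstarDist$ over the entire Wasserstein ball.

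First I would verify that $\QstarPoint \in \calB_\varepsilon(P)$. Since $\QstarPoint = (\Tind)_\# P$, the map $(\mathrm{I_d}, \Tind)_\# P$ is a valid coupling between $P$ and $\QstarPoint$. Using this coupling in the Kantorovich definition \eqref{eq:deff-W2-Kantorovich} gives
\begin{equation*}
\W_2^2(\QstarPoint, P) \le \bE_{x \sim P} \| x - \Tind(x) \|^2 = \bE_{x \sim P} \|\delta_x^*\|^2 \le \varepsilon^2,
\end{equation*}
where the last inequality is the pointwise constraint $\|\delta_x^*\|_2 \le \varepsilon$ from \eqref{eq:transport_individual_attack}. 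Therefore $\QstarPoint$ lies in the Wasserstein ball $\calB_\varepsilon(P)$.

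Next, since $\QstarDist$ is by definition a maximizer of $\cR(\cdot; \phi) = \bE_{x \sim \cdot}[r(x;\phi)]$ over $\calB_\varepsilon(P)$ (equivalently, a minimizer of $\bE_{x \sim \cdot}[V(x)]$ in \eqref{eq:problem-W2-trust-region} with $V = -r(\cdot;\phi)$), and $\QstarPoint$ is feasible for that same optimization problem, the optimality of $\QstarDist$ yields
\begin{equation*}
\cR(\QstarDist, \phi) \ge \cR(\QstarPoint, \phi),
\end{equation*}
which is exactly the claim.

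There is no substantive obstacle: the entire argument rests on the observation that the pointwise $\ell_2$ constraint $\|\delta_x\|_2 \le \varepsilon$ is \emph{strictly stronger} than the $\W_2$ budget constraint (it bounds the transport cost sample-wise, whereas the Wasserstein constraint bounds only the average squared displacement), so the pointwise attack corresponds to a restricted optimization over the same risk functional. The only thing to be careful about is using the correct direction of the inequality since the paper switches between maximization of $r$ and minimization of $V=-r$, but this is purely bookkeeping.
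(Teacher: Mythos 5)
Your proof is correct and follows essentially the same route as the paper's: you exhibit $({\rm I_d}, \Tind)_\# P$ as a coupling to conclude $\W_2^2(\QstarPoint, P) \le \bE_{x\sim P}\|\delta_x^*\|^2 \le \varepsilon^2$, hence $\QstarPoint \in \calB_\varepsilon(P)$, and then invoke the optimality of $\QstarDist$ over the ball. No gaps.
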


Several works have also considered \textit{distributional} attacks on the input distribution to extend beyond point-wise attacks. For example, \citep{sinha2018certifiable} uses the Wasserstein distance to measure the difference between input and adversarial distribution. It then proposes to solve a Lagrangian penalty formulation of the distributional attack problem by stochastic gradient methods with respect to the inputs $x$. Additionally, \citep{bui2022a} shows the generality of such distributional attack methods by subsuming different point-wise attack methods under the distributional attack framework under a new Wasserstein cost function. While these works share the similar goal of solving for adversarial distributions, the proposed solutions do not solve for a continuous-time transport map as we intend to do, whose push-forward distribution of $P$ yields the worst-case distribution.

We now formally introduce the adversarial learning problem under the current DRO framework, using image classification as a canonical example \cite{goodfellow2015explaining}. Let $X=(X_{\rm img}, Y), X\sim P$ be an image-label pair with raw image $X_{\rm img}$ and its label $Y\in [C]$. The decision function $\phi$ is typically chosen as a $C$-class classifier taking $X_{\rm img}$ as the input, and the loss function $r(X,\phi)=-\log(\phi(X_{\rm img})_Y)$ is the cross-entropy loss. To find an alternative distribution $Q^*$ on which the risk is high, it is conventional to keep $Y$ the same and perturb the corresponding $X_{\rm img}$. Thus, for a given image-label distribution $P$, let $P_{\rm img} = \{X_{\rm img}: X=(X_{\rm img},Y), X\sim P\}$. As a result, the $W_2$ ball $\calB_\varepsilon(P)$ with radius $\varepsilon$ around the data distribution $P$ is defined as
\begin{equation}\label{eq:W2_ball_adv}
    \calB_\varepsilon(P) = \{Q \in \mathcal{P}_2(\calX): W_2^2(Q_{\rm img}, P_{\rm img})\leq \varepsilon^2\}.
\end{equation}
Let $\Phi$ be the set of $C$-class classifiers on images $X_{\rm img}$. The DRO problem under $\calB_\varepsilon(P)$ in \eqref{eq:W2_ball_adv} is
\begin{equation}\label{eq:dro_adversarial}
\min_{\phi \in \Phi} \max_{Q \in \calB_\varepsilon(P)} ~\mathbb{E}_{X\sim Q}[-\log (\phi(X_{\rm img})_Y)].
\end{equation}

\subsection{Robust hypothesis testing}

The goal of hypothesis testing is to develop a detector which, given two hypotheses $H_0$ and $H_1$, discriminates between the hypotheses using input data while reaching a small error probability. In practice, true data distribution often deviates from the assumed nominal distribution, so one needs to develop robust hypothesis testing procedures to improve the detector's performance. The seminal work by \citep{huber1965robust} considers the problem of using $\epsilon$-contamination sets, which are all distributions close to the base distributions in total variation. Later, \citep{levy2008robust} considers uncertainty sets under the KL-divergence and develops robust detectors for one-dimensional problems. More recently, \citep{gao2018robust} developed data-driven robust minimax detectors for non-parametric hypothesis testing, assuming the uncertainty set is a Wasserstein ball around the empirical distributions. In addition, \citep{wang2022data} derives the optimal detector by considering Sinkhorn uncertainty sets around the empirical distributions. Compared to robust detectors under Wasserstein uncertainty sets, the Sinkhorn-based method is applicable even if the test samples do not have the same support as the training samples.

We follow the notations in \citep{gao2018robust} to introduce the problem. Given data $X\in \Omega$, we test between $H_0: X\sim Q_0, Q_0 \in \calB_{0,\varepsilon}(P_0)$ and $H_1: X\sim Q_1, Q_1 \in \calB_{1,\varepsilon}(P_1)$, where $\calB_{i,\varepsilon}(P_i)$ denotes the $W_2$ ball of radius ${\varepsilon}$ as in \eqref{eq:W2_uncertainty_set} around the corresponding data distribution $P_i$.
Then, we find a measurable scalar-valued detector $\phi: \Omega \rightarrow \mathbb{R}$ to perform the hypothesis test. Specifically, for a given observation $X \in \Omega$, $\phi$ accepts $H_0$ and rejects $H_1$ whenever $\phi(X) <0$ and otherwise rejects $H_0$ and accepts $H_1$.
In this problem, the risk function $\cR((Q_0, Q_1),\phi)$ is defined to provide a convex upper bound on the sum of type-I and type-II errors. Specifically, consider a so-called \textit{generating function} $f$ that is non-negative, non-decreasing, and convex. The risk is thus defined as
\begin{equation}\label{eq:robust_hypo_test_risk}
    \cR((Q_0,Q_1),\phi)=\mathbb{E}_{x\sim Q_0}[f\circ (-\phi)(x)]+\mathbb{E}_{x\sim Q_1}[f\circ \phi(x)].
\end{equation}
Examples of the generating function $f$ to defined \eqref{eq:robust_hypo_test_risk} include $f(x)=\exp(t)$, $f(x)=\log(1+\exp(t))$, $f(x)=(t+1)^2_+$, and so on. 
As a result of $\mathcal{R}$ in \eqref{eq:robust_hypo_test_risk}, the robust hypothesis testing can be formulated as the following DRO problem
\begin{equation}\label{eq:robust_hypo_test_problem}
    \min_{\phi: \Omega\rightarrow \mathbb{R}} ~ \max_{Q_i\in \calB_{i,\varepsilon}(P_i), i=0,1} ~\mathbb{E}_{x\sim Q_0}[f\circ (-\phi)(x)]+\mathbb{E}_{x\sim Q_1}[f\circ \phi(x)].
\end{equation}
Solving the inner maximization of \eqref{eq:robust_hypo_test_problem} requires finding a pair of worst-case distributions $Q^*_0$ and $Q^*_1$. However, using the change-of-measure technique \citep[Theorem 2]{gao2018robust}, we can solve an equivalent problem of finding $Q^*$ within a $W_2$ ball round the data distribution $P=P_1+P_2$ to fit our original formulation \eqref{eq:DRO_general}.

\subsection{Differential privacy}

Established by \citep{dwork2006calibrating, dwork2006our}, differential privacy (DP) offers a structured method to measure how well individual privacy is secured in a database when collective data insights are shared as answers to the query.
In short, DP upholds robust privacy assurances by ensuring that it is nearly impossible to determine an individual's presence or absence in the database from the disclosed information.
These can be realized by introducing random perturbations to the query function output before release.

To be more precise, consider datasets $D, D' \in \mathcal D ^n$ where each consists of $n$ rows, and $\mathcal D$ is the space where each datum lies. We say $D$ and $D'$ are neighboring datasets if they differ in exactly a single element (i.e., in the record of one individual), and we denote $D \simeq D'$.
An output of the query function $q: \mathcal D^n \to \Omega$ is given based on the dataset.
A randomized mechanism $M: \mathcal D^n \to \Omega$, which maps a dataset to a random output under the probability space $(\Omega, \mathcal F, \mathbb P)$, imparts randomness to the answer to the query by perturbing $q(D)$.
Differentially private randomized mechanisms secure privacy by ensuring that the outputs of $M$ from neighboring datasets are nearly indistinguishable.

The most represented standard for DP is $(\epsilon, \delta)$-DP \citep{dwork2006our} (without causing confusion, here $\epsilon$ is not related to the radius of the uncertainty set $\varepsilon$). Given $\epsilon, \delta \ge 0$, a randomized mechanism $M$ is $(\epsilon, \delta)$-differentially private, or $(\epsilon, \delta)$-DP, if
$$
    \mathbb{P}(M(D) \in A) \le e^\epsilon \mathbb{P}(M(D') \in A) + \delta
$$
for any $D \simeq D' \in \mathcal D^n$ and $A \in \mathcal F$. When $\delta = 0$, we simply say that $M$ is $\epsilon$-DP. 
Besides, numerous variants of DP with rigorous definitions such as $f$-DP \citep{dong2022gaussianDP}, Renyi DP \citep{mironov2017renyi}, and Concentrated DP \citep{dwork2016concentrated} have been established and studied; for a comprehensive overview, see \cite{desfontaines2020sok}.

The randomized mechanisms exhibit a clear trade-off: the more they secure privacy, the more they sacrifice statistical utility \citep{alvim2012differential}. Therefore, the constant focus of research has been to design mechanisms that minimize the perturbation and thus the loss of utility (based on specific criteria such as $l_p$ cost) while ensuring a certain level of privacy.
Below, we borrow the notion of DP to conceptualize the design of a privacy protection mechanism as a DRO problem and propose the potential applicability of our \flowmodel{} as a data-dependent distributional perturbation mechanism.

DP can be understood as a hypothesis-testing problem \citep{wasserman2010statistical, kairouz2015composition, balle2020hypothesis, dong2022gaussianDP}. Consider an adversary trying to differentiate between neighboring datasets $D$ and $D'$ based on the mechanism output. In this context, the hypothesis testing problem of interest is
\begin{equation}\label{eq:dp_hypotheses}
    H_0: X \stackrel{d} = M(D) \sim Q_0 \quad {\rm vs.} \quad H_1: X \stackrel{d} = M(D') \sim Q_1
\end{equation}
where $X \in \Omega$ is a single perturbed observation. The harder this test is, the more difficult it is to distinguish between neighboring datasets, which implies that strong privacy is ensured. Consider testing \eqref{eq:dp_hypotheses} with a decision function $\phi: \Omega \to [0,1]$, and denote the type-I and type-II errors as $\alpha_\phi = \mathbb E_{X \sim Q_0} \phi(X)$ and $\beta_\phi = \mathbb E_{X \sim Q_1} (1 - \phi(X))
$. Then, a mechanism is $(\epsilon, \delta)$-DP if and only if $\alpha_\phi + e^\epsilon \beta_\phi \ge 1 - \delta$ and $e^\epsilon \alpha_\phi + \beta_\phi \ge 1 - \delta$ for any $D \simeq D'$ and decision function $\phi$ that is a deterministic function of $X$ [\citealp[Theorem 2.4]{wasserman2010statistical}; \citealp[Theorem 2.1]{kairouz2015composition}].

Now, we first set up an optimization problem with the risk function measuring indistinguishability between $Q_0$ and $Q_1$ in \eqref{eq:dp_hypotheses}, given the restricted level of perturbation and the neighboring datasets $D$ and $D'$.
Consider a risk function $\cR((Q_0, Q_1), \phi)$ representing the ease of \eqref{eq:dp_hypotheses} with a decision function $\phi: \Omega \to [0,1]$.
To ensure strong privacy with a randomized mechanism, even in the ``worst-case scenario'' with a powerful discriminator, one should make it difficult to distinguish $Q_0$ and $Q_1$ by bringing the two distributions closely together, thereby reducing the risk function.
Hence, finding such a pair of indistinguishable distributions with perturbation levels controlled by the Wasserstein-2 distance reduces to
\begin{equation}\label{eq:dp_formulation}
    \min_{Q_i \in \mathcal{B}_{i,\varepsilon}(P_i), i=0,1} \max_{\phi \in \Phi} ~\mathcal R((Q_0,Q_1), \phi)
\end{equation}
where $\mathcal{B}_{i,\varepsilon}(P_i)$ denotes the $\calW_2$ ball of radius $\varepsilon$ as in \eqref{eq:W2_uncertainty_set} around the corresponding data distribution $P_i$.

In this context, the risk function can be chosen based on which measure reflects the indistinguishability of outputs from neighboring datasets.
For instance, under the $f$-DP criterion, one must first consider the most powerful test for a given level $\alpha$: the decision function that minimizes $\beta_\phi$. The corresponding problem is formulated as finding $\min_\phi \, \beta_\phi$ subject to $\alpha_\phi \le \alpha$.
Therefore, using the Lagrange multiplier and the change-of-measure technique, our DRO formulation \eqref{eq:dp_formulation} becomes
\begin{equation}\label{eq:f-DP_formulation}
    \max_{Q_i \in \mathcal{B}_{i,\varepsilon}(P_i), i=0,1} \min_\phi \max_{\lambda \ge 0} -\mathbb E_{x \sim Q_0 + Q_1} \left[ \frac{d Q_1}{d (Q_0 + Q_1)}[x] \phi(x) - \lambda \left( \frac{d Q_0}{d (Q_0 + Q_1)}[x] \phi(x) - \alpha \right) \right].
\end{equation}
In our experiments, we will use $\alpha_\phi$ and $\beta_\phi$ as performance measures by replacing them with sample average approximations.

The conventional and straightforward method to privatize a query function is to apply a calibrated additive noise. In this case, the $i$-th uncertainty set in \eqref{eq:dp_formulation} is $\mathcal{B}_{i,\varepsilon}(P_i)=\{Q_i: M(D) \sim Q_i, M(D)=q(D)+\xi_i, q(D) \sim P_i, D \in \mathcal D^n\}$, where $\xi_i$ with $\mathbb{E}\|\xi_i\|_2\leq \varepsilon$ is a random noise following certain distributions from a specific family.
We call such a mechanism that adds noise of a certain distribution an \textit{additive perturbation mechanism} (APM). 
Typical noise distributions used in APM include the Laplace \citep{dwork2006calibrating} and Gaussian distributions \citep{dwork2014algorithmic}.

In contrast, based on the formulation \eqref{eq:dp_formulation}, we aim to introduce distributional perturbation with our \flowmodel{} to provide a more flexible mechanism. Consequently, we want to ensure the mechanism outputs are indistinguishable with less perturbation than additive mechanisms. We refer to the corresponding mechanism as the \textit{distributional perturbation mechanism} (DPM) and illustrate its comparison with APM in Figure \ref{fig:APM_vs_DPM}. 
We remark that the proposed \flowmodel{} allows the DPM to apply an arbitrary amount of perturbation to the original distribution of queries. Thus, we can apply DPM at arbitrary precision by controlling the perturbation to satisfy the privacy constraints with reasonable utility.

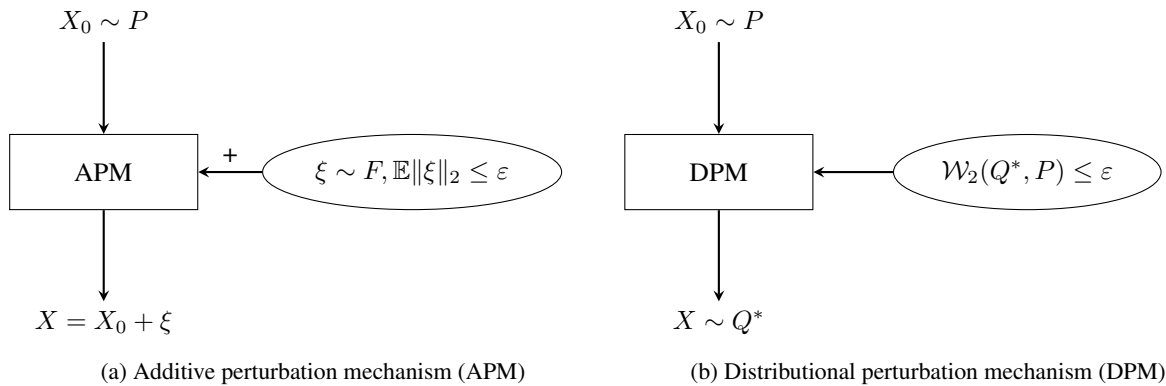
\begin{figure}[t!]
    \centering
    \begin{minipage}{0.49\textwidth}
        \begin{tikzpicture}[node distance=2cm]
            \node (in) [] {$X_0 \sim P$};
            \node (mech) [rect, below of=in] {APM};
            \node (add_noise) [elli, right of=mech, xshift=2.1cm] {$\xi \sim F, \mathbb{E}\|\xi\|_2\leq \varepsilon$};
            \node (out) [below of=mech] {$X = X_0 + \xi$};
            
            \draw [arrow] (in) -- (mech);
            \draw [arrow] (add_noise) -- node[anchor=south] {+} (mech);
            \draw [arrow] (mech) -- (out);
        \end{tikzpicture}
        \subcaption{Additive perturbation mechanism (APM)}
    \end{minipage}
    \begin{minipage}{0.49\textwidth}
        \begin{tikzpicture}[node distance=2cm]
            \node (in) [] {$X_0 \sim P$};
            \node (mech) [rect, below of=in] {DPM};
            \node (dist_pert) [elli, right of=mech, xshift=2.1cm] {$\calW_2(Q^*, P) \le \varepsilon$};
            \node (out) [below of=mech] {$X \sim Q^*$};
            
            \draw [arrow] (in) -- (mech);
            \draw [arrow] (dist_pert) -- (mech);
            \draw [arrow] (mech) -- (out);
        \end{tikzpicture}
        \subcaption{Distributional perturbation mechanism (DPM)}
    \end{minipage}
    \caption{Comparison between APM and DPM for differential privacy. APM adds random noises $\xi$ \textit{independently} to queries, whereas DPM (through the use of proposed \flowmodel{}) considers the data distribution $P$ defined over \textit{all} queries to find a worst-case distribution $Q^*$ within $\calB_{\varepsilon}(P)$.}
    \label{fig:APM_vs_DPM}
\end{figure}

\section{Numerical Examples}\label{sec:experiments}

We conduct experiments to examine the effectiveness of \flowmodel{} on high-dimensional data. First, in section \ref{sec:expr_vs_dro}, we compare our proposed \flowmodel{} with existing DRO methods to solve robust hypothesis testing problems and train robust classifiers. Then, in section \ref{expr:adversarial}, we use \flowmodel{} to perform the adversarial attack on pre-trained image classifiers and compare against existing point-wise attack methods. In section \ref{expr:dp}, we use \flowmodel{} as the DPM in differential privacy settings and compare it against APM under different noise distribution specifications. In all examples of finding the LFD, we assume the decision function $\phi$ is pre-trained on the data distribution $P$ and fixed, so the goal is to find the worst-case distribution $Q^*\in \calB_\varepsilon(P)$ defined in \eqref{eq:W2_uncertainty_set} and compare what \flowmodel{} found against that by other methods. Code is available on \url{https://github.com/hamrel-cxu/FlowDRO}.

\subsection{Comparison with existing DRO methods}\label{sec:expr_vs_dro}

We first compare the proposed \flowmodel{} in Algorithm \ref{alg:block_wise} against WDRO \citep{xie2021robust} in finding LFD. We then compare the DRO solver \ref{alg:min_max} against the Wasserstein Robust Method (WRM) \citep{sinha2018certifiable}. \rev{Further details of the experiments are in Appendix \ref{app:dro_compare}.}

\subsubsection{Finding LFD}

\begin{figure}[h]
    \begin{minipage}{0.48\linewidth}
        \includegraphics[width=\linewidth]{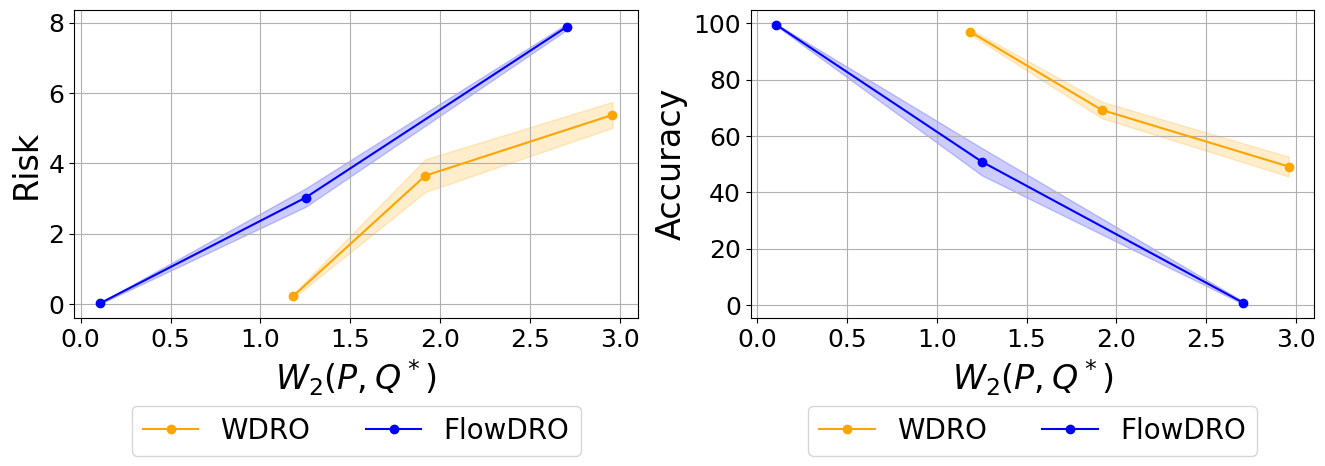}
        \subcaption{Training size $n=200$}
    \end{minipage}
    \begin{minipage}{0.48\linewidth}
        \includegraphics[width=\linewidth]{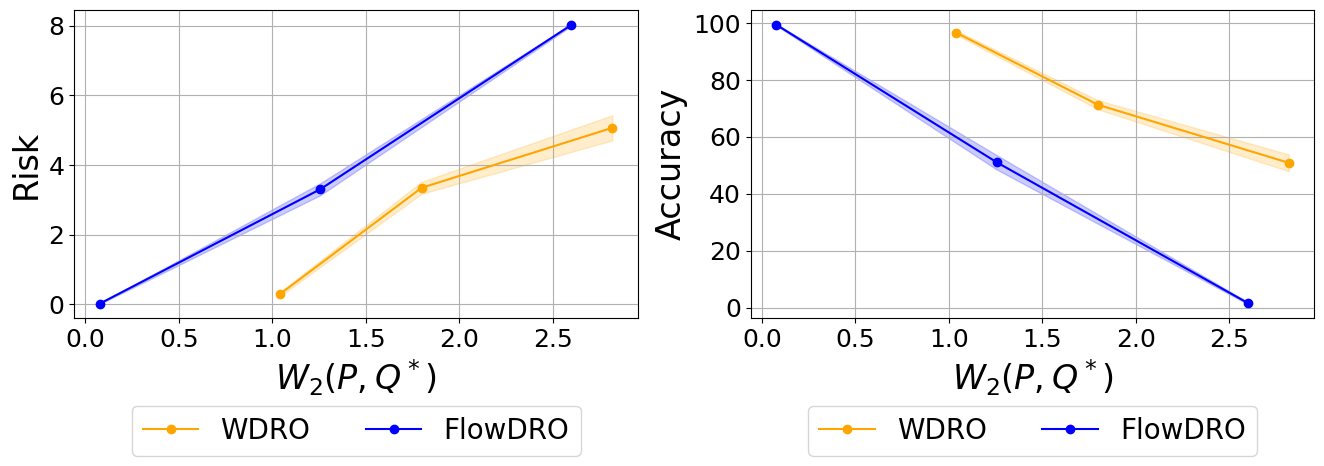}
        \subcaption{Training size $n=500$}
    \end{minipage}
    \caption{Test performance of a pre-trained MNIST classifier $\phi$ on LFDs $Q^*$ of binary MNIST digits; the higher the risk, and the lower the accuracy, the better, meaning we have achieved a more effective attack for the same amount of Wasserstein-2 perturbation from the nominal distribution. WDRO and \flowmodel{} find the LFDs within different $W_2$ balls around $P$, which consists of $n$ training data from MNIST. The empirical $W_2$ distances upon solving the earth moving distance between $P$ and $Q^*$ are shown on the $x$-axis.}
    \label{fig:wdro_results}
\end{figure}

We consider binary MNIST digits from classes 0 and 8 as an example. Given a pre-trained CNN classifier, the goal is to find the LFD around the original digits. We measure the effectiveness of the LFD according to how the pre-trained classifier performs: the found LFD is more effective if, at the same level of $W_2$ perturbation, the classifier reaches a lower test prediction accuracy and a higher test risk on samples from that LFD.

Figure \ref{fig:wdro_results} shows the test risk and accuracy of the pre-trained classifier on the LFDs obtained by WDRO and \flowmodel{}. We see that compared to WDRO, our proposed \flowmodel{} finds more effective LFDs with the same or even smaller \textit{budget}, which is measured as the empirical $W_2$ distance between $P$ and $Q^*$, the found LFD. The benefit of \flowmodel{} holds both small ($n=200$) and large ($n=500$) sample sizes.

\rev{
\subsubsection{Training robust classifiers}\label{sec:robust_classifier}
We consider both MNIST digits and CIFAR10 images as examples. The goal is to train a robust classifier $\phi$ so that when test images are attacked by PGD under $\ell_p$ norms, the classifier can defend against such attacks by incurring a small classification error. Hence, one classifier is more robust than another when it reaches a smaller classification error on the same set of attacked test images.

Figure \ref{fig:robust_CNN} shows test errors by robust classifiers trained via WRM \citep{sinha2018certifiable} and via our proposed \textit{flow robust method} (FRM) in Algorithm \ref{alg:min_max}. We can see that for small attacks (for instance, when the attack budget is below 0.2), our method is slightly better than WRM (except for one case of CIFAR-10 binary class). However, for ``higher'' attacks, our FRM shows significantly better performance. The experiments show the effectiveness of our methods in obtaining an overall more robust classifier.

\begin{figure}[!t]
    \begin{minipage}{0.48\linewidth}
        \includegraphics[width=\linewidth]{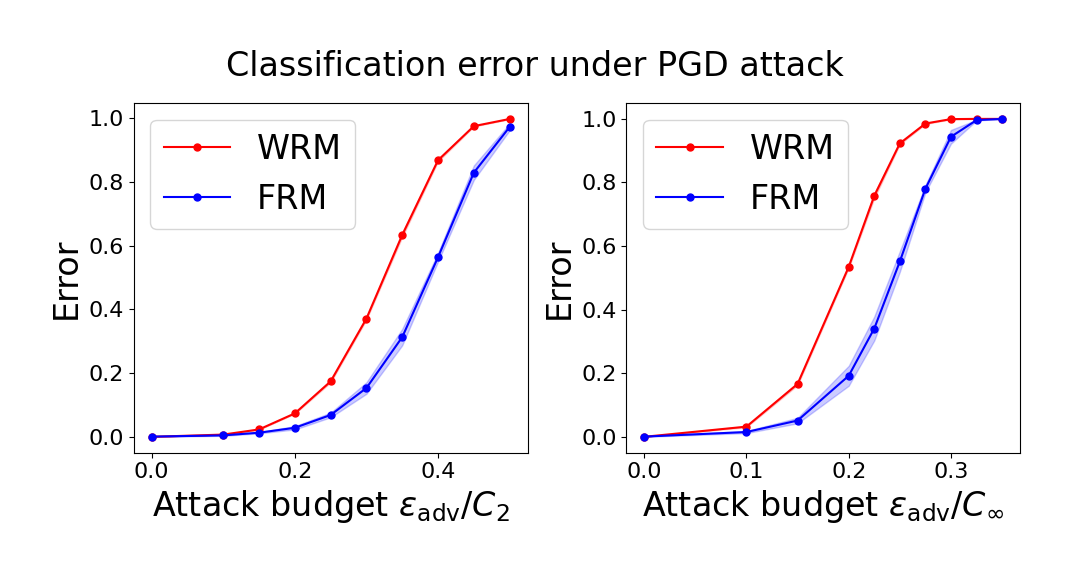}
        \subcaption{Results on binary-class MNIST digits}
    \end{minipage}
    \begin{minipage}{0.48\linewidth}
        \includegraphics[width=\linewidth]{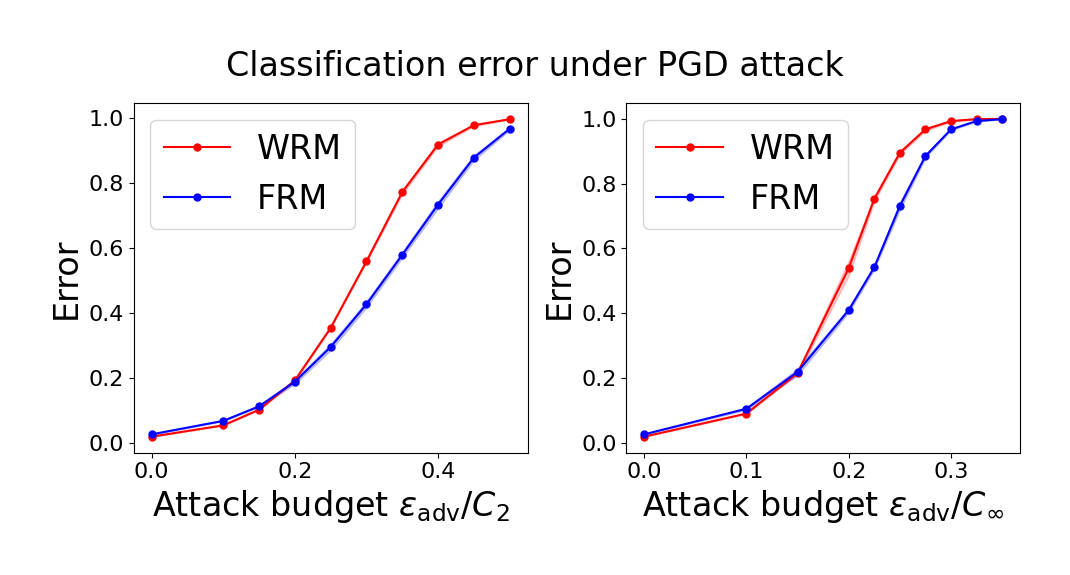}
        \subcaption{Results on full MNIST digits}
    \end{minipage}
    
    \begin{minipage}{0.48\linewidth}
        \includegraphics[width=\linewidth]{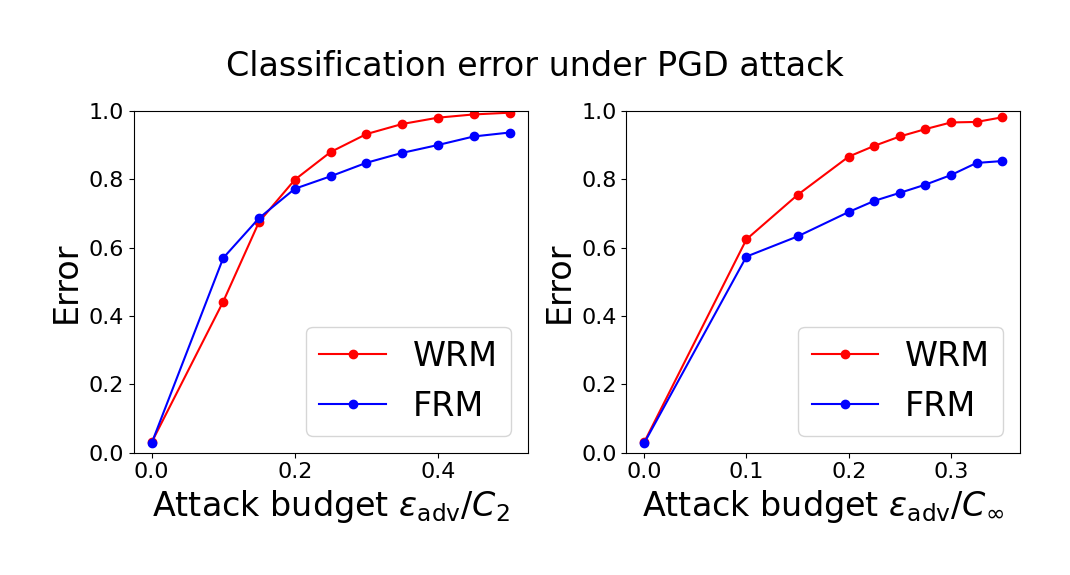}
        \subcaption{Results on binary-class CIFAR10 images}
    \end{minipage}
    \begin{minipage}{0.48\linewidth}
        \includegraphics[width=\linewidth]{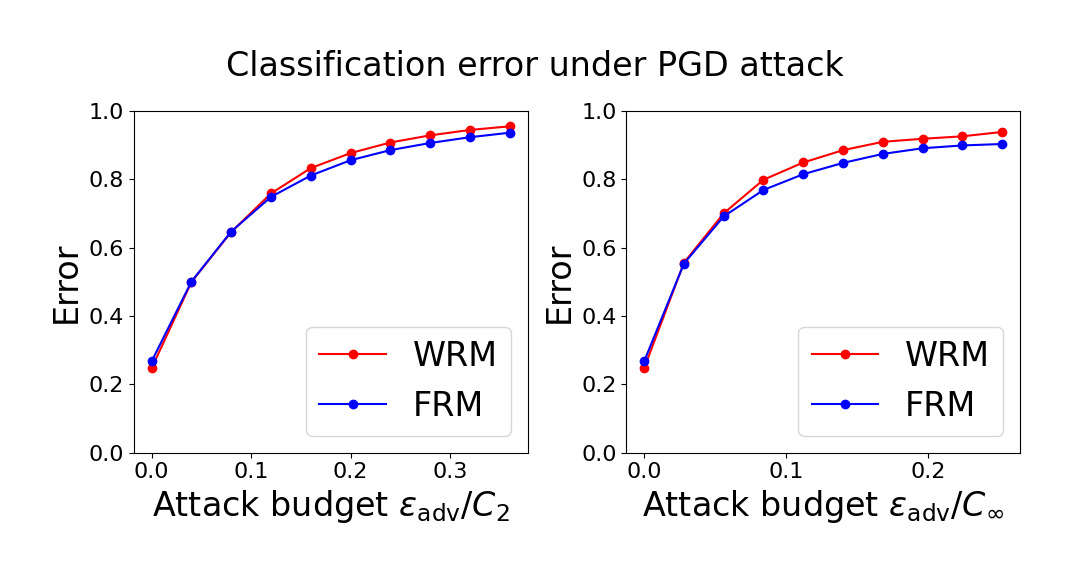}
        \subcaption{Results on full CIFAR10 images}
    \end{minipage}
    \caption{Test classification error of robust classifiers on test data attacked by PGD under $\ell_2$ and $\ell_{\infty}$ norm. The lower the error, the better, meaning we have achieved a more robust algorithm at the same attack budget. The robust classifiers are trained via solving the DRO problem using WRM \citep{sinha2018certifiable} and FRM (ours in Algorithm \ref{alg:min_max}). The binary classification results are for two randomly selected classes out of ten. The attack budget on the x-axis denotes the $\ell_p$ norm between raw and PGD-attacked test data as a fraction of $C_p$, the $\ell_p$ norm of raw test data. 
    }
    \label{fig:robust_CNN}
\end{figure}
}

\subsection{Adversarial distributional attack}\label{expr:adversarial}

We consider two sets of experiments in this section. The first example finds the distributional perturbation of CIFAR-10 images by \flowmodel{}, where we compare the effectiveness of our distributional attack against the widely-used projected gradient descent (PGD) baselines under $\ell_2$ and $\ell_{\infty}$ perturbation \citep{madry2018towards}. The second example finds the distributional perturbation of MNIST digits by \flowmodel{}. 

\subsubsection{CIFAR10 against point-wise attacks} We describe the setup, introduce the comparison metrics, and present the comparative results.

\begin{table}[!b]
    \centering
    \caption{Risk and accuracy of a pre-trained VGG-16 classifier $\phi$ on clean test data distribution $P_{\rm test}$ and adversarially perturbed data distribution $Q^*_{\rm test}$ by \flowmodel{} and by PGD under $\ell_2$ and $\ell_{\infty}$ perturbation. For a fair comparison, we control the same amount of $\ell_2$ perturbation on the test distribution by different attackers.}
    \label{tab:adv_compare}
    \setstretch{1.2}
    \resizebox{0.9\textwidth}{!}{%
        \begin{tabular}{c|c|c|c|c}
            \hline
             & Clean data & Attack by FlowDRO & Attack by PGD-$\ell_{2}$ & Attack by PGD-$\ell_{\infty}$ \\
            \hline
            Risk of $\phi$ in \eqref{eq:adv_attack_risk} & 2.03 & 32.32 & 6.22 & 10.51 \\
            Accuracy of $\phi$ in \eqref{eq:adv_attack_accu}& 87.02 & 24.22 & 61.44 & 41.57 \\
            \hline
        \end{tabular}}
\end{table}

\vspace{0.1in}
\noindent \textit{Setup.} Given a pre-trained image classifier $\phi$ and a test image $X_{\rm test, \rm img}$ with labels $Y_{\rm test}$, the goal of adversarial attack as introduced in section \ref{subsec:adv_attack} is to find a perturbed image $\tilde{X}_{\rm test, \rm img}$ of $X_{\rm test, \rm img}$ so that $\phi$ makes an incorrect classification on image $\tilde{X}_{\rm test, \rm img}$. For this task, instead of performing a point-wise attack given individual $X_{\rm test, \rm img}$, our \flowmodel{} finds a continuous flow $T$ that gradually transports the distribution of raw images to an adversarial worst-case distribution, on which the classifier $\phi$ makes incorrect classification and induces high classification losses. 

Regarding training specifics, we pre-train a VGG-16 classifier \citep{vgg16} $\phi$ with cross-entropy loss on the set of clean CIFAR-10 images, and then train three \flowmodel{} flow blocks with $\gamma_k\equiv10$ using Algorithm \ref{alg:block_wise}. We train \flowmodel{} in the latent space of a variational auto-encoder as proposed by \citep{esser2021taming}, where the latent space dimension $d=192.$ 
The architecture of the \flowmodel{} model on CIFAR10 consists of convolutional layers of \texttt{3-128-128-256}, followed by convolutional transpose layers of \texttt{256-128-128-3}. The kernel sizes and strides in the CIFAR10 attacker are \texttt{3-3-3-3-4-3} and \texttt{1-2-1-1-2-1}. We use the softplus activation with $\beta=20$. 
Each block is trained for 15 epochs using a batch size of 500, with the Adam optimizer \citep{Adam} with a constant learning rate of 1e-3.

\begin{figure}[!t]
    \centering
    \begin{minipage}{0.49\textwidth}
        \centering
        \includegraphics[width=\linewidth]{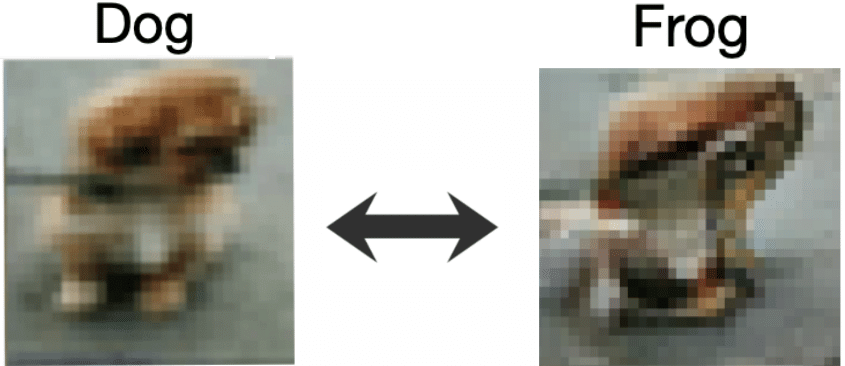}
        \includegraphics[width=\linewidth]{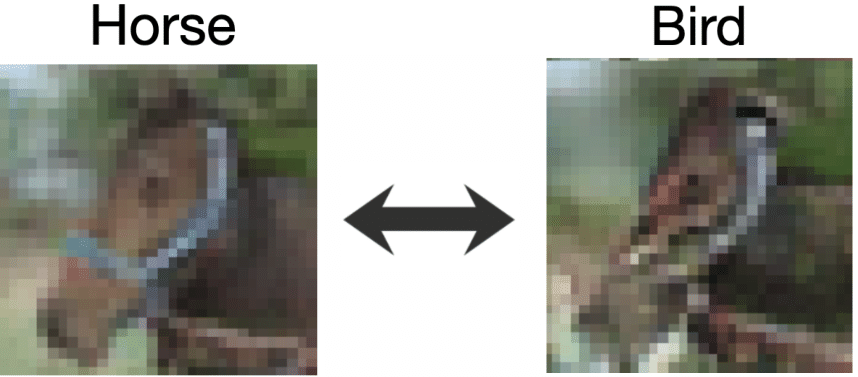}
        \includegraphics[width=\linewidth]{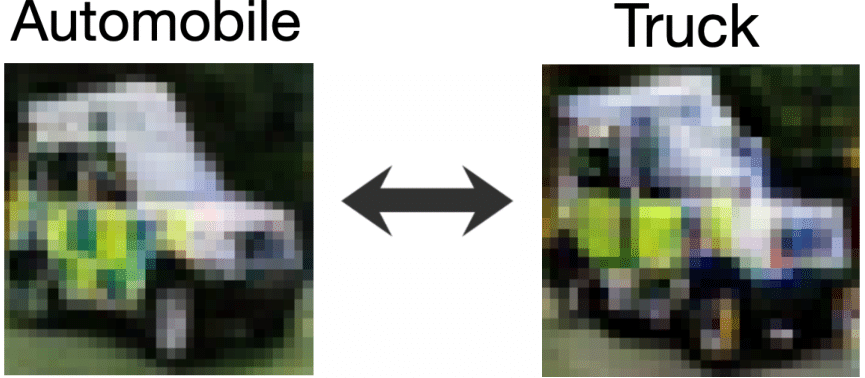}
        \subcaption{Raw (left) \& adversarial (right) samples by \flowmodel{}}
    \end{minipage}
    \begin{minipage}{0.485\textwidth}
        \centering
        \includegraphics[width=\linewidth]{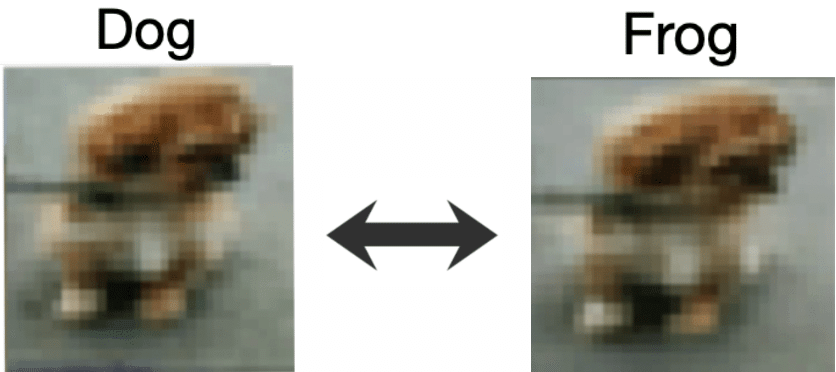}
        \includegraphics[width=\linewidth]{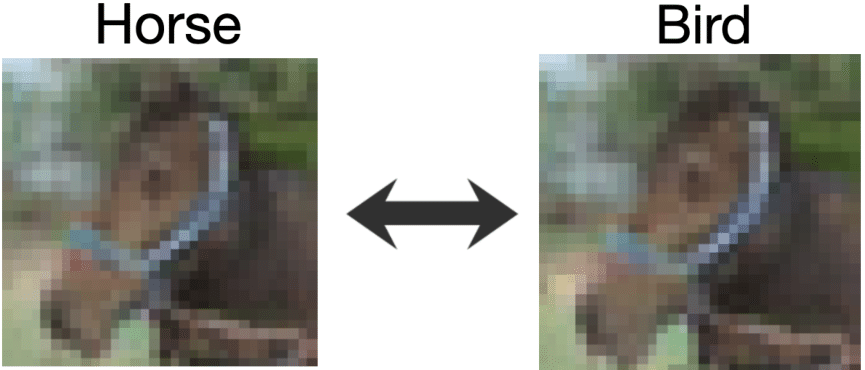}
        \includegraphics[width=\linewidth]{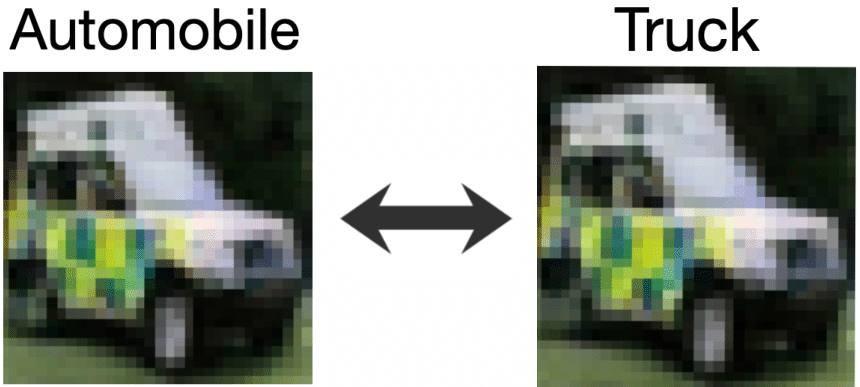}
        \subcaption{Raw (left) \& adversarial (right) samples by PGD-$\ell_2$}
    \end{minipage}
    \caption{Raw and adversarial samples found by \flowmodel{} and by PGD-$\ell_2$. Captions show prediction by the pre-trained classifier $\phi$ on raw input images $X_{\rm test, \rm img}$ before attack and adversarial samples $\tilde{X}_{\rm test, \rm img}$ after attack. \flowmodel{} results in more meaningful contextual changes in the raw images.}
    \label{fig:cifar10_changes}
\end{figure}

\vspace{0.1in}
\noindent \textit{Comparison metric.} Denote $P_{\rm test}$ as the distribution of raw image-label pairs in the test set. We evaluate the effectiveness of adversarial attack by \flowmodel{} and PGD on the pre-trained classifier $\phi$. Specifically, given test images $X_{\rm test, \rm img}$ with labels $Y_{\rm test}$, we find adversarial samples $\tilde{X}_{\rm test, \rm img}$ using different attack mechanisms, where we fix \textit{identical} amounts of $\ell_2$ perturbation measured by $\mathbb{E}_{X_{\rm test}\sim P_{\rm test}}\|X_{\rm test, \rm img}-\tilde{X}_{\rm test, \rm img}\|^2$ to ensure a fair comparison. Then, given $Q^*_{\rm test}$ defined by the set of $(\tilde{X}_{\rm test, \rm img}, Y_{\rm test})$, we evaluate $\phi$ on $Q^*_{\rm test}$ based on the sample average of
\begin{align}
    \cR(Q^*_{\rm test}, \phi) & = \mathbb{E}_{X\sim Q^*_{\rm test}}[r(\phi(X_{\rm img}),Y)], \label{eq:adv_attack_risk}\\
    \text{Accurcy}(Q^*_{\rm test}, \phi) &= \mathbb{E}_{X\sim Q^*_{\rm test}}[100\cdot\mathbbm{1}(Y=\arg\max_j \phi(X_{\rm img})_j)].\label{eq:adv_attack_accu}
\end{align}
Hence, under the same amount of perturbation to find $Q^*_{\rm test}$, a higher risk \eqref{eq:adv_attack_risk} and a lower accuracy \eqref{eq:adv_attack_accu} indicates a more effective adversarial attack on $\phi$. We also evaluate \eqref{eq:adv_attack_risk} and \eqref{eq:adv_attack_accu} on the clean test data distribution $P_{\rm test}$ for reference.

\vspace{0.1in}
\noindent \textit{Results.} Table \ref{tab:adv_compare} quantitatively compares the risk and accuracy of the pre-trained classifier $\phi$ on CIFAR10. We notice that under the same amount of $\ell_2$ perturbation between raw and perturbed images, $\phi$ on the adversarial distribution found by \flowmodel{} yields significantly larger risk and lower accuracy. Hence, we conclude that \flowmodel{} performs much more effective attacks than the PDG baselines. Meanwhile, Figure \ref{fig:cifar10_changes} visualizes the qualitative changes to test images $X_{\rm test, \rm img}$ by \flowmodel{} and PGD, where the proposed \flowmodel{} also induces more meaningful contextual changes to the input image. Lastly, Figure \ref{fig:cifar10_trajectory} visualizes the gradual changes of $X_{\rm test, \rm img}$ over blocks and their integration steps by \flowmodel{}, demonstrating the continuous deformation by our trained flow model on test images $X_{\rm test, \rm img}$.

\begin{figure}[!t]
    \centering
    \includegraphics[width=\textwidth]{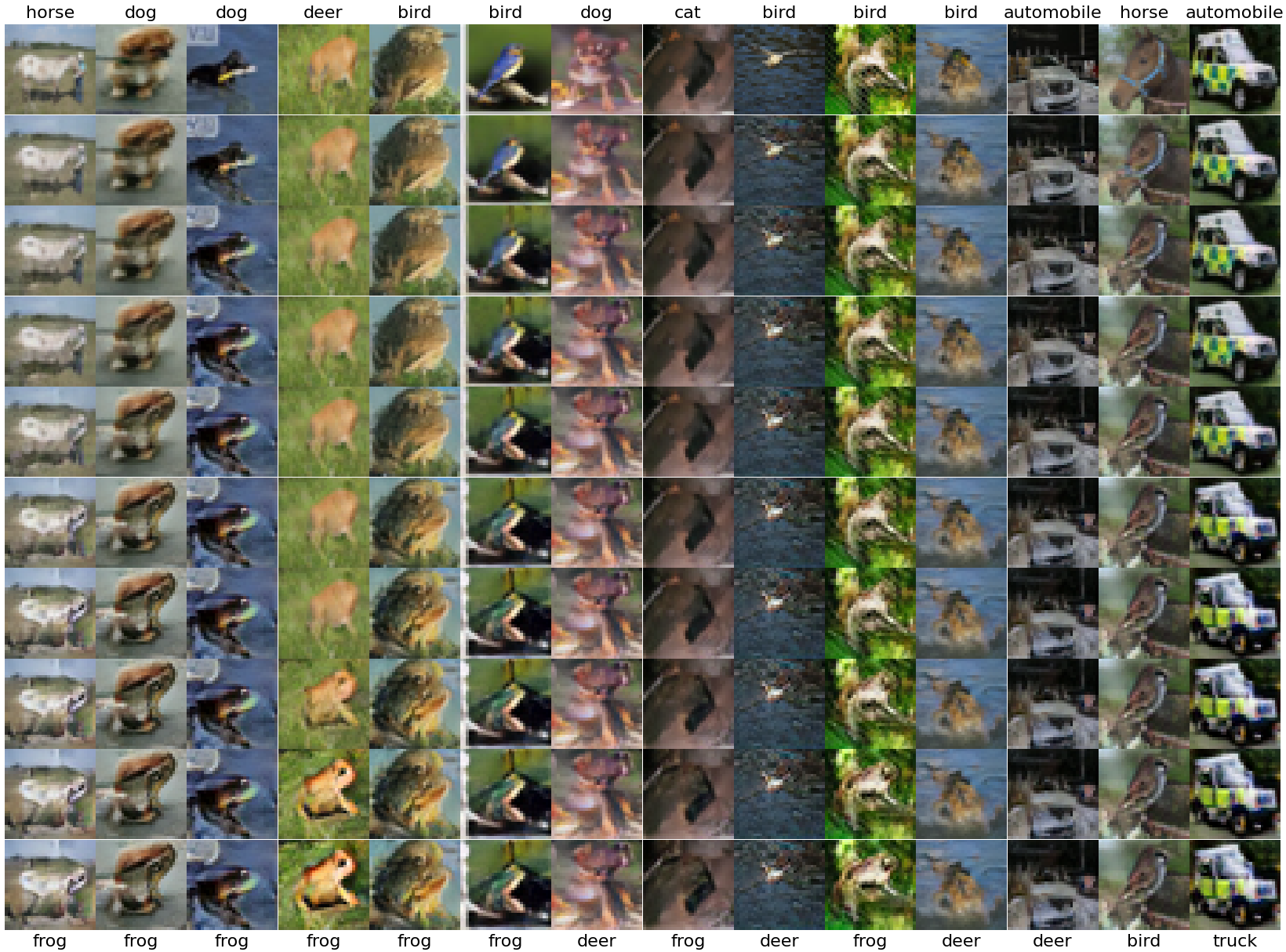}
    \caption{Trajectory of \flowmodel{} adversarial attacks on different $X_{\rm test, \rm img}$ (shown as columns) to $\tilde{X}_{\rm test, \rm img}$. We visualize the changes as rows over three \flowmodel{} blocks, each of which breaks $[0,1)$ into three evenly spaced sub-intervals, resulting in nine integration steps along the perturbation trajectory. Captions on the top and bottom indicate predictions by the pre-trained $\phi$ on raw $X_{\rm test, \rm img}$ and final perturbed adversarial $\tilde{X}_{\rm test, \rm img}$.}
    \label{fig:cifar10_trajectory}
\end{figure}

\subsubsection{MNIST trajectory illustration} \label{expr:mnist_adversarial}

We now apply \flowmodel{} on finding the worst-case distribution, given a pre-trained LeNet classifier \citep{LENET} $\phi$. In this example, we focus on providing more insights into the behavior of \flowmodel{} without comparing it against other baselines. We train \flowmodel{} using Algorithm \ref{alg:block_wise} for three blocks with $\gamma_k \equiv 1$, on the latent space of an auto-encoder with latent dimension $d=16$. The architecture of the flow model consists of fully connected layers of \texttt{d-256-256-d} with softplus activation. 

Figure \ref{fig:mnist_trajectory} visualizes the gradual and smooth perturbation of test images $X_{\rm test, \rm img}$ by \flowmodel{}. We notice the cost-effectiveness and interpretability of \flowmodel{}. First, the T-SNE embedding in Figure \ref{fig:mnist_tsne} shows that \flowmodel{} tends to push digits around the \textit{boundary} of certain digit clouds to that of other digit clouds, as such changes take the least amount of transport cost but can likely induce a great increase of the classification loss by $\phi$. Second, changes in the pixel space in Figure \ref{fig:mnist_gradual} show that visible perturbation is mostly applied to the foreground of the image (i.e., actual digits), as changes in the foreground tend to have a higher impact on the classification by $\phi$.

\begin{figure}[!t]
    \centering
    \begin{minipage}{0.575\linewidth}
        \includegraphics[width=\linewidth]{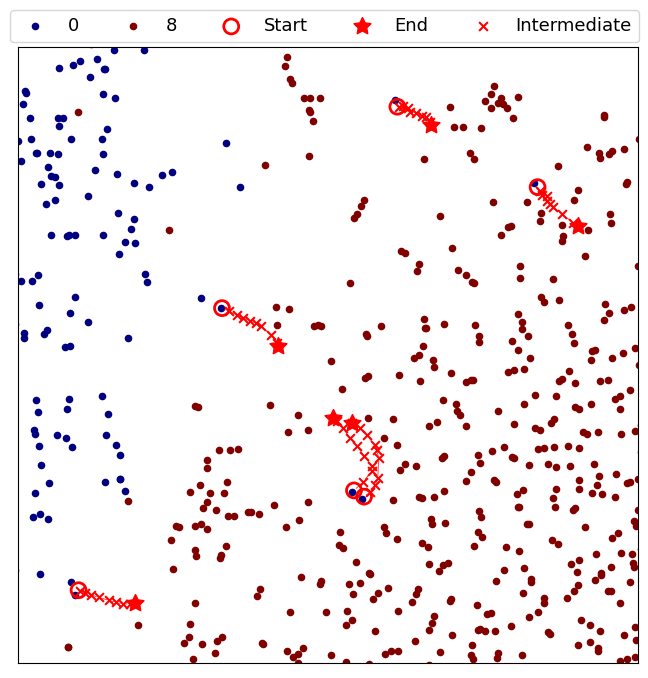}
        \subcaption{Digit deformation trajectories under T-SNE embedding}
        \label{fig:mnist_tsne}
    \end{minipage}
    \begin{minipage}{0.395\linewidth}
        \includegraphics[width=\linewidth]{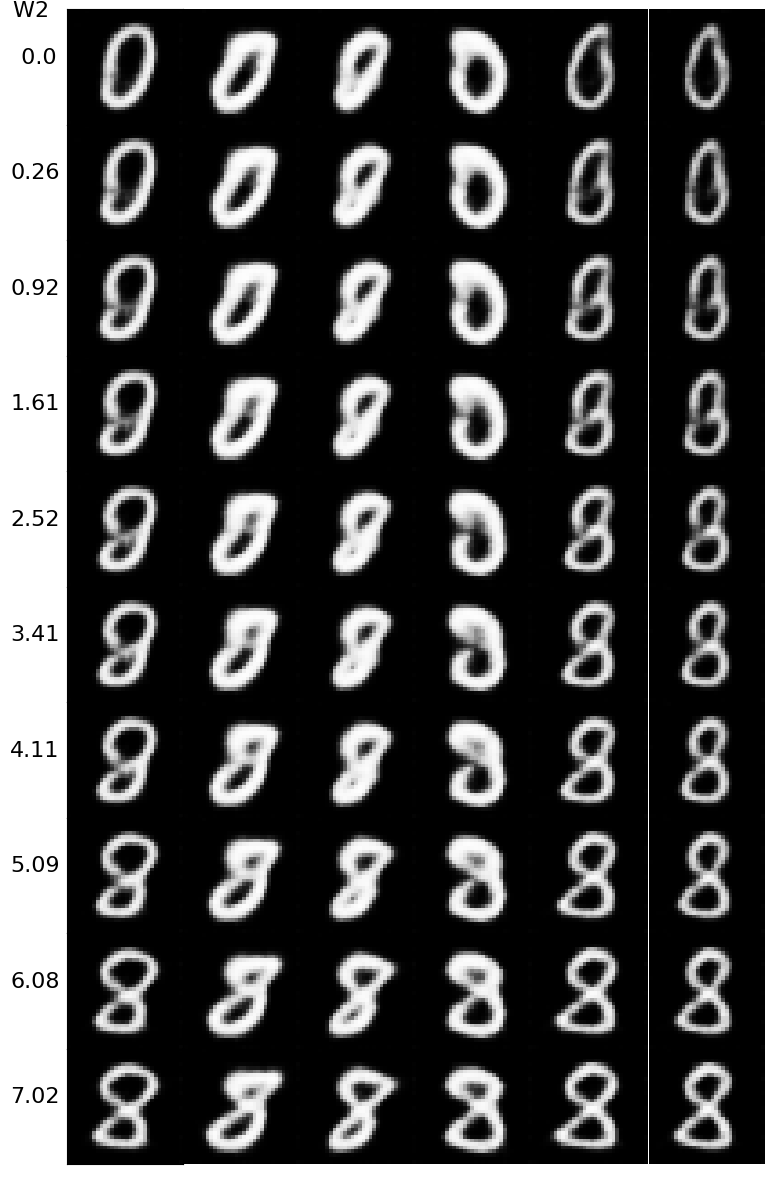}
        \subcaption{Transition of $X_{\rm test, \rm img}$ in pixel space}
        \label{fig:mnist_gradual}
    \end{minipage}
    \caption{\flowmodel{} perturbation of MNIST digits over blocks and integration steps. Figure (a) visualizes the perturbation trajectories from digits 0 to 8 under 2D T-SNE embedding. Figure (b) shows the trajectory in pixel space, along with the corresponding $W_2$ distance between original and perturbed images over integration steps.}
    \label{fig:mnist_trajectory}
\end{figure}

\subsection{Data-driven differential privacy}\label{expr:dp}

This section demonstrates the benefit of our \flowmodel{} DPM in privacy protection. We specifically focus on the examples of image recognition based on MNIST, where the decision function $\phi$ is specified as pre-trained classifiers. We mainly compare DPM against two APM baselines: APM under Gaussian noise (APM-G) and APM under Laplacian noise (APM-L).

\subsubsection{MNIST raw digit classification}\label{expr:dp_raw}

\begin{figure}[!t]
    \begin{minipage}{\textwidth}
        \includegraphics[width=\linewidth]{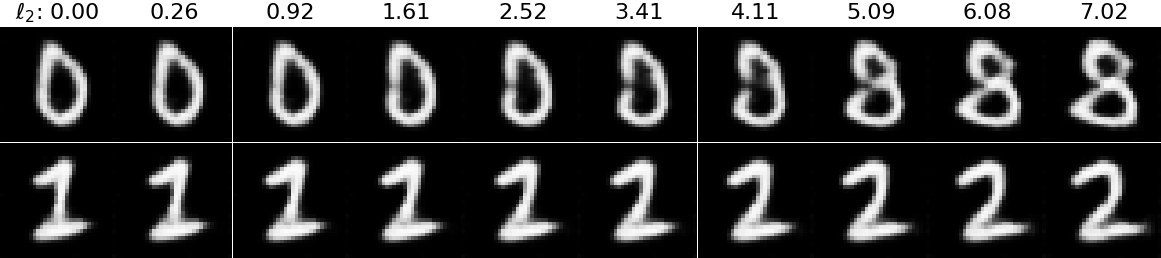}
        \subcaption{DPM by \flowmodel{}}
    \end{minipage}
    \begin{minipage}{\textwidth}
        \includegraphics[width=\linewidth]{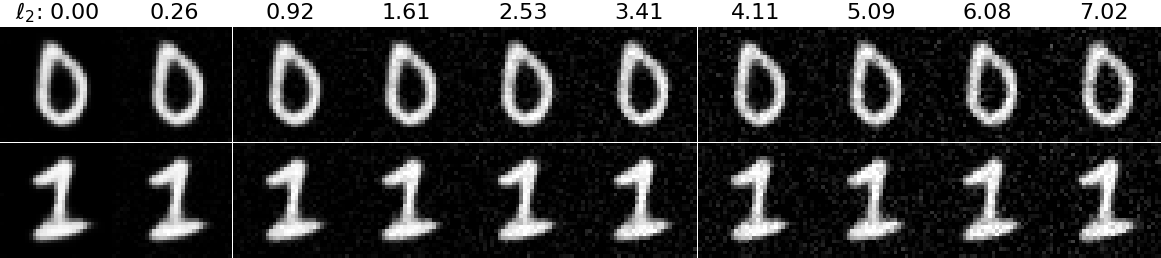}
        \subcaption{APM-G (additive Gaussian)}
    \end{minipage}
    \begin{minipage}{\textwidth}
        \includegraphics[width=\linewidth]{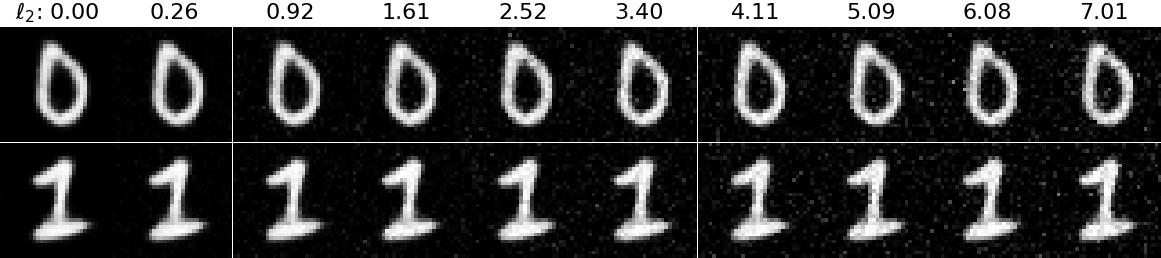}
        \subcaption{APM-L (additive Laplacian)}
    \end{minipage}
    \begin{minipage}{\textwidth}
        \includegraphics[width=\linewidth]{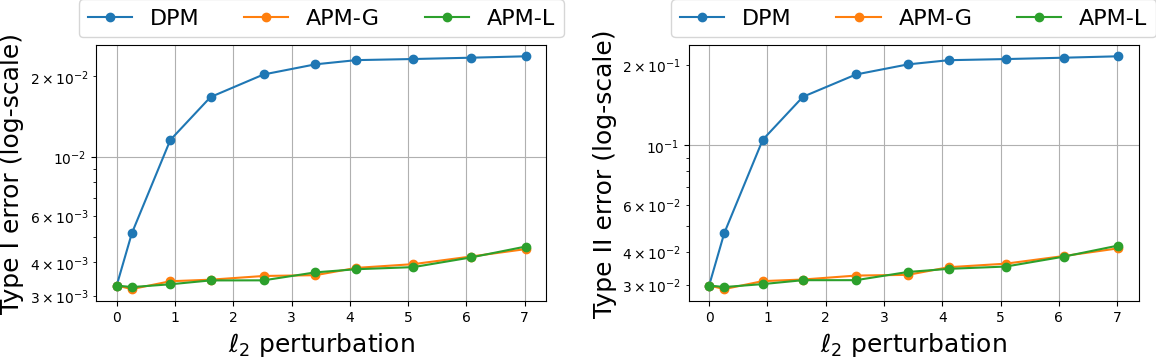}
        \subcaption{Type-I and type-II errors by DPM, APM-G, and APM-L.}
    \end{minipage}
    \caption{Differential privacy example of raw MNIST digit recognition. We control the $\ell_2$ perturbation amount by DPM, APM-G, and APM-L to be identical for a fair comparison. Figures (a)-(c) visualize privacy-protected queries $M_{\varepsilon}(D)$ by DPM, APM-G, and APM-L over different $\varepsilon$. Figure (d) examines the corresponding type-I and type-II errors defined in \eqref{dp:type_I_II_ave} by these mechanisms.}
    \label{fig:dp_raw_mnist}
\end{figure}

We first describe the precise DP setup and comparison metrics and then show the results against the baselines. This example directly follows from the adversarial attack example on MNIST in section \ref{expr:mnist_adversarial}. Specifically, the decision function $\phi$ is a pre-trained LeNet classifier on raw MNIST digits with ten classes, and we train three continuous flow blocks on the class of all digits using Algorithm \ref{alg:block_wise}.

\vspace{0.1in}
\noindent \textit{DP setup.} We describe the following components of a hypothesis-testing-based DP framework: 
(1) the definition of neighboring datasets $D$ and $D'$, where the two datasets differ in exactly one record; 
(2) the choice of the query function $q$ taking the datasets as inputs; 
(3) the privacy-protection randomized mechanism $M_{\varepsilon}$ applied to queries, under the constraint that $Q\in \calB_\varepsilon(P)$ for $\calB_\varepsilon(P)$ defined in \eqref{eq:W2_uncertainty_set};
(4) the hypothesis testing problem with the decision function $\phi$ to distinguish between $D$ and $D'$. Notation-wise, we assume $X \sim P$ is a pair $X=(X_{\rm img},Y)$, where $X_{\rm img}$ is the raw image and $Y \in \{0,\ldots,9\}$ is the corresponding label.

For (1), we let each dataset $D$ contain one image-label pair $X\sim P$ so that two datasets $D$ and $D'$ are naturally neighbors in terms of $X$. In other words, $D$ and $D'$ contain digits from the same class or different classes. 
For (2), given $D=\{X\}$, we let the query function $q(D)=X_{\rm img}$ so that it returns the raw image of the image-label pair $X$. 
For (3), the privacy-protection randomized mechanism $M_{\varepsilon}$ either applies our trained \flowmodel{} model to $q(D)$ or adds random Gaussian or Laplacian noises to $q(D)$, both under the pre-specified amount of perturbation controlled by $\varepsilon$.
For (4), given a privacy-protected image $M_{\varepsilon}(D)$ with the (unknown) label $Y$, we consider the following sets of hypotheses depending on labels $k \in \{0,\ldots,9\}$:
\begin{equation}\label{dp:raw_hypo}
    H_0(k): Y \neq k \text{ and } H_1(k): Y = k.
\end{equation}
Hence, the goal of a randomized mechanism $M_{\varepsilon}$ in this case is not to let the classifier $\phi$ correctly classify the true class of a privacy-protected test image $M_{\varepsilon}(D_{\rm test})$.

\vspace{0.1in}
\noindent \textit{Comparison metrics.} We measure the performance of different privacy-protecting randomized mechanisms $M_{\varepsilon}$ at radius $\varepsilon$ by the type-I and type-II errors of the classifier $\phi$ on testing \eqref{dp:raw_hypo} over different classes $k$. Recall the classifier $\phi$ maps an arbitrary input image to a probability distribution over the 10 classes. Given a test dataset $D_{\rm test}=\{X_{\rm test}\}$ with $X_{\rm test}\sim P_{\rm test},$ we let $\hat{Y}(M_{\varepsilon})=\arg\max_{j=0,\ldots,9} \phi(M_{\varepsilon}(D_{\rm test}))_j$ be the predicted class of $M_{\varepsilon}(D_{\rm test})$ by $\phi$. Then, for this particular setting, the type-I error $\alpha(k, M_{\varepsilon})$ and type-II error $\beta(k, M_{\varepsilon})$ are computed as
\begin{align}
    \alpha(k, M_{\varepsilon}) & = \mathbb{P}(\hat{Y}(M_{\varepsilon})=k|Y\neq k) \label{dp:type_I}\\
    \beta(k, M_{\varepsilon}) & = \mathbb{P}(\hat{Y}(M_{\varepsilon})\neq k|Y= k) \label{dp:type_II},
\end{align}
where the probability is taken over test image-label pairs $X_{\rm test}=(X_{\rm test,\rm img},Y_{\rm test})$ for $X_{\rm test} \sim P_{\rm test}$. We then measure the performance of $M_{\varepsilon}$ by taking the average of \eqref{dp:type_I} and \eqref{dp:type_II} over $k$:
\begin{equation}\label{dp:type_I_II_ave}
    \alpha(M_{\varepsilon}) = \sum_{k=0}^9 \alpha(k,M_{\varepsilon})/10, ~\beta(M_{\varepsilon}) = \sum_{k=0}^9 \beta(k,M_{\varepsilon})/10.
\end{equation}
If a mechanism $M_{\varepsilon}$ provides strong privacy, we should expect high values of $\alpha(M_{\varepsilon})$ and $\beta(M_{\varepsilon})$, as the classifier $\phi$ would make high errors on privacy-protected images $M_{\varepsilon}(D_{\rm test})$.

\vspace{0.1in}
\noindent \textit{Results.} Figure \ref{fig:dp_ave_mnist} shows the comparative results by the proposed \flowmodel{} DPM against the APM-G and APM-L baselines. Qualitatively, we notice in (a)-(c) that under the same amount of $\ell_2$ perturbation $\varepsilon$, DPM induces meaningful contextual changes to the queries $q(D)$ (i.e., changing a digit 0 to a digit 8). In contrast, the additive mechanisms only blur the queries slightly. Quantitatively, as shown in (d), such difference helps protect privacy against the decision function $\phi$: the type-I and type-II errors of $\phi$ under our proposed DPM are much higher than those of $\phi$ under the additive perturbation mechanisms. As a result, our DPM is an empirically more effective privacy-protecting mechanism under the same amount of average perturbation as measured in $\varepsilon$.

\subsubsection{MNIST missing digit detection}
\begin{figure}[!t]
    \begin{minipage}{\textwidth}
        \includegraphics[width=\linewidth]{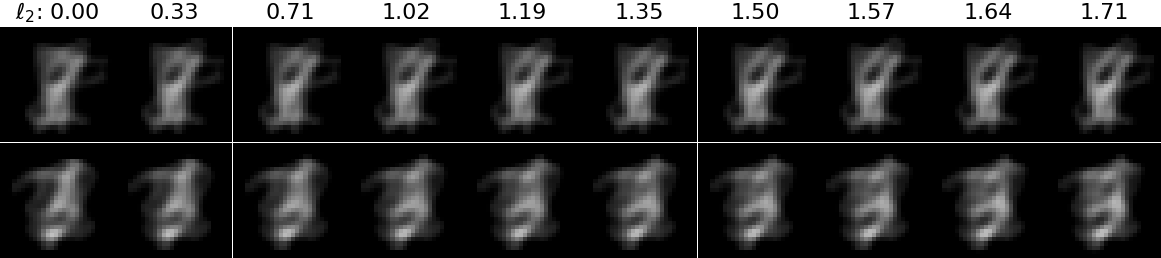}
        \subcaption{DPM by \flowmodel{}}
    \end{minipage}
    \begin{minipage}{\textwidth}
        \includegraphics[width=\linewidth]{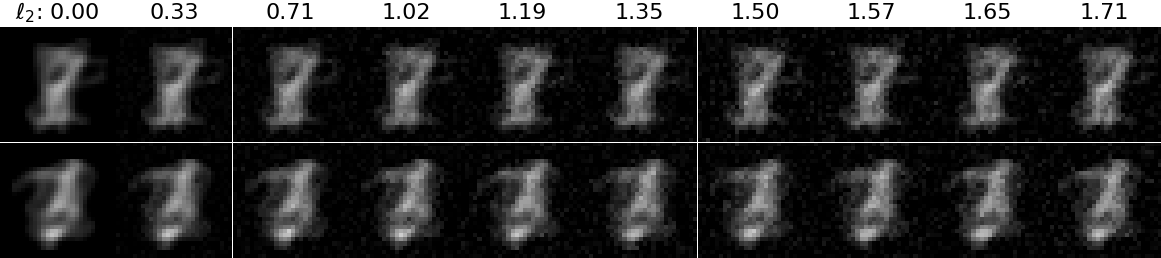}
        \subcaption{APM-G (additive Gaussian)}
    \end{minipage}
    \begin{minipage}{\textwidth}
        \includegraphics[width=\linewidth]{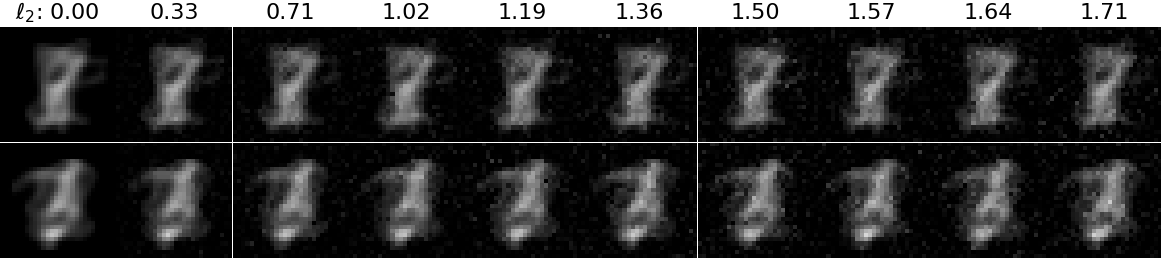}
        \subcaption{APM-L (additive Laplacian)}
    \end{minipage}
    \begin{minipage}{\textwidth}
        \includegraphics[width=\linewidth]{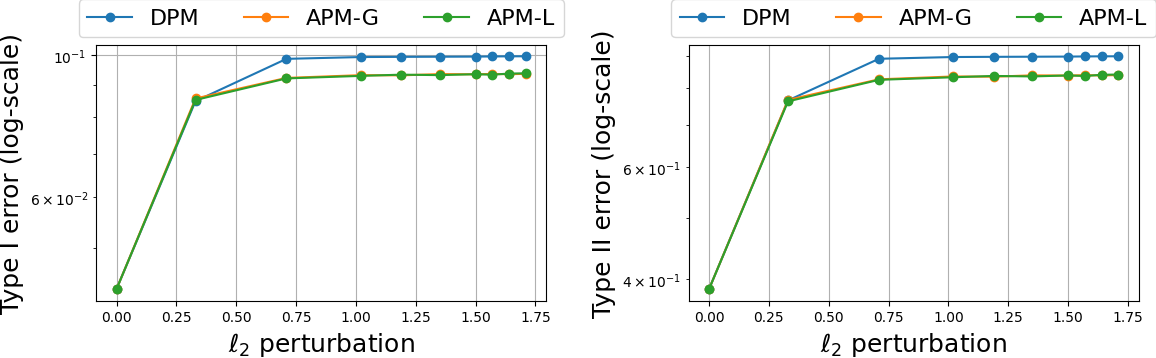}
        \subcaption{Type-I and type-II errors by DPM, APM-G, and APM-L.}
    \end{minipage}
    \caption{Differential privacy example of MNIST missing digit detection. We present similar sets of figures as in Figure \ref{fig:dp_raw_mnist}, where the main difference lies in the definition of dataset $D$ and query function $q(D)$, which returns an average image of images in $D$.}
    \label{fig:dp_ave_mnist}
\end{figure}

We consider an alternative setting that is a type of \textit{membership inference attack} problem \citep{shokri2017membership} and can be viewed as a more natural DP task. In short, we construct \textit{average} images from digits of 9 classes, where the goal of the decision function $\phi$, which is still a 10-class classifier, is to determine the class of the \textit{missing} digit based on a given average image. We follow the notations in section \ref{expr:dp_raw} when describing the setup and metrics. 

\vspace{0.1in}
\noindent \textit{DP setup and comparison metric.} We define (1)--(4) in this new setting. 
For (1), we define a dataset $D=\{X_1,\ldots,X_9: X_i=(X_{{\rm img},i},Y_i) \sim Q, Y_i \neq Y_j \text{ if } i\neq j\}$. Thus, the dataset $D$ has precisely nine random image-label pairs, one from each distinct class. Given two datasets $D$ and $D'$, they are neighbors in the sense that the set of labels $\{Y_i\}$ in $D$ and $D'$ differ by at most one entry. 
For (2), the query function $q(D)=\sum_{i=1}^9 X_{{\rm img}, i}/9$, where the sum is taken pixel-wise so that $q(D)$ returns an average image of the same dimension as raw images. 
For (3), the privacy-protection mechanism $M_{\varepsilon}$ either applies our trained \flowmodel{} model to the average image $q(D)$ or adds random noises to it. 
For (4), given the true missing label $Y(D)$ of the dataset $D$, we then consider the following sets of hypotheses depending on the label $k \in \{0,\ldots,9\}$:
\begin{equation}\label{dp:ave_hypo}
    H_0(k): Y(D) \neq k \text{ and } H_1(k): Y(D) = k.
\end{equation}
In this new setup, we still evaluate the effectiveness of a randomized mechanism $M_{\varepsilon}$ using \eqref{dp:type_I_II_ave}, where the probabilities of type-I and type-II errors are taken over test datasets $D_{\rm test}$, each of which contains nine random test image-label pairs $X_{\rm test}\sim P_{\rm test}$. 

We also explain how we train the classifier $\phi$ and the flow model $T$ in this new setting. The architecture of $\phi$ is still based on convolutional layers, where the training data of $\phi$ consists of $\{q(D), Y(D)\}$, which are the set of raw average images $q(D)$ and corresponding missing labels $Y(D)$. We then train $\phi$ using empirical risk minimization under the cross-entropy loss by sampling mini-batches of datasets $D$. The classifier $\phi$ is thus trained to determine the missing label $Y(D)$ given the average image.
To train the flow model $T$ using Algorithm \ref{alg:block_wise}, we adopt the identical network architecture as in the previous MNIST examples and train three blocks given the classifier $\phi$ with $\gamma_k\equiv 2$.

\vspace{0.1in}
\noindent \textit{Results.} Figure \ref{fig:dp_ave_mnist} shows both qualitative and quantitative comparisons of our proposed DPM against APM-G and APM-L in this more challenging setting. The interpretations of results are similar to those in section \ref{expr:dp_raw}. Specifically, we notice more contextual changes by DPM in subfigure (a) than APMs in subfigures (b) and (c), and the higher type-I and type-II errors in subfigure (d) demonstrate the benefit of DPM at protecting privacy against a pre-trained decision function $\phi$.

\section{Summary and Discussion}\label{sec:discussion}

In this paper, we have presented a computational framework called \flowmodel{} for solving the worst-case distribution, the Least Favorable Distributions (LFD), in Wasserstein Distributionally Robust Optimization (WDRO). Specifically, the worst-case distribution is found as the push-forward distribution induced by our \flowmodel{} model on the data distribution, and the entire probability trajectory is continuous and invertible due to the use of flow models. We demonstrate the utility of \flowmodel{} in various applications of DRO, including adversarial attacks of pre-trained image classifiers and differential privacy protection through our distributional perturbation mechanism. \flowmodel{} demonstrates strong improvement against baseline methods on high-dimensional data.

There are a few future directions to extend the work. Here, we set aside the min-max exchange issue for the following reasons. It has been shown in the original contribution \cite{mohajerin2018data} that when the reference measure (i.e., the center of the uncertainty set) is empirical distribution and thus discrete, the problem \eqref{eq:DRO_general} has {\it strong duality}: one can exchange the min and max in the formulation and the solutions for the primal and the dual problems are the same when the loss function is convex-concave in the vector space. The results are shown leveraging the fact that the worst-case distributions for the Wasserstein DRO problem are discrete when the reference measure is discrete, thus reducing the infinite-dimensional optimization problem to a finite-dimensional minimax problem. Thus, one can invoke the standard minimax theorem (see, e.g., \cite{ben2001lectures}). Here, since later on we restrict the LFD to be a continuous function, the strong duality proof in \cite{mohajerin2018data} no longer carries through, and one has to extend the minimax theorem (e.g., \cite{rockafellar1997convex} and \cite{ben2001lectures} using Kakutani theorem) for the most general version involving functionals that are geodesic convex on the manifold of distribution functions; the proof is rather technical, and we leave it for future work. 
Second, theoretically, how to formalize our distributional perturbation mechanism on high-dimensional queries to make it satisfy a DP criterion is also an important question. 
Lastly, our approach is general and does not rely on neural networks. In future work, one can potentially extend to other alternative representations of the optimal transport maps that work particularly well for low-dimensional and small sample settings.

\section*{Acknowledgement}

This work is partially supported by an NSF CAREER CCF-1650913, NSF DMS-2134037, CMMI-2015787, CMMI-2112533, DMS-1938106, DMS-1830210, and the Coca-Cola Foundation. 
XC is also partially supported by 
NSF DMS-2237842
and Simons Foundation.
The authors would like to thank the helpful discussion with Dr. Daniel Kuhn, Dr. Jose Blanchet, Dr. Arkadi Nemirovski, Dr. Alexander Shapiro, and Dr. Georgia-Ann Klutke.

\bibliographystyle{IEEEtran} 
\bibliography{references}

\appendix

\section{Proofs}\label{app:proof}

\noindent 
$\bullet$ Proofs in Section \ref{sec:framework}

\begin{proof}[Proof of Lemma \ref{lemma:finite-W2}]
    By  the definition \eqref{eq:deff-W2-Kantorovich},  let $\pi = \mu \times \nu$ be a coupling of $\mu$ and $\nu$,
\begin{align*}
      \W_2^2( \mu, \nu )  
   &   \le   \int_{ \R^d \times \R^d} \| x-y \|^2 d\pi(x,y) \\
   &  \le   \int_{ \R^d \times \R^d} 2( \| x\| ^2  + \|y \|^2) d\pi(x,y) \\
   &  = 2 (M_2(\mu) + M_2(\nu)  ) <\infty. 
\end{align*}
\end{proof}

\begin{proof}[Proof of Proposition \ref{prop:W2-PR-by-T}]
Since $P \in \calP_2^r(\calX)$, $\forall \mu \in \calP_2(\calX)$, the OT map from $P$ to $\mu$ uniquely exists due to Brenier Theorem \citep{brenier1991polar}
and we denote by $T_P^\mu$. We use the notation $T_P^\cdot$ for $\cdot $ in $\calP_2(\calX)$ throughout the proof.

For any $\mu \in \calP_2$, define $M_2( \mu ):= \int |x|^2 d\mu$.
Observe that  $M_2( T_\# P) = \E_{x \sim P} \| T(x) \|^2 $, and this means
\begin{equation}\label{eq:T2-is-L2}
T \in L^2(P) \quad \text{iff.} \quad T_\# P \in \calP_2.
\end{equation}
We first verify that $L_\mu: \calP_2 \to (-\infty, +\infty)$,
 and $L_T: L^2(P) \to (-\infty, +\infty)$.
 Then, whenever the minimum is attained,  it is a finite value for both problems.
For $L_\mu$, $\forall \mu \in \calP_2(\calX)$, $\varphi( \mu) $ is finite, and $ 0 \le W_2^2( P,\mu)  < \infty$.
(Because $W_2^2( P,\mu) = \E_{x\sim P} \| x - T_P^\mu (x)\|^2 \le 2( M_2(P) + M_2(\mu)) < \infty$.)
 Thus $L_\mu( \mu) $ is finite.
For $L_T$, $\forall T \in L^2(P)$, 
by \eqref{eq:T2-is-L2}, $T_\# P \in \calP_2$, then
$0 \le \E_{x \sim P} \| x - T(x)\|^2 \le 2 ( M_2(P) + \E_{x\sim P} \| T(x)\|^2 )  < \infty $.  %
Also, $\varphi( T_\# P )$ is finite, and then $L_T [T]  $ is finite.

\underline{To show (a)}: Let $Q:= (T^*)_\#P$, since $T^* \in L^2(P)$, by \eqref{eq:T2-is-L2}, 
$Q  \in \calP_2(\calX)$. We have
\begin{equation}\label{eq:pf-a-1}
L_T^* = L_T( T^*) = \varphi( Q ) + \lambda \E_{x \sim P}  \| x-T^*(x) \|^{2}.
\end{equation}
We claim that $\E_{x \sim P}  \| x-T^*(x) \|^{2} = \W_2^2( P, Q)$:
the ``$\ge$'' always holds; 
To show that it cannot be ``$>$'', 
note that $ \W_2^2( P, Q) = \E_{x \sim P}  \| x-T_P^Q(x) \|^{2}$,
and since $Q \in \calP_2$, 
 $T_P^Q \in L^2(P)$ by \eqref{eq:T2-is-L2}.
If $\E_{x \sim P}  \| x-T^*(x) \|^{2} > \W_2^2( P, Q)$, then
\[
L_T (T_P^Q) = \varphi( Q) + \lambda \W_2^2( P, Q) 
< \varphi( Q) + \lambda \E_{x \sim P}  \| x-T^*(x) \|^{2} = L_T( T^*),
\]
contradicting with that $T^*$ attains the minimum of $L_T$. 

As a result, back to \eqref{eq:pf-a-1}, we have
\begin{equation}\label{eq:pf-a-2}
L_T^* =  \varphi( Q ) + \lambda \W_2^2( P, Q) = L_\mu (Q).
\end{equation}
It remains to show that $Q$ is a minimizer of \eqref{eq:def-PR-mu-lemma}, and then 
$L_\mu(Q) = L_\mu^* = L_T^*$.
To verify this, $\forall \mu \in \calP_2(\calX)$,
\begin{equation}\label{eq:pf-Lmu-LT-OT}
L_\mu ( \mu ) 
= \varphi( \mu ) + \lambda \W_2^2( P, \mu )
=  \varphi( \mu ) + \lambda \E_{x \sim P} \| x - T_P^\mu (x) \|^2 
= L_T (T_P^\mu),
\end{equation}
where $T_P^\mu \in L^2(P)$ (since $\mu \in \calP_2$), 
and thus $L_T (T_P^\mu) \ge L_T^*$. Combined with \eqref{eq:pf-a-2}, 
$L_\mu (\mu) \ge  L_T^* = L_\mu(Q)$.  This shows that $Q$ is a minimizer of $L_\mu$ on $\calP_2(\calX)$, and finishes the proof of (a).

\underline{To show (b)}: Since \eqref{eq:pf-Lmu-LT-OT} holds for $\mu^* \in \calP_2(\calX)$, we have
\begin{equation}\label{eq:pf-b-1}
L_\mu^* = L_\mu (\mu^*) 
= L_T ( T_P^{\mu^*} ),
\end{equation}
and $T_P^{\mu^*} \in L^2(P)$. 
It remains to show that $T_P^{\mu^*}$ is a minimizer of \eqref{eq:def-PR-T-lemma}, and then $L_T^* = L_T ( T_P^{\mu^*} ) = L_\mu^*$.
To show this, $\forall T \in  L^2(P)$, $T_\# P \in \calP_2$, and
\[
L_T( T) 
= \varphi( T_\#P ) + \lambda \E_{x \sim P} \| x - T (x) \|^2 
\ge \varphi( T_\#P ) + \lambda \W_2^2( P, T_\#P) = L_\mu (T_\#P ), 
\]
and meanwhile $L_\mu (T_\#P )  \ge L_\mu^*$.
We thus have $L_T(T) \ge L_\mu^*$.
Together with \eqref{eq:pf-b-1}, this gives that $  L_T( T)  \ge  L_T ( T_P^{\mu^*} )$, which means that $T_P^{\mu^*}$ is a minimizer of $L_T$ on $L^2(P)$.
\end{proof}

\begin{proof}[Proof of Proposition \ref{prop:dual-discrete}]
Because for each $x_i$, there is $z_i \in \R^d $ which minimizes $\inf_{z}  \left[ V(z) + \lambda \| x_i - z\|^2 \right]$,
we define $z_i = T(x_i)$,
and let $\hat Q := T_\# \hat P$, namely, $\hat Q$ is the empirical distribution for $\{ z_i\}_{i=1}^n$. By definition, 
\begin{align*}
\E_{x \sim \hat P} \inf_{z}  \left[ V(z) + \lambda \| x - z\|^2 \right]
& =
\frac{1}{n}\sum_{i=1}^n 
\left[ V(z_i) + \lambda \| x_i - z_i\|^2 \right] \\
& =
\E_{ x\sim \hat Q} V( x ) + 
 \lambda \E_{ x\sim \hat P} \| x - T(x)\|^2. 
\end{align*}
Because $({\rm I_d}, T)_\# \hat P$ is a coupling of $\hat P$ and $\hat Q$, we have
\[
\E_{x \sim \hat P} \| x - T(x)\|^2 \ge \W_2^2(\hat P, \hat Q),
\]
and then we have
\[
\E_{x \sim \hat P} \inf_{z}  \left[ V(z) + \lambda \| x - z\|^2 \right]
\ge \E_{ x\sim \hat Q} V( x ) + 
 \lambda \W_2^2(\hat P, \hat Q).
\]
Meanwhile, $Q \in \calP_2$ because $\{ z_i\}_{i=1}^n$ is a finite set in $\R^d$. 
Thus, $\E_{ x \sim \hat Q} V(x) + \lambda \W_2^2( \hat P, \hat Q)$ is greater than or equal to the l.h.s. of \eqref{eq:dual-form-hatP}.
This proves that 
\[
 \E_{x \sim \hat P} \inf_{z}  \left[ V(z) + \lambda \| x - z\|^2 \right]
 \ge \min_{Q \in \calP_2} \E_{x \sim Q} V(x) + \lambda  \W_2^2( \hat P, Q).
\]

To prove the other direction, we consider any $Q \in \calP_2$, 
\begin{align}
\E_{x \sim Q} V(x) + \lambda  \W_2^2( \hat P, Q)
& = \E_{z \sim Q} V(z)
+ \lambda \inf_{\pi \in \prod (\hat P, Q)} \E_{(x,z )\sim \pi} \| x - z\|^2  \nonumber \\
& = \inf_{\pi \in \prod (\hat P, Q)}
\E_{(x,z )\sim \pi} \left[  V(z) +\lambda \| x - z\|^2 \right]  \nonumber \\
& \ge \inf_{\pi \in \prod (\hat P, Q)}
\E_{(x,z )\sim \pi}  \inf_{z'}  \left[ V(z') + \lambda \| x - z'\|^2 \right] \quad \text{(by that $\inf_{z'}$ is attained)} \nonumber \\
& = \E_{x\sim \hat P}  \inf_{z'}  \left[ V(z') + \lambda \| x - z'\|^2 \right]. \label{eq:pf-gt-explicit-form}
\end{align}
Because $\E_{ x \sim \hat P}$ involves a finite summation, the r.h.s. of \eqref{eq:pf-gt-explicit-form} is a finite number.
Since \eqref{eq:pf-gt-explicit-form} for any $Q \in \calP_2$, we have
\[
 \min_{Q \in \calP_2} \E_{x \sim Q} V(x) + \lambda  \W_2^2( \hat P, Q)
 \ge \E_{x\sim \hat P}  \inf_{z'}  \left[ V(z') + \lambda \| x - z'\|^2 \right].
\]
\end{proof}

\vspace{10pt}
\noindent 
$\bullet$ Proofs in Section \ref{sec:theory}

\begin{proof}[Proof of Lemma \ref{lemma:L2-perturb}]
We are to verify that $M_2( T_\# \mu ) < \infty$. 
\[
\int_{\R^d} |x|^2 d(T_\# \mu)(x)
= \int_{\R^d} \|T(x)\|^2 d\mu(x)
= \| T\|_\mu^2 <\infty,
\]
since $T \in L^2(\mu)$.
\end{proof}

\begin{proof}[Proof of Lemma \ref{lemma:diff-phi}]
We first verify that $\forall \mu \in \calP_2$, $\varphi(\mu) $ is finite. 
Under Assumption \ref{assump:V}, one can verify %
that
\begin{equation}\label{eq:cond-V}
| V(y) - V(x) - \nabla V(x)^T(y-x) |
\le  \rev{ \frac{L}{2} }\| y-x\|^2,\quad \forall x, y \in \R^d.  
\end{equation}
By taking $x=0$, \eqref{eq:cond-V} implies that, 
\begin{equation}
 V(y) 
\le  V(0) + \nabla V(0)^Ty + \rev{ \frac{L}{2} } \| y\|^2,
\quad \forall y \in \R^d, 
\end{equation}
and thus there are finite numbers $a , b$ s.t. 
\[
V(y) 
\le a + b \| y\|^2, \quad \forall y \in \R^d.
\]
Since $M_2(\mu) < \infty$, $\varphi (\mu) = \int V d\mu < \infty$. 
Similarly, \eqref{eq:cond-V} also implies
\begin{equation}
- V(y)  
\le -  V(0) - \nabla V(0)^Ty + \rev{ \frac{L}{2} }  \| y\|^2,
\quad \forall y \in \R^d, 
\end{equation}
and thus $- V(y)$ is also upper-bounded by $a + b \| y\|^2$ 
for some finite $a, b $. Thus $-\varphi(\mu) = \int (-V)d\mu <\infty $. 
This proves that $\varphi: \calP_2 \to (-\infty, \infty)$, and thus we will not add/subtract $\pm \infty$ in our derivations below. 

Next, we verify that for any $\mu \in \calP_2$, 
$\nabla V \in L^2(\mu)$,
and then  the inner-product $\langle \nabla V, v \rangle_\mu$ in \eqref{eq:strong-diff-phi} is  well-defined.
To see this is the case, by that $\nabla V$ is $L$-Lipschitz, we have
\[
\| \nabla V(x) \| \le \| \nabla V(0) \| + L \| x \|, \quad \forall x \in \R^d,
\]
and then, let $c = \| \nabla V(0) \| $, 
\[
\| \nabla V \|_\mu^2 
=  \int_{\R^d} \| \nabla V(x) \|^2 d\mu(x)
\le \int_{\R^d} 2( c^2 + L^2 \| x\|^2) d\mu(x) 
= 2 (c^2 + L^2 M_2(\mu) ) < \infty.
\]

We are ready to prove  \eqref{eq:strong-diff-phi}.
For any $\mu \in \calP_2$ and $v \in L_2(\mu)$, 
\[
\varphi( ( {\rm I_d} + \delta v)_\# \mu)
= \int_{\R^d} V(x + \delta v(x))  d\mu(x),
\]
and thus
\begin{align}
& \varphi( ( {\rm I_d} + \delta v)_\# \mu)
 - \varphi( \mu) 
 - \delta \langle \nabla V, v \rangle_\mu  \nonumber \\
= &  \int_{\R^d} 
    \left( V( x + \delta v(x))  - V(x) 
    -   \delta\nabla V(x)^T v(x)  \right) d\mu(x).  \label{eq:pf-diff-phi-1}
\end{align}
Combined with \eqref{eq:cond-V} where $y = x + \delta v(x)$, 
the absolute of r.h.s. of \eqref{eq:pf-diff-phi-1} is upper-bounded by
\begin{align}
|\eqref{eq:pf-diff-phi-1}|
& \le  \int_{\R^d} 
    \left| V( x + \delta v(x))  - V(x) 
    -   \delta\nabla V(x)^T v(x)  \right| d\mu(x) \nonumber \\
& \le  \rev{ \frac{L}{2} }   \delta^2 \int_{\R^d} 
     \|v(x)\|^2 d\mu(x) 
     = \rev{ \frac{L}{2} }  \delta^2,
\end{align}
where in the last equality we used that $\| v \|_\mu =1$.
This shows that 
\[
 \varphi( ( {\rm I_d} + \delta v)_\# \mu)
 - \varphi( \mu) 
 - \delta \langle \nabla V, v \rangle_\mu 
 = O(\delta^2)
\]
which proves \eqref{eq:strong-diff-phi}.
\end{proof}

\begin{proof}[Proof of Lemma \ref{lemma:superdiff-psi}]
We first verify that the quantities 
$\psi( \mu) $, $ \psi( ( {\rm I_d} + \delta v)_\# \mu) $ are all finite and $ {\rm I_d} - T_\mu^P \in L_2(\mu)$.
$\psi( \mu) = \frac{1}{2}\W_2^2( \mu , P)$ is finite because  $\mu, P \in \calP_2$ and the Lemma \ref{lemma:finite-W2} applies. 
For the same reason, $ \psi( ( {\rm I_d} + \delta v)_\# \mu) $ will be finite if $( {\rm I_d} + \delta v)_\# \mu \in \calP_2$, 
which always holds by  Lemma \ref{lemma:L2-perturb} and that $ v \in L^2(\mu)$.
To verify that $ {\rm I_d} - T_\mu^P \in L_2(\mu)$,
use the fact that 
$\W_2^2( \mu, P)  = \E_{ x \sim \mu} \| x - T_\mu^P(x)\|^2 = \| {\rm I_d} - T_\mu^P \|_\mu^2$.

To prove \eqref{eq:strong-superdiff-psi}, 
define $\tilde \mu := ( {\rm I_d} + \delta v)_\# \mu$,
and  observe that 
$ ( {\rm I_d} + \delta v, T_\mu^P)_\# \mu $ is a coupling of  $\tilde \mu$ and $P$, then
\[
\W_2^2( \tilde \mu, P)
\le \E_{x \sim \mu} \| x + \delta v(x) - T_\mu^P (x)  \|^2
= \|  {\rm I_d} + \delta v -  T_\mu^P\|_\mu^2.
\]
Expanding the r.h.s, we have
\[
\|  ( {\rm I_d} -  T_\mu^P ) + \delta v  \|_\mu^2
= \| {\rm I_d} -  T_\mu^P \|_\mu^2 + 2 \langle {\rm I_d} -  T_\mu^P , \delta v \rangle_\mu + \| \delta v  \|_\mu^2,
\]
and note that 
\[
\| {\rm I_d} -  T_\mu^P \|_\mu^2 = \W_2^2 ( \mu, P ).
\]
Putting together, we have that (recall $ \| v  \|_\mu=1$)
\[
\W_2^2( \tilde \mu, P)
\le  \W_2^2 ( \mu, P )
 + 2  \delta \langle {\rm I_d} -  T_\mu^P ,  v \rangle_\mu + \delta^2,
 \]
which gives that 
\[
\psi( \tilde \mu  ) \le \psi( \mu) +  \delta \langle {\rm I_d} -  T_\mu^P ,  v \rangle_\mu +  \delta^2/2.
\]
This proves \eqref{eq:strong-superdiff-psi}.
\end{proof}

\begin{proof}[Proof of Theorem \ref{prop:TR}]
We consider the two cases respectively.

\underline{To show (i)}: 
Since $\W_2(Q, P) < \varepsilon$, there exists a small neighborhood $\calB(Q)$ of $Q$ under $\W_2$ distance
satisfying that $\calB(Q) \subset \calB_\varepsilon$. 
For any unit vector $v \in L^2(Q)$ and $\delta >0$, define
\begin{equation}\label{eq:def-deltaQ-del}
\tilde Q_\delta := ( {\rm I_d} + \delta v)_\# Q,
\end{equation}
and observe that 
$\W_2^2( \tilde Q_\delta, Q) \le \| \delta v\|_Q^2 = \delta^2$.
Thus, $\exists \delta(v) > 0$, s.t. 
$\tilde Q_\delta \in \calB(Q)$ whenever $\delta < \delta(v)$. 
Then by that $Q$ is a local minimum of \eqref{eq:problem-W2-trust-region},
\begin{equation}\label{eq:pf-(i)-minimum}
\varphi( \tilde Q_\delta ) \ge \varphi( Q), \quad \forall \delta < \delta(v).
\end{equation}
Meanwhile, by Lemma \ref{lemma:diff-phi}, $\nabla V \in L^2(Q)$, and 
\begin{equation}\label{eq:pf-tr-perturb-phi}
    \varphi(\tilde Q_\delta  ) = \varphi( Q) 
    + \delta \langle \nabla V, v \rangle_Q + o (  \delta ).
\end{equation}
Then \eqref{eq:pf-(i)-minimum} can hold only if
\[
\langle \nabla V, v \rangle_Q \ge 0.
\]
We have derived that 
\[
\langle \nabla V, v \rangle_Q \ge 0, \quad \forall  v \in L^2(Q), \, \|v\|_Q =1,
\]
and this means that $\nabla V = 0 $ in $L^2(Q)$, which proves (i).

\underline{To show (ii)}: 
Note that $\W_2^2(  Q, P ) = \varepsilon^2$, and $T_Q^P$ is defined $Q$-a.e.
For any unit vector $v \in L^2(Q)$ and $\delta > 0$, define $\tilde Q_\delta$ as in \eqref{eq:def-deltaQ-del}.
By Lemma \ref{lemma:superdiff-psi},
\begin{equation}\label{eq:pf-tr-perturb-psi}
 \W_2^2( \tilde Q_\delta, P ) \le 
  \W_2^2(  Q, P )
	+ 2 \delta \langle  {\rm I_d} - T_Q^P , v \rangle_Q + o (  \delta ),   
\end{equation}
thus, if $ \langle  {\rm I_d} - T_Q^P , v \rangle_\mu < 0$, then for small enough $\delta$ we can make $ \W_2^2( \tilde Q_\delta, P ) < 
  \W_2^2(  Q, P )$. That is, $\exists \delta(v) > 0$ s.t.
  \[
   \W_2( \tilde Q_\delta, P ) <  \W_2 (  Q, P ) = \varepsilon, \quad \forall \delta < \delta(v).
  \]
Now, by that $Q$ is a local minimum of  \eqref{eq:problem-W2-trust-region}, we again have \eqref{eq:pf-(i)-minimum} hold.
Similar as in (i), we must have 
\[
\langle \nabla V, v \rangle_Q \ge 0.
\]
Thus we have derived that 
\begin{equation}\label{eq:pf-(ii)-for-any-v}
\langle \nabla V, v \rangle_Q \ge 0,
\quad \forall v \in L^2(Q), \, \|v\|_Q =1, \text{ and } \langle  {\rm I_d} - T_Q^P , v \rangle_\mu < 0.
\end{equation}
Then, either $\nabla V = 0 $ which renders $\langle \nabla V, v \rangle_Q \equiv 0$;
Alternatively, 
if $\nabla V $ is a non-zero vector in $L^2(Q)$, 
since $ {\rm I_d} - T_Q^P$ is also a non-zero vector in $L^2(Q)$
by that 
\[
\| {\rm I_d} - T_Q^P \|_Q^2 = \W_2^2 (Q, P)^2 = \varepsilon^2 > 0,
\]
we can have \eqref{eq:pf-(ii)-for-any-v} hold only if the two vectors 
 $ - \nabla V $ and $ {\rm I_d} - T_Q^P$ are parallel and aligned in direction in $L^2(Q)$.
 This means that $\exists \lambda > 0$ s.t.
 $- \nabla V  = \lambda(  {\rm I_d} - T_Q^P )$ in $L^2(Q)$,
 which proves (ii).
\end{proof}

\begin{proof}[Proof of Theorem \ref{prop:PR}]
For fixed $\gamma > 0$, define 
\[
F_\gamma (\mu) := \varphi( \mu )
    	+  \frac{1}{2\gamma } \calW_2^2( \mu, P)
     =\varphi( \mu ) + \frac{1}{\gamma} \psi( \mu) .
\]
For any $\mu \in \calP_2$, by Lemma \ref{lemma:diff-phi}, $\varphi(\mu) $ is always finite, and $ 0 \le \W_2^2(\mu ,P) < \infty$ by Lemma \ref{lemma:finite-W2}. 
Thus $F_\gamma : \calP_2 \to (-\infty, +\infty)$. 

By that  $Q$ is a local minimum of $F_\gamma $ in $\calP_2$, we have that for any unit vector $v \in L^2(Q)$ and $\delta > 0$, define $\tilde Q_\delta$ as in \eqref{eq:def-deltaQ-del}, then $\exists \delta (v) > 0 $ s.t,
\begin{equation}\label{eq:pf-PR-minimum}
F_\gamma ( \tilde Q_\delta ) \ge F_\gamma (Q),\quad \forall \delta < \delta(v).    
\end{equation}
Same as in the proof of Theorem \ref{prop:TR},
by Lemma \ref{lemma:diff-phi}, $\nabla V \in L^2(Q)$, and 
\eqref{eq:pf-tr-perturb-phi} holds;
Because $Q \in \calP_2^r$,  by Lemma \ref{lemma:superdiff-psi}, 
\eqref{eq:pf-tr-perturb-psi} holds, which gives
\begin{equation}
 \psi( \tilde Q_\delta) \le 
  \psi(  Q )
+  \delta \langle  {\rm I_d} - T_Q^P , v \rangle_\mu + o (  \delta ).  
\end{equation}
Putting together, we have
\[
 F_\gamma ( \tilde Q_\delta)
 \le  F_\gamma (  Q)
 + \delta \langle \nabla V + \frac{1}{\gamma} (  {\rm I_d} - T_Q^P ), v \rangle_Q 
 + o (  \delta ),
\]
and then for \eqref{eq:pf-PR-minimum} to hold, we must have 
\[
\langle \nabla V + \frac{1}{\gamma} (  {\rm I_d} - T_Q^P ), v \rangle_Q  \ge 0.
\]
We have derived that 
\[
\langle \nabla V + \frac{1}{\gamma} (  {\rm I_d} - T_Q^P ), v \rangle_Q  \ge 0,
\quad \forall  v \in L^2(Q), \, \|v\|_Q =1,
\]
and this means that $ \nabla V + \frac{1}{\gamma} (  {\rm I_d} - T_Q^P ) = 0 $ in $L^2(Q)$, which proves \eqref{eq:1st-order-PR}. 
\end{proof}

\begin{proof}[Proof of Corollary \ref{cor:dual-form}]
    We first verify that  for every $x \in \R^d$,  the minimization 
    \[
    \inf_{ z}
    F(z):=
     V(z) + \frac{1}{2\gamma } \| z - x\|^2
    \]
    has unique minimizer $z^*$ s.t.
    \begin{equation}\label{eq:moreau-1st-order}
   \gamma  \nabla V(z^*) +  (z^* - x) = 0 ,
    \end{equation}
    and then $u(x,\gamma)$ is well-defined. 
    Under Assumption \ref{assump:V},
    we have \eqref{eq:cond-V}.
    Together with that  \rev{$\frac{1}{2\gamma} > \frac{L}{2}$}, we have that $F $ is strongly convex on $\R^d$, and then the above holds. 

It remains to 

(1) Show that  $Q$ is a global minimum of \eqref{eq:problem-W2-proximal-GD}, 

(2) Prove \eqref{eq:dual-LFD-g-gamma}, which can be equivalently written as
  \begin{equation*}
  \min_{Q' \in \calP_2} \E_{x \sim Q'} V(x) + \frac{1}{2\gamma} \W_2^2( P, Q')
  =  \E_{x \sim P} u( x, \gamma)
  =   \E_{x \sim P} \inf_z \left[  V(z) + \frac{1}{2\gamma } \| z - x\|^2 \right].
  \end{equation*}

We proceed by observing that 
  \begin{equation}\label{eq:pf-dual-form-2}
  \min_{Q' \in \calP_2} \E_{x \sim Q'} V(x) + \frac{1}{2\gamma} \W_2^2( P, Q')
  \ge \E_{x \sim P} \inf_z \left[  V(z) + \frac{1}{2\gamma } \| z - x\|^2 \right],
  \end{equation}
  and to verify that at the local minimum $Q$, 
  \begin{equation}\label{eq:pf-dual-form-3}
   \E_{x \sim Q} V(x) + \frac{1}{2\gamma} \W_2^2( P, Q)
   = \E_{x \sim P} \inf_z \left[  V(z) + \frac{1}{2\gamma } \| z - x\|^2 \right].
   \end{equation}
     
If both \eqref{eq:pf-dual-form-2} and \eqref{eq:pf-dual-form-3} hold, then we have (1) and (2)
and the corollary is proved.
   Note that $0 \le \W_2(P, Q) <\infty$ (by Lemma \ref{lemma:finite-W2}), 
   and  $\E_{x \sim Q} V(x) = \varphi( Q) $ is finite (by Lemma \ref{lemma:diff-phi}), 
then   the l.h.s. of \eqref{eq:pf-dual-form-3} is finite.  
Thus  \eqref{eq:pf-dual-form-3} would imply that the global minimum of \eqref{eq:problem-W2-proximal-GD},
 which also takes the form as the r.h.s. of  \eqref{eq:pf-dual-form-3}, is a finite number.

\vspace{5pt}
\underline{Proof of \eqref{eq:pf-dual-form-2}}:
This is a similar argument as in the proof of Proposition \ref{prop:dual-discrete}.
For any $Q' \in \calP_2$, 
by using the same derivation as in \eqref{eq:pf-gt-explicit-form} where we replace $\hat P$ to be $P$  (which does not affect the argument)
and $\lambda$ to be $\frac{1}{2\gamma}$, noting that $\inf_z F(z)$ is always attained for any $x$, 
we have 
\[
\E_{x \sim Q'} V(x) + \frac{1}{2\gamma }  \W_2^2(  P, Q')
\ge 
\E_{x \sim P}  \inf_{z}  \left[ V(z) + \frac{1}{2\gamma } \| x - z\|^2 \right].
\]
Since this inequality holds for any $Q' \in \calP_2$,
we have \eqref{eq:pf-dual-form-2}.

\vspace{5pt}
\underline{Proof of \eqref{eq:pf-dual-form-3}}:
By Theorem \ref{prop:PR} which holds under the assumption, we have \eqref{eq:1st-order-PR}, which means that  for $z \in \R^d$ except for a $Q$-measure zero set, 
\[
\gamma \nabla V(z) + ( z - T_Q^P(z) ) = 0.
\]
Consider $u(x,\gamma) $  at $x = T_Q^P(z)$, the unique minimizer of $\inf_z F(z)$  is characterized by the first order condition \eqref{eq:moreau-1st-order}, which $z$ satisfies. This means that  $z$ is the unique minimizer, namely
\begin{equation}\label{eq:pf-Fz-equal-u}
F(z)
= \inf_{z'} \left[  V(z') + \frac{1}{2\gamma } \| z' -  T_Q^P(z) \|^2 \right] 
=  u(  T_Q^P(z) , \gamma).
\end{equation}
Since \eqref{eq:pf-Fz-equal-u} holds for $Q$-a.e. $z$, we have
\begin{equation}\label{eq:pf-dual-form-4}
\E_{z \sim Q}F(z) = \E_{z \sim Q} u(  T_Q^P(z) , \gamma).
\end{equation}
By definition, the l.h.s. of \eqref{eq:pf-dual-form-4} equals
\[
 \E_{z \sim Q}  \left[ V(z) + \frac{1}{2\gamma } \| z -  T_Q^P(z) \|^2 \right]
 =  \E_{z \sim Q} V(z) +  \frac{1}{2\gamma } \W_2^2( P, Q),
\]
which is the l.f.s. of \eqref{eq:pf-dual-form-3}.
The r.h.s. of \eqref{eq:pf-dual-form-4}, by that $(T_Q^P)_\# Q = P$, equals  $\E_{x \sim P} u( x, \gamma)$ and that is 
the r.h.s. of \eqref{eq:pf-dual-form-3}.
Then \eqref{eq:pf-dual-form-4} implies the equality  \eqref{eq:pf-dual-form-3}.
\end{proof}

\vspace{10pt}
\noindent 
$\bullet$ Proof in Section \ref{sec:applications}

\begin{proof}[Proof of Proposition \ref{prop:R-dist}]
   By definition of $\Tind$, we have $\bE_{x\sim P} \|\Tind(x)-x\|_2^2 \leq \varepsilon^2$. 
   Because $(\Tind , \, {\rm I_d})_\# P$ is a coupling of $\QstarPoint$ and $P$, we have that $W_2^2(\QstarPoint, P)\leq \varepsilon^2$ and therefore, $\QstarPoint\in \calB_\varepsilon(P)$ for $\calB_\varepsilon(P)$ defined in \eqref{eq:W2_uncertainty_set}. 
    As a result, we have $\cR(\QstarDist, \phi)=\max_{Q\in \calB_\varepsilon(P)} \cR(Q,\phi) \geq \cR(\QstarPoint,\phi).$
\end{proof}

\section{Experimental details}
\subsection{\rev{Comparison with existing DRO methods.}} \label{app:dro_compare}

We first describe the setup details of finding the LFD and training a robust classifier via solving the DRO problem. We also explain how the WDRO method is formulated at the end for completeness.

\paragraph{Setup of finding the LFD} 
We first train $\phi$ as a two-layer CNN image classifier on the raw MNIST digits via minimizing the cross-entropy loss, where the trained classifier reaches 99\% test accuracy. 
As a data-preprocessing step, before finding the LFD, we train an auto-encoder consisting of convolutional layers in the encoder and convolutional transpose layers in the decoder. 
We then encode raw MNIST digits of dimension 784 as latent codes of dimension 16, where these latent codes are subsequently used as training data for WDRO and \flowmodel{} to find LFD in the latent space. 
Finally, to evaluate the original CNN classifier $\phi$ on the LFD samples, we decode these latent LFD samples back to the image space of dimension 784 and apply $\phi$ on these decoded samples.
Regarding the hyper-parameter of \flowmodel{}, we train three flow blocks with $\gamma_k \equiv 5$. The velocity field $f(x(t),t;\theta)$ is parameterized as a three-layer fully-connected neural network with a hidden dimension equal to 256.

\paragraph{Setup of training a robust classifier on MNIST digits.} 
Our setup resembles training a robust classifier on MNIST digits in \citep{sinha2018certifiable}. The goal is to train robust classifiers to defend against unseen adversarial attacks on the test MNIST images. Specifically, the attack method is chosen as PGD \citep{madry2018towards} under $\ell_2$ and $\ell_{\infty}$ norms, where we then evaluate trained robust classifiers on images attacked with increasing amount of attack budget. If a classifier can reach lower classification errors on the same set of attacked images than another, we consider such a classifier to be more robust. 

Regarding inputs to our proposed Algorithm \ref{alg:min_max}, the training data are raw MNIST digits of dimension 784, the regularization parameter $\gamma=5$, total iteration $N=2500$ (binary digits) or $N=10000$ (full digits), and the number of inner loops $N_{\rm inner}=3$. The image classifier is a three-layer CNN, and we design the flow model to be a ResNet block $T(x;\theta)=x+f(x;\theta)$. Specifically, the architecture of $f(x;\theta)$ is a CNN auto-encoder with an encoder having channels $1-128-256-512$, kernel sizes $8,6,5$, strides $2,2,1$, and padding $3,0,0$. The decoder has channels $512-256-128-1$, kernel sizes $5,6,8$, strides $1,2,2$, and padding $0,0,3$. We also find it effective to train $K$ separate ResNet blocks for $K$ digit classes, so we trained 2 ResBlocks for binary MNIST digits and 10 ResBlocks for fill MNIST digits. 
In addition, for a fair comparison, we use the same regularization parameter $\gamma=5$ in WRM and appropriately adjust its inner loop of finding the sample-wise LFD to match the training time as our proposed method on a single A100 GPU.

\paragraph{Setup of training a robust classifier on CIFAR10 images.} The overall setup is identical to training a robust classifier on MNIST digits, as described above. The differences are: 
\begin{itemize}
    \item Rather than using a two-layer CNN image classifier, we use a ResNet-18 \citep{he2016deep} classifier, which is pre-trained on the original ImageNet \citep{russakovsky2015imagenet} dataset. 
    \item Instead of using WRM and our FRM in the original pixel space, we use them in the latent space of a pre-trained variational auto-encoder (VAE) following \citep{esser2021taming}. Specifically, the VAE encodes original (3,32,32) CIFAR10 images to have shapes (3,8,8) in the latent space, where these latent codes are inputs to WRM or FRM. Samples from the LFDs in the latent space are then decoded via the VAE before passing into the ResNet-18 image classifier.
    \item When using FRM, rather than training $K$ ResNet blocks (one per each $X|Y$), we train $K$ blocks that estimate the continuous-time flow \eqref{eq:neural_ode_T} via three fixed-stage RK4 steps.
\end{itemize}
Regarding other inputs to our proposed Algorithm \ref{alg:min_max}, the regularization parameter $\gamma=5$, total iteration $N=20000$ (binary CIFAR10) or $N=15000$ (full CIFAR10), and the number of inner loops $N_{\rm inner}=2$. The architecture of $f(x;\theta)$ in our flow model is a CNN auto-encoder with an encoder having channels $3-128-c_1\cdot 128-c_2\cdot128$, kernel sizes $3,3,3$, strides $1,2,1$, and padding $1,1,1$. The decoder has channels $c_2\cdot128-c_1\cdot128-128-3$, kernel sizes $3,4,3$, strides $1,2,1$, and padding $1,1,1$. We let $(c_1,c_2)=(2,4)$ on binary CIFAR10 and $(c_1,c_2)=(3,6)$ on full CIFAR10.

\paragraph{WDRO with Gaussian smoothed discrete LFD.}

We explain how the WDRO method works for this problem. For class $k=1,2$, let $x^i_k$ be the $i$-th training sample from classes $k$. Suppose there are $n_1$ training samples in class 1 and $n_2$ in class 2. Denote $\{x^i\}_{i=1}^{n}$ as the collection of $n=n_1+n_2$ samples from both classes. Then, given radius $\{\varepsilon_1, \varepsilon_2\}$ and training samples $\{x^i\}$, WDRO solves the following finite-dimensional convex program \citep[Lemma 2]{xie2021robust} to find the LFDs supported on $\{x^i\}$:
\begin{align}
\max_{\makecell[l]{p_1,p_2 \in \mathbb{R}_+^n, \\ \gamma_1,\gamma_2 \in \mathbb{R}_+^{n \times n}}} \quad & \sum_{l=1}^n \min \{p_1^l, p_2^l\} \label{eq:cvx_prob}  \\[-1.25em]
\text{subject to} \quad & \sum_{l=1}^n \sum_{m=1}^n \gamma_{k}^{lm}||x^l-x^m||_2 \leq \varepsilon_k, \quad k = 1, 2 \nonumber\\[-0.5em]
& \sum_{m=1}^n \gamma_{1}^{lm} = \frac{1}{n_1} \text{ and } \sum_{m=1}^n \gamma_{2}^{lm} = 0, \quad 1\leq l \leq n_1 \nonumber\\[-0.5em]
& \sum_{m=1}^n \gamma_{1}^{lm} = 0 \text{ and } \sum_{m=1}^n \gamma_{2}^{lm} = \frac{1}{n-n_1}, \quad n_1+1\leq l \leq n \nonumber\\[-0.5em]
& \sum_{l=1}^n \gamma_{k}^{lm} = p^m_k, \quad 1 \leq m \leq n, \quad k = 1, 2 \nonumber
\end{align}
Note that \eqref{eq:cvx_prob} has $O(n^2)$ decision variables, so the complexity of solving this linear program is on the order of $O(n^4)$. Thus, solving \eqref{eq:cvx_prob} is computationally infeasible for large sample sizes (e.g., when $n\sim 10^3$). 
Finally, to sample from the \textit{discrete} LFDs obtained from \eqref{eq:cvx_prob}, we follow \citep[Section 3.4]{xie2021robust} to use kernel smoothing with the Gaussian kernel under bandwidth $h$, so that the smoothed LFD for class $k$ becomes a Gaussian mixture with $n$ components, where the $i$-th component $\mathcal{N}(x^i, h^2I)$ is chosen with probability $p_k^i$.

\end{document}